\documentclass{article}

\PassOptionsToPackage{numbers, compress}{natbib}

 \usepackage[preprint]{neurips_2026}

\usepackage[utf8]{inputenc} 
\usepackage[T1]{fontenc}    

\usepackage{url}           
\usepackage{booktabs}       
\usepackage{amsfonts}       
\usepackage{nicefrac}       
\usepackage{microtype}      
\usepackage[dvipsnames]{xcolor}         
\usepackage{amsmath,amssymb,amsthm,mathtools}
\usepackage{enumitem}

\theoremstyle{definition}
\newtheorem{definition}{Definition}
\newtheorem{remark}{Remark}
\newtheorem{assumption}{Assumption}

\usepackage{color}  
\usepackage{xcolor}         
\usepackage{thmtools}   
\usepackage{wrapfig}
\usepackage{float}
\usepackage{placeins}
\definecolor{hcolor}{cmyk}{0, 0.85, 0.85, 0.50}

\usepackage{hyperref}
\hypersetup{
    colorlinks=true,   
    citecolor=blue,    
    linkcolor=blue,   
    urlcolor=blue   
}

\usepackage{caption}
\usepackage{subcaption}
\usepackage{cleveref}
\definecolor{tocdarkblue}{RGB}{35,70,160}
\newcommand{\apptoclink}[2]{
  \begingroup
  \hypersetup{allbordercolors=white}
  \hyperref[#1]{\textcolor{tocdarkblue}{#2}}
  \endgroup
}
\title{On the Burden of Achieving Fairness in \\ Conformal Prediction}

\author{
  Ziang Gao\thanks{Equal contribution. Alphabetically ordered by last name.} \ \thanks{Corresponding author. Email: \texttt{ziang.gao@mail.mcgill.ca}.} \\
  McGill University\\
  Montreal, Canada\\
  \small\texttt{ziang.gao@mail.mcgill.ca}
  \And  
  Pengqi Liu\footnotemark[1] \\
  McGill University\\
  Montreal, Canada\\
  \small\texttt{pengqi.liu@mail.mcgill.ca}
  \And 
    Archer Yi Yang \\
  McGill University\\
  Montreal, Canada\\
  \small\texttt{archer.yang@mcgill.ca}
  \AND 
    Mouloud Belbahri \\
  TD Insurance\\
  Montreal, Canada\\
  \small\texttt{mouloud.belbahri@td.com}
  \And 
    Jesse C. Cresswell \\
  Layer 6 AI\\
  Toronto, Canada\\
  \small\texttt{jesse@layer6.ai}
  \And 
    Masoud Asgharian \\
  McGill University\\
  Montreal, Canada\\
  \small\texttt{masoud.asgharian2@mcgill.ca}
}

\begin{document}

\maketitle

\begin{abstract}
\looseness=-1
Conformal prediction is often calibrated with a single pooled threshold, but this can hide cross-group heterogeneity in score distributions and distort group-wise coverage. We study this phenomenon through the population score distributions underlying split conformal calibration. First, we derive a conservation law and lower bound showing that pooled calibration incurs irreducible group-wise coverage distortion at a scale set by cross-group quantile heterogeneity. Second, we demonstrate that the two leading fairness definitions for conformal prediction, Equalized Coverage and Equalized Set Size, are fundamentally in tension. Third, we quantify the cost of moving between policies which treat groups separately or pool them. Experiments on synthetic and real data confirm the same bidirectional trade-off after finite-sample calibration. Our results show that, for the policy families studied here, calibration choice does not remove cross-group heterogeneity; it determines whether the resulting distortion appears in the coverage or size dimension, providing a principled lens for analyzing fairness-oriented calibration choices in practice.

\end{abstract}

\section{Introduction}
\label{sec1:Intro}
Conformal prediction (CP) \citep{vovk2005algorithmic, shafer2008tutorial} provides a model-agnostic solution  for quantifying uncertainty over machine learning predictions. Let $Y$ be the outcome variable and $X\in\mathbb{R}^p$ be a vector of features. Given a calibration dataset $\{(X_i,Y_i)\}_{i=1}^n$, CP generates \emph{prediction sets} \(\widehat{C}(X_{n+1})\) for test label $Y_{n+1}$, with a \emph{coverage} guarantee that the true label is in the set with user-specified probability $1-\alpha$ for $\alpha\in(0,1)$, i.e., \(\mathbb{P}(Y_{n+1} \in \widehat{C}(X_{n+1})) \geq 1 - \alpha\). Smaller sets indicate less uncertainty about the prediction for $Y_{n+1}$, so average set size is commonly used as a metric given a fixed coverage level $1-\alpha$. CP is extremely versatile in applications since the coverage guarantee is valid in finite samples and is distribution-free, assuming only that test data is exchangeable with the calibration dataset.

CP is often implemented with a single threshold calibrated on the pooled calibration set. This delivers marginal coverage over the entire distribution, but still allows some groups within the data to be under-covered if others are over-covered to compensate. Exact distribution-free conditional coverage is impossible in general \citep{foygel2021limits}. We therefore study group-conditional CP as an intermediate target. 
Group-wise calibration resolves the potential disparity in coverage relative to pooled calibration \cite{vovk2003mondrian}, but the same heterogeneity reappears as disparity in expected set size. Thus, a calibration policy does not remove heterogeneity; it determines whether the disparity appears in coverage or in set size.

In this work, we present a theoretical and empirical study of the fundamental tension between coverage disparity and set size disparity when applying CP over data containing heterogeneous groups. 
Our theoretical results give practitioners a principled way to understand what is gained and what is sacrificed when one fairness-oriented calibration objective is chosen over another. We study this question through the population score distributions underlying split conformal prediction. 
This viewpoint separates the structural effect of group heterogeneity from finite-sample calibration noise.

Our contributions are as follows:
\vspace{-2pt}
\begin{itemize}[leftmargin=*, nosep]
    \item We characterize pooled conformal calibration through a conservation law and lower bound for group-wise coverage, showing that pooled calibration can hide nontrivial group-level distortion.
    \item We establish a bidirectional impossibility result showing that \emph{exact group-wise coverage and equalized expected set size cannot, in general, be achieved simultaneously.} This identifies a structural limitation of fairness-oriented conformal calibration policies.
    \item We quantify the costs of switching from pooled calibration to group-wise calibration, and from the coverage-calibrated setting to equalized-size calibration.
\end{itemize}

The rest of the paper is organized as follows. Section~\ref{sec:related-work} reviews related work. Sections~\ref{sec3:UR} and ~\ref{sec4:trade-offs} develop our theoretical results. Section~\ref{sec4:experiments} translates the theory into empirical results on synthetic and real datasets. Section~\ref{conclusion} concludes with limitations and future directions. Technical discussions, supplemental results, proofs, and additional experimental details are deferred to the appendices.

\vspace{-5pt}
\section{Related Work}
\label{sec:related-work}
\vspace{-5pt}

A central line of work studies fairness notions for CP. \textbf{Equalized Coverage}~\citep{romano2020malice} identifies unequal group-wise empirical coverage under pooled calibration as a fairness concern, and later work extends this perspective to adaptively selected groups and more general frameworks~\citep{zhou2024conformal, vadlamani2025a}. Other works extend additional fairness notions to conformal prediction, including demographic parity~\citep{liu2022conformalized}, equal opportunity~\citep{wang2023equal}, and counterfactual fairness criteria~\citep{guldogan2025counterfactual}. A related literature studies fairness in downstream decision-making, where conformal sets are provided as a decision aid: conformal sets can improve human decisions~\citep{cresswell2024cphuman}, but human-subject experiments show that enforcing Equalized Coverage can worsen downstream fairness whereas \textbf{Equalized Set Size} can improve it~\citep{cresswell2025conformal}; \citet{liu2026beyond} extend this perspective with an LLM-in-the-loop evaluator. \citet{tasar2025} studies a trade-off between coverage parity and deferral parity in a binary human-in-the-loop setting. Our work is complementary to this literature: we formalize a structural incompatibility between Equalized Coverage and Equalized Set Size, and provide practitioners with a clear lens for weighing the costs of these disparities. The closest conceptual precedent to our results is the classical algorithmic-fairness literature on scores and classifiers: \citet{hardt2016equality} introduce equalized odds and equal opportunity; \citet{chouldechova2017} and \citet{kleinberg2017inherent} establish impossibility results for competing fairness criteria under unequal base rates; \citet{reich2021possibility} identify settings where partial reconciliation is possible. Our work identifies analogous impossibility results in CP, where the structural driver comes from cross-group heterogeneity in conformal quantiles. More broadly, our work is also related to limits of conditional coverage~\citep{foygel2021limits, gibbs2025conformal} and efficiency-oriented work on volume optimality for structured prediction sets~\citep{gao2025volume} and learned size-coverage trade-offs~\citep{bach2025convex}, but differs in isolating how cross-group heterogeneity constrains the simultaneous attainment of two group-level calibration objectives: exact group-wise coverage and equalized expected set size.

\vspace{-6pt}
\section{Pooled Calibration and Groupwise Coverage Distortion}

\label{sec3:UR}
\vspace{-5pt}

\looseness=-1
This section is organized around two questions. What does a single pooled threshold guarantee? 
How large is the resulting group-wise coverage distortion when group-specific quantiles differ? We first recall the finite-sample CP constructions to fix notation, then
introduce the population-level objects that help answer these questions.
\vspace{-4pt}
\subsection{Conformal Prediction}
\vspace{-4pt}
We work in the standard setting of split conformal prediction~\citep{shafer2008tutorial}.  
Let \(X\in\mathbb{R}^p\) be a covariate vector, \(Y\) be a label, and \(S(X,Y)\) be a nonconformity score. Given a calibration sample \(\mathcal{D}_{\text{cal}}=\{(X_i,Y_i)\}_{i=1}^n\), let $S_i=S(X_i,Y_i)$ for $i=1,\ldots,n$ denote the calibration scores. Pooled split CP computes an empirical quantile at level $1-\alpha$ over the entire calibration sample as 
\begin{equation}\label{eq:quantile}\hat{q}:=\mathrm{Quantile}(1-\alpha;\{S_{i}\}_{i=1}^{n}\cup \{\infty\})\,
\end{equation}
where the (pooled) quantile is equivalently the
\(\lceil (n+1)(1-\alpha)\rceil\)-th order statistic of the augmented multiset. For any threshold \(t\), we define the associated prediction set as the collection of labels whose scores fall below the threshold
\begin{equation}
\label{eq:2}
    \widehat{C}_t(X_{n+1}):= \{y: S(X_{n+1},y)\le t\}.
\end{equation}
The CP set for a new test point is thus given by \(\widehat{C}_{\hat{q}}(X_{n+1})\), which, under exchangeability of the calibration and test examples, has the marginal coverage guarantee $\mathbb{P}(Y_{n+1}\in\widehat{C}_{\hat{q}}(X_{n+1}))\geq 1-\alpha$. 

On the other hand, to construct a prediction set with a group-wise coverage guarantee \citep{vovk2003mondrian}, we define \(G : \mathcal X \to \mathcal G = \{1, \dots, K\}\) to be a prespecified discrete partition of the covariate space, with group label \(G(X)=g\), and choose threshold $t$ in \Cref{eq:2} to be the $1-\alpha$ quantile (group-wise) of  calibration scores within group $g$, i.e. $\hat{q}_{g}:=\mathrm{Quantile}(1-\alpha;\{S_{i}\}_{i:G(X_i)=g}\cup \{\infty\})$. The prediction set \(\widehat C_{\hat q_g}(X_{n+1})\), under exchangeability within each prespecified group, satisfies $\mathbb P\{Y_{n+1}\in \widehat C_{\hat q_g}(X_{n+1})
\mid G(X_{n+1})=g\}\ge 1-\alpha$ for a test point in group \(g\).

\vspace{-4pt}
\subsection{Conservation Law for the Pooled Threshold}
\vspace{-3pt}

We first investigate the group-wise miscoverage of a marginal prediction set. The standard split CP set $\widehat{C}_{\hat{q}}(X)$ with the pooled empirical quantile $\hat{q}$ only has a marginal coverage guarantee rather than a group-wise one: some groups may be under-covered while others are over-covered.   To study the group-wise miscoverage of $\widehat{C}_{\hat{q}}(X)$, we define the \emph{signed group-wise coverage distortion} for a test point in group $g$ by
\begin{equation}
\varepsilon_g(\hat q)
:=
\mathbb P\{Y\in \widehat C_{\hat q}(X)
\mid G(X)=g\}
-(1-\alpha).
\end{equation}
A positive $\varepsilon_g(\hat q)$ corresponds to over-coverage in group \(g\), while a negative
value corresponds to under-coverage. Under standard quantile-consistency conditions, the empirical pooled quantile 
\(\hat q\) concentrates around the pooled population quantile $q$ of the nonconformity score  $S$:
\begin{equation}\label{eq:threshold_definition}
q:=\inf\{t\in\mathbb{R}:F_S(t)\geq 1-\alpha\} 
\end{equation}
with $F_S(t) = \mathbb{P}(S\leq t)$. For any fixed threshold \(t\), we can see that the group-wise coverage is the conditional CDF
of the nonconformity score $S$: 
\begin{equation}
\label{eq:CDF_Fsg}
\mathbb P\{Y\in \widehat C_t(X)\mid G(X)=g\}
=
\mathbb P\{S\le t\mid G(X)=g\}
=: F_{S|g}(t).
\end{equation}

In the large calibration sample case, $\varepsilon_g(\hat q)$ can be approximated by the \emph{population signed group-wise coverage distortion} $\varepsilon_g(q)$, defined as:
\begin{equation}
\varepsilon_g(q) := F_{S \mid g}(q) - (1-\alpha).
\end{equation}  
We begin by characterizing what the pooled threshold $q$ guarantees at the aggregate level for $\varepsilon_g(q)$. By definition of \(q\) (\Cref{eq:threshold_definition}), Theorem~\ref{thm:conservation} below gives the aggregate identity for group-wise coverage distortion; when \(F_{S}\) is continuous at \(q\), this identity becomes a zero-sum conservation law. (All proofs are presented in Appendix~\ref{app:proofs}.)

\begin{restatable}{theorem}{conservation}
\label{thm:conservation}
\text{(Conservation law for pooled calibration)} 
Let \(p_g:=\mathbb P\{G(X)=g\}\) be the group mass. Group-wise coverage distortion $\varepsilon_g(q)$ under the pooled threshold satisfies the aggregate identity 
\begin{equation}
\label{eq:conservation}
\sum_{g\in\mathcal{G}} p_g\,\varepsilon_g(q)=F_{S}(q)-(1-\alpha)=:\delta(q).
\end{equation}
If \(F_{S}\) is continuous at \(q\), then \(\delta(q)=0\), and hence \(\sum_{g\in\mathcal G} p_g\,\varepsilon_g(q)=0\). 
\end{restatable}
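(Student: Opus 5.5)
The plan is to combine the law of total probability over the partition $\mathcal G$ with the defining property of the quantile $q$ in \Cref{eq:threshold_definition}.

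\textbf{Step 1: the aggregate identity.} Since $G:\mathcal X\to\mathcal G$ is a partition and $\sum_{g} p_g = 1$, the law of total probability gives, for any fixed threshold $t$,
\[
F_S(t)=\mathbb P\{S\le t\}=\sum_{g\in\mathcal G}p_g\,\mathbb P\{S\le t\mid G(X)=g\}=\sum_{g\in\mathcal G}p_g\,F_{S|g}(t),
\]
using \Cref{eq:CDF_Fsg}. Groups with $p_g=0$ contribute nothing, so I will either drop them or adopt any convention for their conditional CDF. Subtracting $(1-\alpha)=\sum_g p_g(1-\alpha)$ from both sides and evaluating at $t=q$ yields $\sum_g p_g\varepsilon_g(q)=F_S(q)-(1-\alpha)=\delta(q)$, which is \Cref{eq:conservation}. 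This step requires no assumption on $F_S$.

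\textbf{Step 2: $\delta(q)=0$ under continuity.} Because $F_S(t)\to 1>1-\alpha$ as $t\to\infty$ and $F_S(t)\to 0<1-\alpha$ as $t\to-\infty$ (since $\alpha\in(0,1)$), the set $\{t: F_S(t)\ge 1-\alpha\}$ is nonempty and bounded below. Hence $q$ is finite.

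\begin{itemize}
\item \emph{Lower bound.} Right-continuity of the CDF gives $F_S(q)\ge 1-\alpha$.
\item \emph{Upper bound.} For every $t<q$, minimality of $q$ gives $F_S(t)<1-\alpha$. Letting $t\uparrow q$ gives $F_S(q^-)\le 1-\alpha$.
\item \emph{Conclusion.} If $F_S$ is continuous at $q$, then $F_S(q)=F_S(q^-)\le 1-\alpha$. Combined with the lower bound, $F_S(q)=1-\alpha$, i.e.\ $\delta(q)=0$, and the zero-sum law follows from Step 1.
\end{itemize}

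\textbf{Main obstacle.} The argument is essentially bookkeeping; the only delicate point is Step 2. There I must be careful to use right-continuity for the lower bound and the left limit for the upper bound. I should also note that in general $0\le\delta(q)\le \mathbb P\{S=q\}$, which clarifies that $\delta(q)$ measures the atom of $S$ at $q$.
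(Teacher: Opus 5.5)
Your proposal is correct and follows essentially the same route as the paper: the mixture decomposition $F_S=\sum_g p_g F_{S|g}$ gives the aggregate identity, and under continuity at $q$ you close the sandwich between $F_S(q)\ge 1-\alpha$ (infimum plus right-continuity) and $F_S(q^-)\le 1-\alpha$ (minimality of $q$). Your added remarks on the finiteness of $q$ and the bound $0\le\delta(q)\le\mathbb P\{S=q\}$ are correct refinements that the paper does not spell out.
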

\begin{remark}
If \(F_{S}\) has a jump at \(q\), randomized tie handling via the probability integral transform restores the zero-sum identity. Let \(U\sim\mathrm{Uniform}(0,1)\) be independent of \(S\) and \(G\), and define \(Z:=F_{S}(S^-)+U\cdot(F_{S}(S)-F_{S}(S^-))\). Then \(Z\sim\mathrm{Uniform}(0,1)\), which means that the event \(\{Z\le 1-\alpha\}\) implements randomized tie handling at \(q\). Next, we define \(\tilde\varepsilon_g:=\mathbb{P}(Z\le 1-\alpha \mid G=g)-(1-\alpha)\) to obtain \(\sum_{g\in\mathcal G}p_g\tilde\varepsilon_g=0.\)
\end{remark}

\vspace{-6pt}

\noindent\textbf{Take-away.} A pooled threshold \(q\) can satisfy marginal coverage while miscovering individual groups, which can be a fairness concern. Under continuity, or under randomized tie handling in the atomic case, Theorem~\ref{thm:conservation} shows that weighted over-coverage in some groups is exactly balanced by weighted under-coverage in others.

Theorem~\ref{thm:conservation} is an additive form. A product-type form of the conservation law is given in Appendix~\ref{app:two-group-specialization}.

\vspace{-4pt}
\subsection{The Pooled Threshold Uncertainty Relation}
\vspace{-3pt}
We now move from balance to magnitude. For each group, let \(q_g:=\inf\{t\in\mathbb{R}:F_{S\mid g}(t)\ge 1-\alpha\}\) denote the population group quantile. We quantify the irreducible root mean square (RMS) group-wise miscoverage induced by using a pooled threshold in the presence of group-quantile heterogeneity.
\begin{assumption}
\label{assumption_2}

There exists a common \(\eta >0\) such that, for each \(g\in\mathcal{G}\), the conditional CDF \(F_{S|g}\) is absolutely continuous on the interval
\(
I_g:=[\min(q,q_g)-\eta,\max(q,q_g)+\eta]
\),
with density \(f_{S|g}\) satisfying \(f_{S|g}(t)>0\) for almost every \(t\in I_g\). Moreover, whenever \(q\neq q_g\), \(\operatorname*{ess\,inf}_{t\in J_g} f_{S|g}(t)>0\), where \(J_g:=[\min(q,q_g),\max(q,q_g)]\). In particular,
\begin{equation}
m_g(q):=\operatorname*{ess\,inf}_{t\in J_g
}
f_{S|g}(t),
\end{equation}

with the convention \(m_g(q)=0\) when \(q=q_g\).
\end{assumption}
\vspace{-5pt}
Let $q_G$ and $\varepsilon_G(q)$ denote the random variable versions of $q_g$ and $\varepsilon_g(q)$, respectively, and
\vspace{-2pt}
\begin{equation}
\label{eq:FTC_segment}
F_{S|g}(q)-F_{S|g}(q_g)=\int_{q_g}^{q} f_{S|g}(t)\,dt.
\end{equation}
\begin{definition}
\label{def:meff}
The \emph{effective stiffness} is defined as
\vspace{-2pt}
\begin{equation}
\label{eq:meff}
m_{\mathrm{eff}}(q):=
\sqrt{\frac{\mathbb{E}[m_G(q)^2\,(q-q_G)^2]}{\mathbb{E}[(q-q_G)^2]}} \quad\text{whenever }\mathbb{E}[(q-q_G)^2]>0.
\end{equation}
\end{definition}
\vspace{-10pt}
Let \vspace{-2pt}
\begin{equation}
\sigma_\Delta^2
:= \operatorname{Var}(q_G)
= \mathbb{E}\!\left[(q_G - \mathbb{E}[q_G])^2\right]
= \sum_{g \in \mathcal G} p_g (q_g - \sum_{g' \in \mathcal G} p_{g'} q_{g'})^2.
\end{equation}
The quantity \(\sigma_\Delta\) measures the cross-group dispersion of the target quantiles \(\{q_g\}\). When \(\sigma_\Delta=0\), pooled and group-conditional calibration coincide at the population level. As a property of the population and choice of nonconformity score,  $\sigma_\Delta$, which we refer to as {\it intrinsic heterogeneity}, is a central object in our study of the differences between pooled and group-wise conformal calibration.

\begin{restatable}{theorem}{heisenberg}
\label{thm:heisenberg_L2}
\text{(Pooled-threshold uncertainty relation)}
Suppose Assumption~\ref{assumption_2} 
holds. Then 

\vspace{-10pt}
\begin{equation}
\label{eq:heisenberg_L2}
\mathrm{Var}(\varepsilon_G(q)) \ge m_{\mathrm{eff}}(q)^2 \mathrm{Var}(q_G).
\end{equation}
\end{restatable}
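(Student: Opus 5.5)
The plan is to reduce the variance statement to a second-moment bound. This is possible because the conservation law (Theorem~\ref{thm:conservation}) makes $\varepsilon_G(q)$ mean-zero. Then $\mathrm{Var}(\varepsilon_G(q))=\mathbb E[\varepsilon_G(q)^2]$, and it suffices to show
\[
\mathbb E[\varepsilon_G(q)^2]\;\ge\;\mathbb E[m_G(q)^2(q-q_G)^2]\;=\;m_{\mathrm{eff}}(q)^2\,\mathbb E[(q-q_G)^2]\;\ge\; m_{\mathrm{eff}}(q)^2\,\mathrm{Var}(q_G).
\]
The final inequality is the bias--variance identity $\mathbb E[(q-q_G)^2]=\mathrm{Var}(q_G)+(q-\mathbb E[q_G])^2$. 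The middle equality is just Definition~\ref{def:meff}. In the degenerate case $\mathbb E[(q-q_G)^2]=0$ we have $q_g=q$ for every $g$ with $p_g>0$, so $\mathrm{Var}(q_G)=0$. There the claim is trivial with any convention for $m_{\mathrm{eff}}$, and I will dispose of this case first.

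\textbf{Step 1 (zero mean).} Under Assumption~\ref{assumption_2}, each $F_{S|g}$ is absolutely continuous on $I_g$, and $I_g$ contains a neighbourhood $[q-\eta,q+\eta]$ of $q$. Hence $F_S=\sum_g p_g F_{S|g}$ is continuous at $q$. Theorem~\ref{thm:conservation} then gives $\delta(q)=0$, that is, $\mathbb E[\varepsilon_G(q)]=\sum_g p_g\varepsilon_g(q)=0$.

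\textbf{Step 2 (per-group lower bound).} I will show that $\varepsilon_g(q)=F_{S|g}(q)-F_{S|g}(q_g)$. This requires $F_{S|g}(q_g)=1-\alpha$ exactly. That identity follows because $F_{S|g}$ is continuous on a neighbourhood $[q_g-\eta,q_g+\eta]\subset I_g$. Right-continuity gives $F_{S|g}(q_g)\ge 1-\alpha$. By definition of the infimum, $F_{S|g}(t)<1-\alpha$ for $t<q_g$, so continuity from the left gives $\le$. Then \eqref{eq:FTC_segment} together with $f_{S|g}\ge m_g(q)$ a.e.\ on $J_g$ yields
\[
|\varepsilon_g(q)|=\Bigl|\int_{q_g}^{q} f_{S|g}(t)\,dt\Bigr|\ge m_g(q)\,|q-q_g|.
\]
This also holds trivially when $q=q_g$, by the convention $m_g=0$. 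Squaring and averaging over $G$ gives the first inequality of the chain.

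I expect the main obstacle to be the bookkeeping in Step 2: making sure that $F_{S|g}(q_g)=1-\alpha$ holds exactly rather than as an inequality, and that the essential infimum bound applies on the whole segment $J_g$ irrespective of the ordering of $q$ and $q_g$. Both points are exactly what the neighbourhoods $I_g\supset J_g$ in Assumption~\ref{assumption_2} are designed to supply. After that, combining Steps 1 and 2 with the bias--variance identity completes the proof.
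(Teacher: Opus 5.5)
Your proposal is correct and follows essentially the same route as the paper's proof. Both arguments dispose of the degenerate case $\mathbb{E}[(q-q_G)^2]=0$, use Assumption~\ref{assumption_2} and the fundamental theorem of calculus to get $|\varepsilon_g(q)|\ge m_g(q)|q-q_g|$, and average via Definition~\ref{def:meff} and the bias--variance identity. Both then use continuity of $F_S$ at $q$ together with Theorem~\ref{thm:conservation} to equate $\mathrm{Var}(\varepsilon_G(q))$ with $\mathbb{E}[\varepsilon_G(q)^2]$. Your right/left-continuity argument for $F_{S|g}(q_g)=1-\alpha$ spells out a step the paper only asserts, which is a useful addition rather than a departure.
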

\vspace{-3pt}
\noindent
\textbf{Take-away.} 
Under pooled calibration, the cross-group variation in the target quantiles induces a non-zero cross-group disparity in group-wise coverage distortions at the scale \(m_{\mathrm{eff}}(q)\sigma_\Delta\).\vspace{-2pt}

Thus, the fact that a pooled threshold causes group-wise coverage distortion is not an artifact of finite-sample calibration noise, but is already present at the population level. 

\vspace{-6pt}
\section{Calibration Policies and Coverage--Size Trade-offs}

\label{sec4:trade-offs}
\vspace{-5pt}
\subsection{Equalized Coverage versus Equalized Expected Set Size}
\label{sec:impossibility_trade_off}
\vspace{-5pt}
The results above quantify the irreducible coverage error incurred by a single pooled threshold $q$. We now ask what happens when one changes the calibration policy to mitigate pooled threshold distortion. We formalize two directions of the resulting trade-off between two group-level fairness criteria: \emph{Equalized Coverage}~\cite{romano2020malice}, meaning $\mathbb P(S \le t_g \mid G=g)=1-\alpha$ for all $g$, and \emph{Equalized Expected Set Size}~\cite{cresswell2024cphuman}, meaning the expected size of the CP set under a group-specific threshold $t_g$ is equal to a constant $\lambda$, i.e. $\mathbb E[|\widehat C_{t_g}(X)| \mid G=g]=\lambda$ for all $g$.

Recall that, for each group $g\in\mathcal G$, the threshold
\(
q_g := F_{S|g}^{-1}(1-\alpha)
\)
achieves exact group-wise coverage level $1-\alpha$ whenever $F_{S|g}$ is continuous (\Cref{eq:CDF_Fsg}). 
We first present one sufficient condition, going from equalized coverage to expected-size disparity. To do so, fix a reference group $r\in\mathcal G$ and define
\(
    \mathcal H_r := \{g\in\mathcal G : q_g \ge q_r\},
\)
the set of groups whose coverage-calibrated thresholds are at least as large as that of the chosen reference group. 
Let 
\(
\ell_g(t) := \mathbb E[|\widehat C_t(X)| \mid G=g]
\)
be the group-wise expected size of the CP set at threshold $t$. For a reference group \(r\), define the restricted mean squared size gap 
\begin{equation}D_r^2
:=
\mathbb E\!\left[
(\ell_G(q_G)-\ell_r(q_r))^2\mathbf 1\{G\in H_r\setminus\{r\}\}
\right].
\end{equation}
\begin{assumption}
    \label{insert_assump_1}
There exists a reference group $r \in \mathcal{G}$ with \(\mathcal{H}_r \setminus \{r\} \neq \varnothing\), and positive constants $\{c_g \}_{g \in \mathcal{H}_r \setminus \{r\}}$, such that 
\(
        \ell_{g}(q_r) - \ell_{r}(q_r) \ge c_g
\)
for every $g \in \mathcal{H}_r \setminus \{r\}$.
\end{assumption}

\vspace{-2pt}

Assumption~\ref{insert_assump_1} requires that groups that require larger coverage-calibrated thresholds already have larger expected set sizes at a common reference threshold. This gives a sufficient route from equalized coverage to nonzero expected set size disparity across groups.

\Cref{thm:3} below illustrates that equalizing coverage transmits heterogeneity into the size dimension. Under Assumption~\ref{insert_assump_1}, if groups requiring larger coverage-calibrated thresholds already have larger expected set sizes at the reference threshold, exact group-wise coverage preserves set size~disparity.

\begin{restatable}{theorem}{SizeDisparity}

\label{thm:3}
\text{(Equalized coverage induces cross-group expected set size disparity)}
Assume the score CDF $F_{S|g}$ is continuous and the map $t \mapsto \ell_g(t)$ is non-decreasing for all $g \in \mathcal{G}$. Suppose Assumption \ref{insert_assump_1} holds for some reference group $r \in \mathcal{G}$. Then, 
\begin{enumerate}[topsep=1pt, itemsep=1pt, parsep=0pt,leftmargin=*]
\item 
The group-wise thresholds $\{q_g \}_{g \in \mathcal{G}}$, which achieve an exact group-wise coverage level $1-\alpha$, necessarily induce a nonzero cross-group disparity in expected set size. More precisely, for every $g \in \mathcal{H}_r \setminus \{r\}$, \(\ell_g(q_g)-\ell_r(q_r)\ge c_g>0\).

Consequently, 
\begin{equation}
    \max_{g,g' \in \mathcal{G}} |\ell_g(q_g) - \ell_{g'}(q_{g'})| \ge \max_{g \in \mathcal{H}_r \setminus \{r\}} c_g > 0.
\end{equation}
Therefore, exact group-wise coverage cannot simultaneously satisfy equalized expected set size across groups.
\item
The restricted mean squared cross-group size disparity relative to the reference group $r$ satisfies \begin{equation}
\begin{aligned}
D_r^2
&= \sum_{g \in \mathcal{H}_r \setminus \{r\}} p_g(\ell_g(q_g)-\ell_r(q_r))^2 
\ge \sum_{g \in \mathcal{H}_r \setminus \{r\}} p_g c_g^2 > 0.
\end{aligned}
\end{equation}
\end{enumerate}
\end{restatable}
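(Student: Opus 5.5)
The argument is a direct monotonicity chain, carried out group by group and then aggregated. Fix the reference group $r$ from Assumption~\ref{insert_assump_1} and any $g\in\mathcal H_r\setminus\{r\}$. Continuity of $F_{S|g}$ gives $F_{S|g}(q_g)=1-\alpha$ (the infimum in the quantile definition is attained, and continuity rules out overshoot). Hence $\{q_g\}_{g\in\mathcal G}$ achieves exact group-wise coverage by \Cref{eq:CDF_Fsg}; this justifies the phrase ``which achieve an exact group-wise coverage level'' in part 1.

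\textbf{Part 1.} Since $g\in\mathcal H_r$, we have $q_g\ge q_r$. Because $t\mapsto\ell_g(t)$ is non-decreasing, $\ell_g(q_g)\ge \ell_g(q_r)$. Subtracting $\ell_r(q_r)$ and applying Assumption~\ref{insert_assump_1} gives
\[
\ell_g(q_g)-\ell_r(q_r)\ \ge\ \ell_g(q_r)-\ell_r(q_r)\ \ge\ c_g>0 .
\]
For the consequence, bound the maximum over pairs $(g,g')$ below by the pairs $(g,r)$ with $g\in\mathcal H_r\setminus\{r\}$. Since $\ell_g(q_g)-\ell_r(q_r)>0$, the absolute value equals the difference itself, which is at least $c_g$. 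Taking the maximum over $g$ yields $\max_{g\in\mathcal H_r\setminus\{r\}}c_g$; this maximum exists because $\mathcal G$ is finite and $\mathcal H_r\setminus\{r\}$ is nonempty. Equalized expected set size would require every pairwise gap to be zero, which contradicts this strictly positive lower bound.

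\textbf{Part 2.} Write the expectation defining $D_r^2$ as a finite sum over the discrete group variable, $\mathbb E[h(G)]=\sum_g p_g h(g)$. The indicator restricts the sum to $\mathcal H_r\setminus\{r\}$, which gives the stated equality. Part 1 gives $(\ell_g(q_g)-\ell_r(q_r))^2\ge c_g^2$, because both sides are nonnegative and squaring preserves the order; summing against the weights $p_g$ yields the inequality. For strict positivity, I need some $g\in\mathcal H_r\setminus\{r\}$ with $p_g>0$.

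\textbf{Main obstacle.} The proof is essentially bookkeeping; the only delicate point is the final strict inequality. If a group had $p_g=0$, its conditional quantities would be undefined. I will therefore read the setting as having every $g\in\mathcal G$ carry positive mass, which is implicit in defining $F_{S|g}$ and $q_g$. Then $\sum_{g\in\mathcal H_r\setminus\{r\}} p_g c_g^2>0$ follows from nonemptiness of $\mathcal H_r\setminus\{r\}$ and $c_g>0$. I will state this positivity convention explicitly at the start of the proof.
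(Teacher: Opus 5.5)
Your proposal is correct and follows the paper's proof essentially step for step: the monotonicity chain $\ell_g(q_g)\ge \ell_g(q_r)\ge \ell_r(q_r)+c_g$, then restricting the pairwise maximum to the pairs $(g,r)$, then squaring and summing with weights $p_g$. Your explicit checks that continuity gives $F_{S|g}(q_g)=1-\alpha$, and that strict positivity of $\sum_{g\in\mathcal{H}_r\setminus\{r\}} p_g c_g^2$ requires $p_g>0$ for some such $g$, are small points the paper leaves implicit.
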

\vspace{-8pt}
\noindent
\textbf{Take-away.} Exact group-wise coverage generally entails unequal expected set sizes across groups.

We now turn to the reverse direction and consider an equalized expected set
size policy at target level $\lambda$. We define
\(
\lambda_g := \ell_g(q_g), \ g\in\mathcal G,
\)
the coverage-calibrated expected set size of group \(g\), and we denote \(\lambda_G\) as the random-group version of \(\lambda_g\). We ask how an equalized-size policy alters group-wise coverage. Let \(\tau_g\) denote a group-specific threshold satisfying
\(
\ell_g(\tau_g)=\lambda.
\)
That is, \(\tau_g\) is the threshold required for group \(g\) to attain the equalized-size target. To quantify the coverage effect of moving from \(q_g\) to \(\tau_g\), we impose local regularity only on the segment between \(q_g\) and \(\tau_g\).
\begin{assumption}
\label{assumption_6}
\text{(Local regularity for equalized-size perturbations)}
For each group \(g\), on the segment between \(q_g\) and \(\tau_g\) both \(F_{S|g}\) and \(\ell_g\) are absolutely continuous, with \(\ell_g\)
non-decreasing, derivatives satisfying
\(
f_{S|g}(t) \ge m_g > 0, \ |\ell_g'(t)| \le V_g
\)
for almost every \(t\) on the segment, where \(0<V_g<\infty\).
\end{assumption}

\vspace{-5pt}

\Cref{thm:size_to_cov_disparity} below establishes the reverse direction of the trade-off compared to \Cref{thm:3}. Under an equalized-size policy $\{\tau_g\}_{g\in\mathcal G}$, one may eliminate cross-group disparity in expected set size, but only at the expense of introducing cross-group disparity in coverage whenever the common target level $\lambda$ lies strictly between two distinct coverage-calibrated set sizes. We utilize the local coverage--size conversion factor
\(
\kappa_g := m_g/V_g
\) for each group \(g\).
\begin{restatable}{theorem}{CovDisparity}

\label{thm:size_to_cov_disparity}
\text{(Equalized expected set size induces cross-group coverage disparity)}
Suppose Assumption \ref{assumption_6} holds,
and suppose for some common target level \(\lambda\) the thresholds \(\{\tau_g\}_{g\in\mathcal G}\)
satisfy
\(
\ell_g(\tau_g)=\lambda, \ g\in\mathcal G.
\)
Then the following results hold.

\noindent
\textbf{(i)} For every group \(g\) with \(\lambda_g < \lambda\),
\(
F_{S|g}(\tau_g) - (1-\alpha) \;\ge\; \kappa_g\,(\lambda-\lambda_g).
\)

\noindent
\textbf{(ii)} For every group \(g'\) with \(\lambda_{g'} > \lambda\),
\(
(1-\alpha) - F_{S|g'}(\tau_{g'}) \;\ge\; \kappa_{g'}\,(\lambda_{g'}-\lambda).
\)

Consequently, for any pair \(g,g'\in\mathcal G\) such that
\(
\lambda_g < \lambda < \lambda_{g'},
\)
we have the pairwise lower bound
\begin{equation}
F_{S|g}(\tau_g)-F_{S|g'}(\tau_{g'})
\;\ge\;
\kappa_g(\lambda-\lambda_g)+\kappa_{g'}(\lambda_{g'}-\lambda).
\end{equation}
In particular, if there exist groups \(g,g'\) with
\(
\lambda_g < \lambda_{g'}
\ \text{and}\
\lambda\in(\lambda_g,\lambda_{g'}),
\)
then
\begin{equation}
\max_{a,b\in\mathcal G}|F_{S|a}(\tau_a)-F_{S|b}(\tau_b)|
\;\ge\;
\kappa_g(\lambda-\lambda_g)+\kappa_{g'}(\lambda_{g'}-\lambda)
\;>\;0.
\end{equation}
\end{restatable}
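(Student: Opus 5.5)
The plan is to compare, for a single group, how far the threshold must move to change the expected size, and how much coverage that movement buys. Both quantities are integrals of derivatives over the same segment, so the conversion factor $\kappa_g = m_g/V_g$ should fall out directly. Throughout we use that $F_{S|g}(q_g)=1-\alpha$, since $F_{S|g}$ is continuous near $q_g$ by Assumption~\ref{assumption_6}, and that $\ell_g(q_g)=\lambda_g$ by definition.

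\textbf{Step 1: the direction of the threshold shift.} Fix $g$ with $\lambda_g<\lambda$. Since $\ell_g$ is non-decreasing on the segment and $\ell_g(\tau_g)=\lambda>\lambda_g=\ell_g(q_g)$, we cannot have $\tau_g\le q_g$. Hence $\tau_g>q_g$.

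\textbf{Step 2: upper bound on the size change.} Because $\ell_g$ is absolutely continuous on $[q_g,\tau_g]$ and $|\ell_g'|\le V_g$ a.e., the fundamental theorem of calculus gives
\[
\lambda-\lambda_g=\int_{q_g}^{\tau_g}\ell_g'(t)\,dt\le V_g(\tau_g-q_g).
\]
Therefore $\tau_g-q_g\ge(\lambda-\lambda_g)/V_g$.

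\textbf{Step 3: lower bound on the coverage change.} Absolute continuity of $F_{S|g}$ and $f_{S|g}\ge m_g$ a.e. give
\[
F_{S|g}(\tau_g)-(1-\alpha)=\int_{q_g}^{\tau_g}f_{S|g}(t)\,dt\ge m_g(\tau_g-q_g).
\]
Combining this with Step 2 yields $F_{S|g}(\tau_g)-(1-\alpha)\ge\kappa_g(\lambda-\lambda_g)$, which is part (i).

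\textbf{Part (ii).} This is symmetric. For $g'$ with $\lambda_{g'}>\lambda$, monotonicity of $\ell_{g'}$ forces $\tau_{g'}<q_{g'}$. The same two integral estimates on $[\tau_{g'},q_{g'}]$ give $q_{g'}-\tau_{g'}\ge(\lambda_{g'}-\lambda)/V_{g'}$ and $(1-\alpha)-F_{S|g'}(\tau_{g'})\ge m_{g'}(q_{g'}-\tau_{g'})$. Together these give $(1-\alpha)-F_{S|g'}(\tau_{g'})\ge\kappa_{g'}(\lambda_{g'}-\lambda)$.

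\textbf{Pairwise and max bounds.} Add the inequalities from (i) and (ii); the $(1-\alpha)$ terms cancel, giving the pairwise lower bound. The maximum over pairs $a,b$ is at least the value at the pair $(g,g')$. Strict positivity holds because $\kappa_g,\kappa_{g'}>0$ and $\lambda-\lambda_g>0$, $\lambda_{g'}-\lambda>0$.

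The only delicate point is Step 1, which fixes the sign of $\tau_g-q_g$. Assumption~\ref{assumption_6} only provides monotonicity on the segment between $q_g$ and $\tau_g$. That is enough: if $\tau_g$ lay on the wrong side of $q_g$, monotonicity on that segment would force $\ell_g(\tau_g)\le\lambda_g$ (or $\ge\lambda_{g'}$ in part (ii)), contradicting the equalized-size target. Everything else is a routine use of the fundamental theorem of calculus for absolutely continuous functions.
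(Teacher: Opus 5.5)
Your proposal is correct and follows the paper's proof essentially step for step. Monotonicity of $\ell_g$ fixes which side of $q_g$ the threshold $\tau_g$ lies on, the bound $|\ell_g'|\le V_g$ gives $|\tau_g-q_g|\ge|\lambda-\lambda_g|/V_g$, the density floor $m_g$ turns this into a coverage gap, and the two one-sided bounds are added.

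One minor precision: Assumption~\ref{assumption_6} only gives continuity of $F_{S|g}$ on the one-sided segment. So in (i) what you actually use is $F_{S|g}(q_g)\ge 1-\alpha$, from right-continuity and the definition of $q_g$. In (ii) you use $F_{S|g'}(q_{g'})\le 1-\alpha$, from left-continuity given by absolute continuity on $[\tau_{g'},q_{g'}]$. Each of these is exactly the direction that inequality needs.
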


\vspace{-10pt}

\noindent
\textbf{Take-away.} Whenever the equalized-size level \(\lambda\) lies strictly between two distinct coverage-calibrated set sizes, the equalized set size policy induces nonzero coverage disparity.

\noindent
Together, \Cref{thm:3} and \Cref{thm:size_to_cov_disparity} show that \textbf{the two leading fairness notions in CP, namely Equalized Coverage and Equalized Set Size, cannot in general be achieved simultaneously}. To the best of our knowledge, this is the first formal result in CP establishing this structural incompatibility. This places our result alongside the classical impossibility theorems of \citet{kleinberg2017inherent} and \citet{chouldechova2017}, but in the distinct setting of CP, where the trade-off is driven by cross-group heterogeneity in conformal quantiles and its effect on coverage and set size. We next quantify the scale of policy-conversion distortions.

\vspace{-4pt}
\subsection{Quantitative Bounds for Policy-Conversion Distortions}
\vspace{-3pt}

We now turn from directional trade-off results to quantitative policy-conversion bounds. The goal is to measure the scale of the distortions induced when one moves between pooled calibration, exact group-wise coverage, and equalized-size calibration. We first quantify the cost of moving from pooled calibration to exact group-wise coverage. To do so, we require a local responsiveness condition on the group-wise size curves along the segment between \(q\) and \(q_g\).

\begin{assumption}
\label{assumption_5}
\text{(Local set size responsiveness for group-wise calibration)}
For each group \(g\), the mapping \(t\mapsto \ell_g(t)\) is absolutely continuous on the segment between \(q\) and \(q_g\), and is non-decreasing on the segment.
Moreover, its derivative $\ell_g'(t)$ is defined almost everywhere and satisfies
\(
\ell_g'(t)\ge v_g>0\ \text{for almost every }t\text{ on that segment}.
\)
\end{assumption}
\vspace{-6pt}

Next, we define the \emph{effective set-size responsiveness} as
\begin{equation}
\label{eq:ceff}
v_{\mathrm{eff}}(q):=
\sqrt{\frac{\mathbb{E}[v_G^2\,(q-q_G)^2]}{\mathbb{E}[(q-q_G)^2]}}
\quad\text{whenever }\mathbb{E}[(q-q_G)^2]>0.
\end{equation}
The next corollary converts the intrinsic heterogeneity $\sigma_\Delta$ into a lower bound on set size distortion.

\begin{restatable}{corollary}{SizeDistortion}

\label{cor:length_disparity}
\text{(Exact group-wise coverage induces set size distortion)}
Suppose Assumption~\ref{assumption_5} holds. Let the pooled threshold \(q\) be replaced by the group-wise thresholds \(\{q_g\}\). Then the RMS set size lower bound is
\begin{equation}
\label{eq:length_disparity}
\sqrt{\mathbb{E}[(\ell_G(q_G)-\ell_G(q))^2]}
\ 
\ \ge\ v_{\mathrm{eff}}(q)\,\sigma_{\Delta}.
\end{equation}
\end{restatable}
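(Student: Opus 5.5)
The plan is to mirror the argument behind Theorem~\ref{thm:heisenberg_L2}, with the conditional CDF \(F_{S|g}\) and its density replaced by the size curve \(\ell_g\) and its derivative. It has three steps: a pointwise bound for each group, averaging over \(G\), and a bias--variance decomposition that passes from the second moment about \(q\) to the variance \(\sigma_\Delta^2\).

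\textbf{Step 1 (pointwise bound).} Fix a group \(g\). If \(q_g=q\), both sides of the per-group inequality below vanish, so it holds trivially. Otherwise, Assumption~\ref{assumption_5} says \(\ell_g\) is absolutely continuous on the segment between \(q\) and \(q_g\). The fundamental theorem of calculus for absolutely continuous functions then gives
\[
\ell_g(q_g)-\ell_g(q)=\int_{q}^{q_g}\ell_g'(t)\,dt ,
\]
with the usual orientation convention when \(q_g<q\). Since \(\ell_g'\ge v_g>0\) almost everywhere on the segment, the integral has the sign of \(q_g-q\) and satisfies
\[
|\ell_g(q_g)-\ell_g(q)|\ge v_g\,|q_g-q| .
\]
Squaring gives \((\ell_g(q_g)-\ell_g(q))^2\ge v_g^2(q-q_g)^2\) for every \(g\in\mathcal G\).

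\textbf{Step 2 (averaging).} Multiply the per-group inequality by \(p_g\) and sum over \(g\). This gives
\[
\mathbb E[(\ell_G(q_G)-\ell_G(q))^2]\ge \mathbb E[v_G^2(q-q_G)^2].
\]
Assume first that \(\mathbb E[(q-q_G)^2]>0\), so that \(v_{\mathrm{eff}}(q)\) is defined. By definition \eqref{eq:ceff}, the right-hand side equals \(v_{\mathrm{eff}}(q)^2\,\mathbb E[(q-q_G)^2]\).

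\textbf{Step 3 (centering).} Decompose the second moment of \(q_G\) about the fixed point \(q\) as
\[
\mathbb E[(q-q_G)^2]=\operatorname{Var}(q_G)+(q-\mathbb E[q_G])^2\ge \sigma_\Delta^2 .
\]
Combining this with Step 2 and taking square roots yields \eqref{eq:length_disparity}.

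In the degenerate case \(\mathbb E[(q-q_G)^2]=0\), every group with \(p_g>0\) has \(q_g=q\). Then \(\sigma_\Delta=0\), and the bound holds trivially under any convention for \(v_{\mathrm{eff}}(q)\), for instance \(v_{\mathrm{eff}}(q):=0\).

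The only points needing care are the sign and orientation bookkeeping in Step 1 when \(q_g<q\), and the degenerate segment case. Both are handled as above, so I do not expect a genuine obstacle. The bias term \((q-\mathbb E[q_G])^2\) is simply discarded, which shows the bound is conservative whenever the pooled quantile differs from the mass-weighted mean of the group quantiles.
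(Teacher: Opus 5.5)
Your proposal is correct and follows the paper's proof essentially step for step. The chain is the same: the fundamental theorem of calculus with \(\ell_g'\ge v_g\) gives the per-group bound, averaging together with the definition of \(v_{\mathrm{eff}}(q)\) gives the second-moment bound, and \(\mathbb{E}[(q-q_G)^2]\ge \mathrm{Var}(q_G)=\sigma_\Delta^2\) finishes, with your explicit handling of the degenerate case \(\mathbb{E}[(q-q_G)^2]=0\) being a small tidy-up that the paper's corollary proof leaves implicit.
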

\vspace{-6pt}
\noindent
\textbf{Take-away.} Exact group-wise coverage generally incurs a measurable expected set size distortion.

\noindent
Corollary \ref{cor:length_disparity} turns the directional statement into a scale statement. It shows that moving from pooled calibration to exact group-wise coverage produces a set size change with magnitude controlled by the same underlying cross-group quantile distortion.

We next quantify the reverse conversion cost. Theorem \ref{thm:size_to_cov_disparity} establishes the reverse direction of the trade-off by showing that an equalized-size policy induces coverage disparity. The following corollary converts the same argument into an aggregate RMS lower bound by measuring how far the equalized-size thresholds $\{\tau_g\}$ shift coverage from the exact group-wise benchmark $\{q_g\}$. From Theorem~\ref{thm:size_to_cov_disparity}, we obtain
\(
\mathbb E\!\left[(F_{S|G}(\tau_G)-F_{S|G}(q_G))^2\right]
\ge
\mathbb E\!\left[\kappa_G^2(\lambda_G-\lambda)^2\right],
\
\kappa_G := m_G/V_G.
\) Before we formally state this result, we define the \emph{effective coverage responsiveness}
\begin{equation}
\label{eq:kappa_eff}
\kappa_{\mathrm{eff}}(\lambda):=
\sqrt{\frac{\mathbb{E}[(m_G/V_G)^2(\lambda-\lambda_G)^2]}{\mathbb{E}[(\lambda-\lambda_G)^2]}}
\quad\text{whenever} \quad \mathbb{E}[(\lambda-\lambda_G)^2]>0 ,
\end{equation}
and the set-size heterogeneity \(\sigma_{\lambda} := \mathrm{sd}(\lambda_{G}) = \sqrt{\mathrm{Var}(\lambda_G)}\).
\begin{restatable}{corollary}{CovDistortion}

\label{cor:coverage_disparity}
\looseness=-1\text{(Equalized expected set size induces coverage distortion)}
Suppose Assumption~\ref{assumption_6} holds. Let the group-wise thresholds \(\{\tau_g\}_{g\in G}\) satisfy \(\ell_g(\tau_g)=\lambda\) for all \(g\in G\), so that they enforce equalized expected set size at a common level \(\lambda\).
Then the RMS coverage distortion relative to the exact group-wise coverage satisfies
\begin{equation}
\label{eq:coverage_disparity}
\sqrt{\mathbb{E}[(F_{S|G}(\tau_G)-F_{S|G}(q_G))^2]} \ \ge
\ \kappa_{\mathrm{eff}}(\lambda) \sigma_\lambda .
\end{equation}

\end{restatable}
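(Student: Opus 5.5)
The plan is to prove a pointwise, group-by-group inequality of the form
\[
|F_{S|g}(\tau_g)-F_{S|g}(q_g)|\ \ge\ \kappa_g\,|\lambda-\lambda_g| .
\]
This is essentially the mechanism behind Theorem~\ref{thm:size_to_cov_disparity}, now stated without the sign restrictions. We then square it, average over the random group $G$, and use the bias--variance decomposition of $\mathbb E[(\lambda-\lambda_G)^2]$ to pass from a deviation around the target $\lambda$ to the heterogeneity $\sigma_\lambda$.

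\textbf{Step 1 (pointwise bound).} Fix $g$. If $\tau_g=q_g$, then $\lambda=\ell_g(\tau_g)=\ell_g(q_g)=\lambda_g$, so both sides vanish. Otherwise, let $J$ be the closed segment between $q_g$ and $\tau_g$.

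By Assumption~\ref{assumption_6}, $F_{S|g}$ is absolutely continuous on $J$. The fundamental theorem of calculus for absolutely continuous functions then gives
\[
|F_{S|g}(\tau_g)-F_{S|g}(q_g)|=\int_J f_{S|g}(t)\,dt\ \ge\ m_g|\tau_g-q_g| .
\]
Here the absolute value is harmless because $f_{S|g}\ge m_g>0$ a.e. on $J$, so the integral over $J$ is positive in either orientation.

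In the same way, $\ell_g$ is absolutely continuous on $J$, which gives
\[
|\lambda-\lambda_g|=|\ell_g(\tau_g)-\ell_g(q_g)|\le\int_J|\ell_g'(t)|\,dt\le V_g|\tau_g-q_g| .
\]
Combining the two displays yields $|F_{S|g}(\tau_g)-F_{S|g}(q_g)|\ge (m_g/V_g)|\lambda-\lambda_g|=\kappa_g|\lambda-\lambda_g|$.

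When $F_{S|g}(q_g)=1-\alpha$, this recovers parts (i)--(ii) of Theorem~\ref{thm:size_to_cov_disparity}. Those parts could be cited directly, but the direct argument avoids needing continuity at $q_g$ and also covers the case $\lambda=\lambda_g$.

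\textbf{Step 2 (aggregation).} Squaring and taking the expectation over $G$ (a finite sum weighted by $p_g$) gives
\[
\mathbb E[(F_{S|G}(\tau_G)-F_{S|G}(q_G))^2]\ge \mathbb E[\kappa_G^2(\lambda-\lambda_G)^2].
\]
By definition~\eqref{eq:kappa_eff}, the right-hand side equals $\kappa_{\mathrm{eff}}(\lambda)^2\,\mathbb E[(\lambda-\lambda_G)^2]$ whenever $\mathbb E[(\lambda-\lambda_G)^2]>0$.

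\textbf{Step 3 (from target deviation to heterogeneity).} The bias--variance identity gives
\[
\mathbb E[(\lambda-\lambda_G)^2]=\mathrm{Var}(\lambda_G)+(\lambda-\mathbb E\lambda_G)^2\ge\sigma_\lambda^2 .
\]
Taking square roots yields~\eqref{eq:coverage_disparity}.

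In the degenerate case $\mathbb E[(\lambda-\lambda_G)^2]=0$, $\kappa_{\mathrm{eff}}$ is undefined. There we have $\lambda_g=\lambda$ for every $g$ with $p_g>0$, so $\sigma_\lambda=0$ and the bound holds trivially under any finite convention for $\kappa_{\mathrm{eff}}$.

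The main point needing care is Step 1. The a.e.\ derivative bounds must be converted into increment bounds, which requires the absolute continuity in Assumption~\ref{assumption_6} (mere a.e.\ differentiability would not suffice). The argument must also hold regardless of whether $\tau_g$ lies above or below $q_g$. The remaining steps are routine.
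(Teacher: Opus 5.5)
Your proposal is correct and follows the paper's proof essentially step for step. Both arguments use the same three moves: the pointwise bound $|F_{S|g}(\tau_g)-F_{S|g}(q_g)|\ge m_g|\tau_g-q_g|\ge \kappa_g|\lambda-\lambda_g|$ via the fundamental theorem of calculus for absolutely continuous functions, then squaring and averaging with the definition of $\kappa_{\mathrm{eff}}(\lambda)$, and finally $\mathbb{E}[(\lambda-\lambda_G)^2]\ge\mathrm{Var}(\lambda_G)=\sigma_\lambda^2$. Your explicit treatment of the cases $\tau_g=q_g$ and $\mathbb{E}[(\lambda-\lambda_G)^2]=0$ (where $\kappa_{\mathrm{eff}}$ is undefined) is a minor tidy-up that the paper's proof leaves implicit.
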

\vspace{-7pt}
\textbf{Take-away.} Equalizing expected set size generally incurs a measurable coverage cost.

Corollary \ref{cor:coverage_disparity} provides the corresponding quantitative statement for the reverse conversion compared to \Cref{cor:length_disparity}. The induced coverage distortion under equalized size calibration is governed by the dispersion of the coverage-calibrated set sizes and the local coverage--size sensitivity.

Taken together, Corollaries \ref{cor:length_disparity} and \ref{cor:coverage_disparity} quantify the cost of the two fairness policies studied in Section \ref{sec:impossibility_trade_off}. Corollary \ref{cor:length_disparity} measures how intrinsic heterogeneity reappears as set size distortion when exact group-wise coverage is enforced, while Corollary \ref{cor:coverage_disparity} measures how it reappears as coverage distortion when expected set size is equalized across groups.

\vspace{-4pt}
\section{Experiments}
\vspace{-4pt}
\label{sec4:experiments}

\looseness=-1 The results in Sections~\ref{sec3:UR} and~\ref{sec4:trade-offs} were derived using population analysis. Our experiments in this section are not designed as a benchmark comparison against fairness-aware conformal methods; rather, they are designed to test whether the distortion transfer pattern remains visible under finite-sample calibration, matching how CP is used in practice. We focus on three finite-sample consequences of Sections~\ref{sec3:UR} and~\ref{sec4:trade-offs}: the pooled threshold lower-bound behavior (\Cref{thm:heisenberg_L2}) with the effective lower-bound scale \(m_{\mathrm{eff}}(q)\sigma_\Delta\), the RMS set-size distortion after switching from \(q\) to \(q_g\) (\Cref{cor:length_disparity}), and the RMS coverage distortion under equalized expected set size (\Cref{cor:coverage_disparity}).
Our code is available at \url{https://github.com/GreenPenguin001/group-cp-tradeoffs}.

\vspace{-4pt}
\subsection{Synthetic Simulations}
\vspace{-3pt}
\label{sec:synthetic-main}
\begin{figure*}[t]
  \centering
  \includegraphics[width=0.98\textwidth,trim={8 4 8 6},clip]{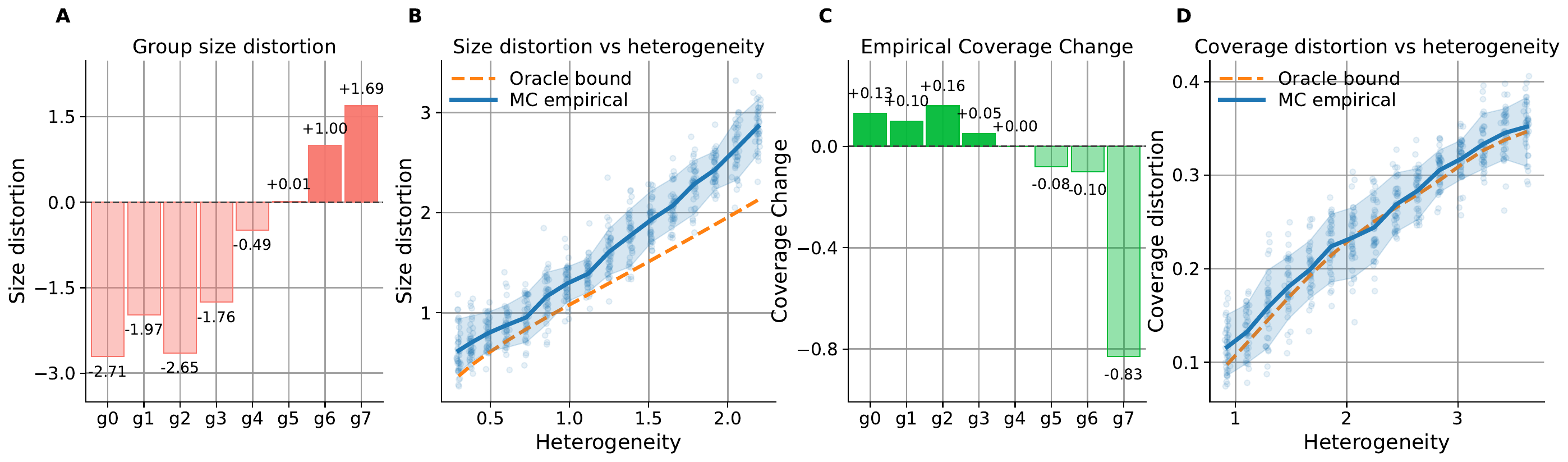}
  
  \vspace{-6pt}
  \caption{Bidirectional policy conversion in the synthetic study. Panels~A--B illustrate the coverage-to-size direction in Theorem~\ref{thm:3} and Corollary~\ref{cor:length_disparity}. Panel~A shows the signed change in expected set size for each of eight equally weighted groups, and Panel~B shows that the aggregate size distortion grows with cross-group score heterogeneity and tracks the oracle lower-bound scale. Panels~C--D correspond to the size-to-coverage direction in Theorem~\ref{thm:size_to_cov_disparity} and Corollary~\ref{cor:coverage_disparity}. Panel~C reports the signed finite-sample change \(\hat F_{S\mid g}(\hat\tau_g)-\hat F_{S\mid g}(\hat q_g)\) after replacing empirical group-wise thresholds by equalized-size thresholds; hence the displayed values are changes relative to the empirical group-wise benchmark. Panel~D shows that the aggregate coverage distortion grows with heterogeneity. 
  }
  \label{fig:policy-conversion}
      \vspace{-13pt}
\end{figure*}

\begin{wrapfigure}[12]{r}{0.48\textwidth}
    \vspace{-20pt}
    \centering
    \includegraphics[width=0.48\textwidth]{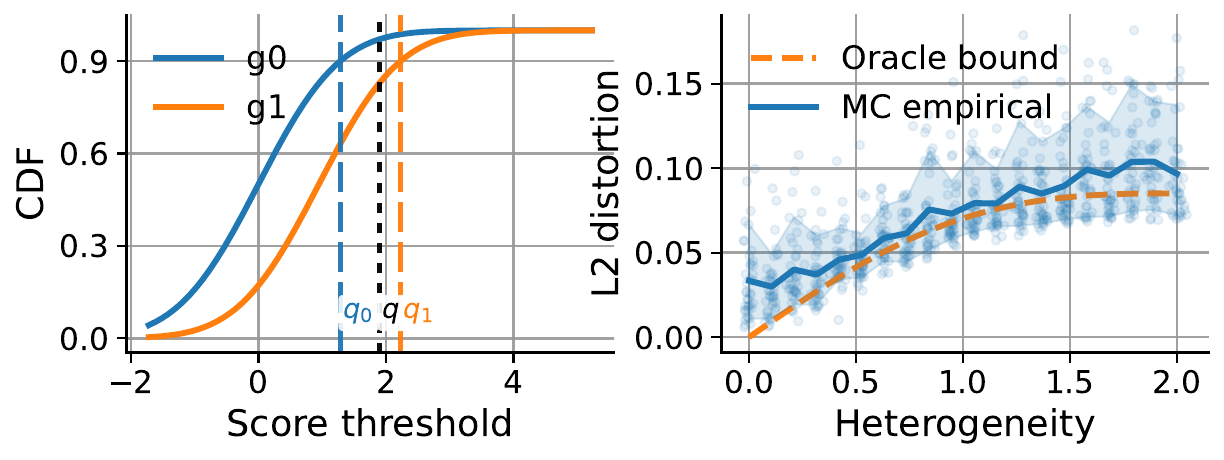}
    \caption{Two-group Gaussian pooled-threshold picture with \( q\) lying between \( q_0\) and \( q_1\); across the heterogeneity sweep, the empirical RMS miscoverage remains above the effective lower-bound scale, consistent with Theorem~\ref{thm:heisenberg_L2}.}
    \label{fig:two-group-app}
    \vspace{-0.5\baselineskip}
\end{wrapfigure}

We generate calibration and test nonconformity scores directly from known group-conditional score distributions, and then apply the same empirical 
quantile rule used throughout the paper. This lets us compare finite-sample distortions with the corresponding population lower-bound scales from Sections~\ref{sec3:UR} and ~\ref{sec4:trade-offs}. Figure~\ref{fig:policy-conversion} studies the bidirectional policy conversion with eight equally weighted groups \(g_0, \ldots, g_7\), and with Gaussian mixture scores for the coverage-to-size direction, and Student-\(t\)/Gaussian mixture scores for size-to-coverage.
For the policy conversion panels, we use monotone proxy size curves 
\(\ell_g(t)=a_g+b_gt\) to map thresholds to expected set sizes.
In Figure~\ref{fig:policy-conversion}, Panels A--B show the coverage-to-size direction in \Cref{cor:length_disparity}, 
with oracle scale \(v_{\mathrm{eff}}(q)\sigma_\Delta\); Panels C--D show the size-to-coverage direction in \Cref{cor:coverage_disparity}, with oracle scale \(\kappa_{\mathrm{eff}}(\lambda)\sigma_\lambda\), where coverage change is measured using the empirical quantile \(\hat{q}_g\). We find that mean distortions in the finite sample case are still bounded by the population analysis results.

\Cref{fig:two-group-app} gives a two-group Gaussian example. To achieve group-wise conditional coverage, one group requires a smaller quantile \(q_0\), whereas the other requires a larger quantile \(q_1\). The two groups have different target thresholds, and the pooled threshold lies between them, so under pooled calibration one group is over-covered and the other is under-covered (\Cref{thm:conservation}). As the separation between
\(q_0\) and \(q_1\) increases (increasing intrinsic heterogeneity $\sigma_\Delta$), the finite-sample mean distortion remains above the oracle effective lower bound (\Cref{eq:heisenberg_L2}). Together, the synthetic studies support the bidirectional trade-off. All empirical curves use calibration thresholds constructed at target level \(\alpha=0.1\), and the reported distortions are evaluated on independent test samples and averaged over \(40\) Monte Carlo seeds with equal group weights. Detailed score families, heterogeneity ranges, additional diagnostics, and empirical--oracle ratio results under imbalance are deferred to Appendix~\ref{app:additional-experiments}.

\vspace{-5pt}
\subsection{\texttt{Bias in Bios} Experiments}
\vspace{-4pt}
\label{sec:biobias-main}
We use \texttt{Bias in Bios}~\citep{dearteaga2019biasinbios} as a real-data illustration of the policy-conversion mechanism from Sections~\ref{sec3:UR} and~\ref{sec4:trade-offs}.
We restrict to the ten most frequent professions, two demographic groups (Male and Female), and use a \texttt{DistilBERT} classifier~\citep{sanh2019distilbert}. Unless otherwise noted, we use the simple nonconformity score
\(
s(x,y) = 1-\hat p_y(x).
\) In Figure~\ref{fig:biobias-main}, Panel~A shows empirical score CDFs with the pooled and distinct group-specific thresholds. Panel~B shows that equalizing coverage by moving from $q$ to $q_g$ enlarges the cross-group size disparity. Panel~C shows the reverse move: equalizing set size requires shifting the thresholds away from $q_g$, reintroducing coverage distortion. Panel~D summarizes the three distortions. At \(\alpha=0.1\), the corresponding RMS quantities, associated with \Cref{eq:heisenberg_L2,eq:length_disparity,eq:coverage_disparity} are \(0.0015\), \(0.0051\), and \(0.0017\), respectively. Thus, \texttt{Bias in Bios} supports the pooled-threshold distortion mechanism in Theorem~\ref{thm:heisenberg_L2} and the two policy-conversion effects quantified in Corollaries~\ref{cor:length_disparity} and~\ref{cor:coverage_disparity}. Detailed per-group quantities are deferred to Table~\ref{tab:biobias-main}. Additional robustness experiments, including alternative score comparisons and finite-calibration diagnostics, are reported in Appendix~\ref{app:biobias}.

\begin{figure}
\centering
\includegraphics[width=0.98\linewidth,trim={8 4 8 35},clip]{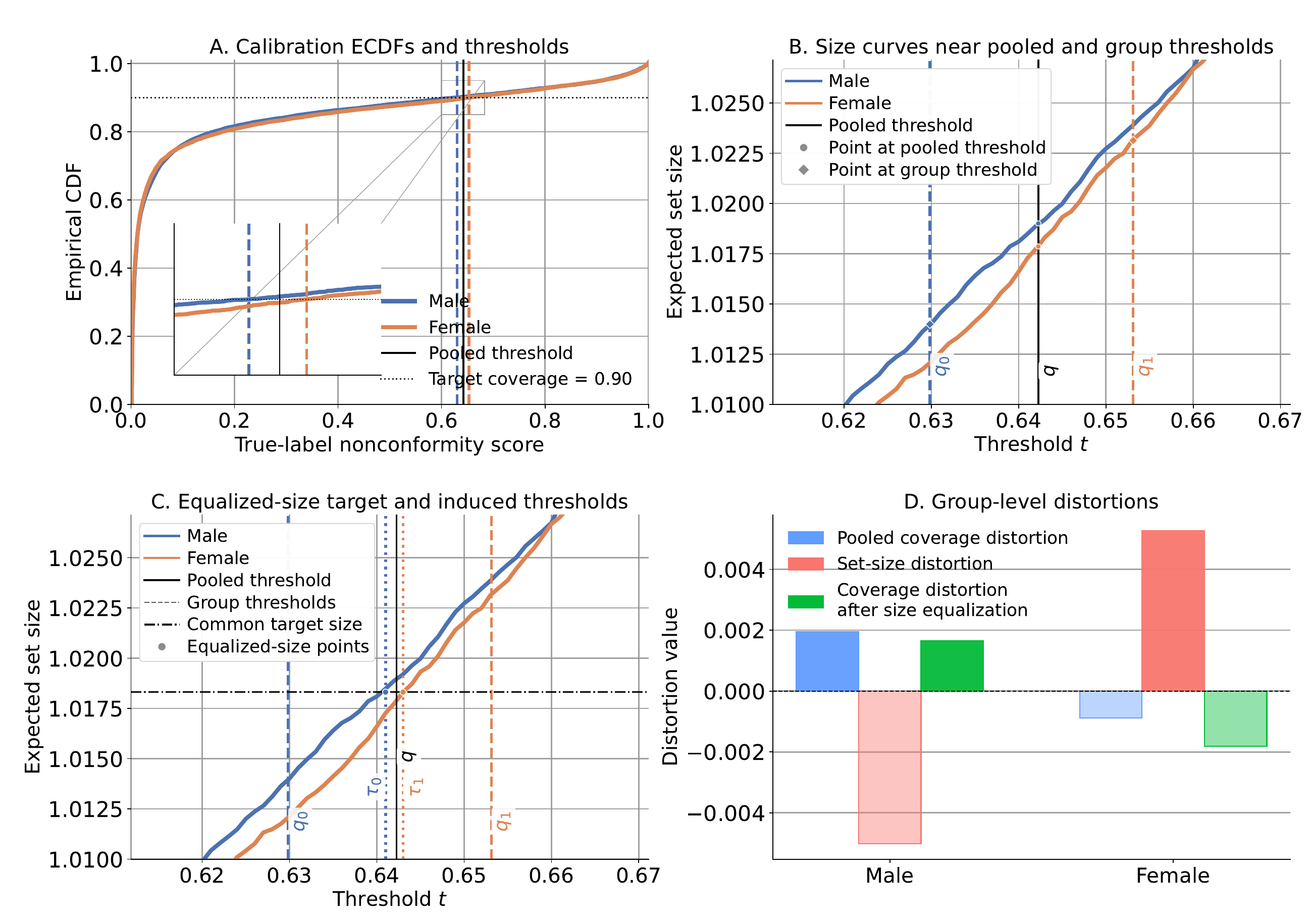}

  \vspace{-4pt}
\caption{\texttt{Bias in Bios} mechanism view at \(\alpha = 0.1\) for the simple score. Panel~A illustrates the pooled-threshold mechanism in Theorem~\ref{thm:conservation}; Panels~B--C illustrate Theorems~\ref{thm:3}--\ref{thm:size_to_cov_disparity} and Corollaries~\ref{cor:length_disparity}--\ref{cor:coverage_disparity}; Panel~D summarizes the three distortions (\Cref{thm:heisenberg_L2}, Corollaries~\ref{cor:length_disparity}--\ref{cor:coverage_disparity}) for male and female groups.}
\label{fig:biobias-main}
  \vspace{-17pt}
\end{figure}
\vspace{-5pt}
\subsection{\texttt{MultiNLI} Experiments}
\vspace{-5pt}
\label{sec:multinli-main}
\begin{figure}[h]
    \centering
    \includegraphics[height=4.8cm,trim={4 3 6 6},clip]{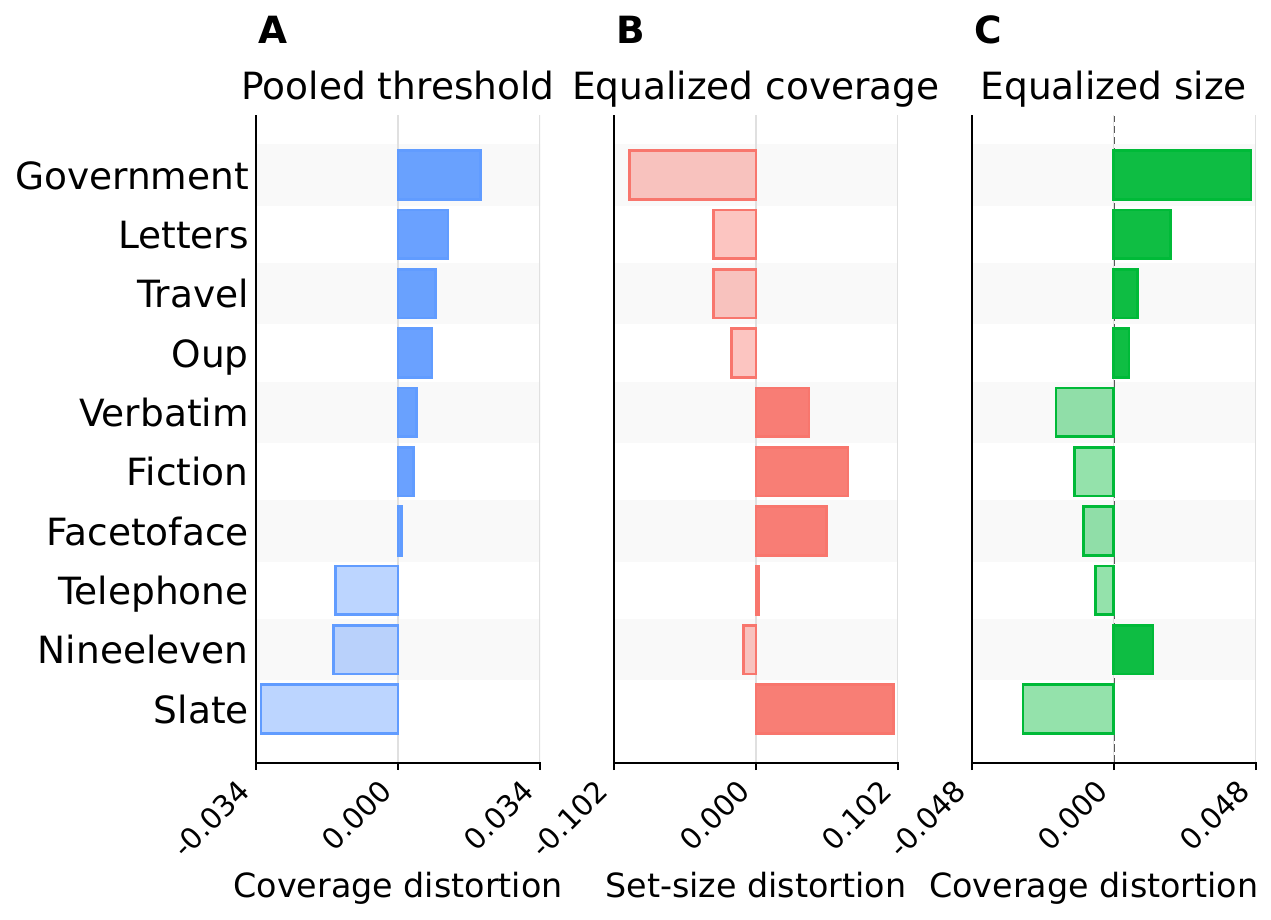}
    \hspace{0.02\linewidth}
    \includegraphics[height=4.8cm,trim={4 2 6 6},clip]{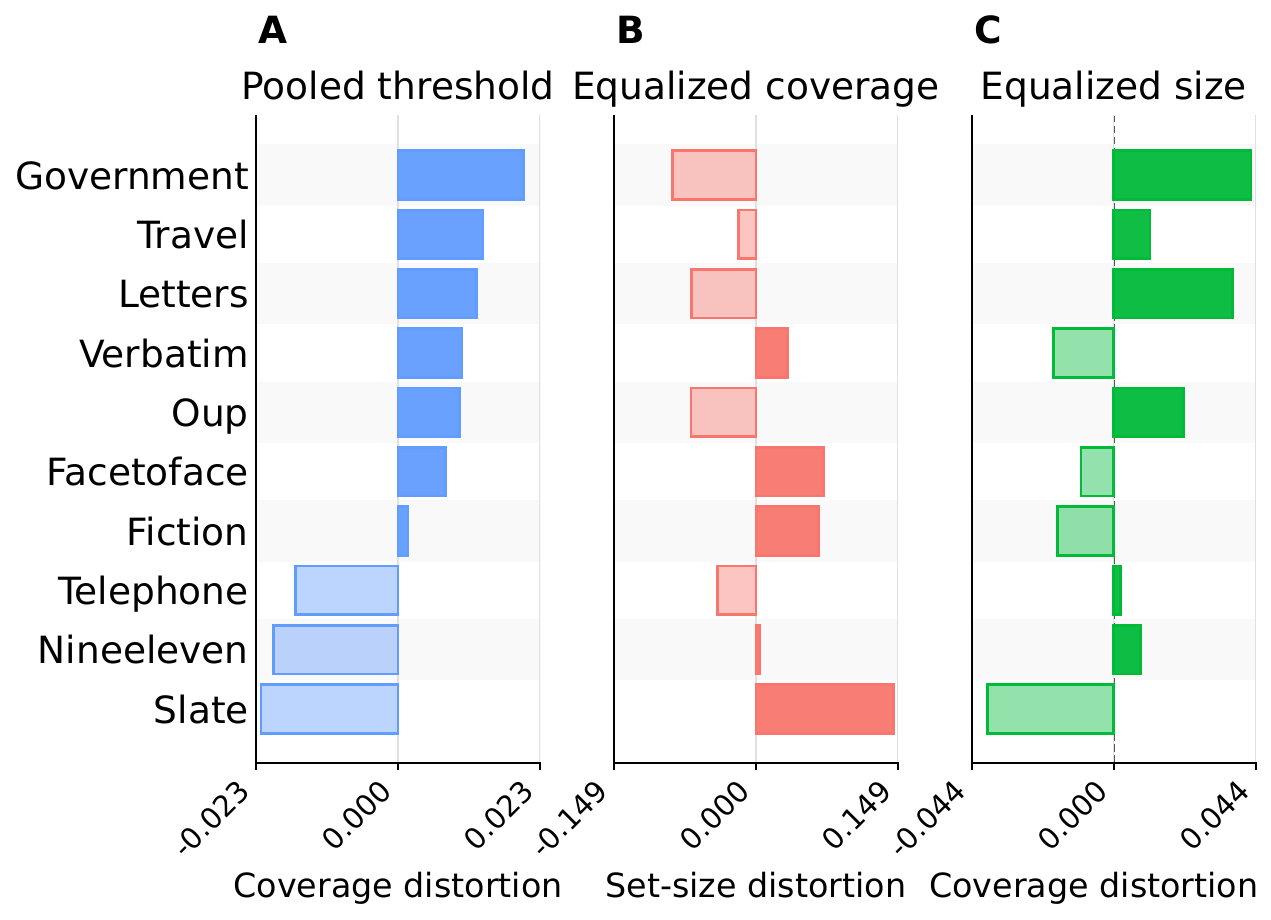}
    \caption{\texttt{MultiNLI} at \(\alpha = 0.1\) with simple (left) and RAPS (right) scores. For each score, Panel~A shows signed coverage distortion \(\hat F_{S|g}(\hat{q}) - (1-\alpha)\) under pooled threshold (Theorem~\ref{thm:conservation}): positive bars indicate over-coverage and negative bars indicate under-coverage. Panel~B shows the signed change in expected set size \(\hat\ell_g(\hat q_g)-\hat\ell_g(\hat q)\) after switching to group-wise thresholds that equalize coverage (Corollary~\ref{cor:length_disparity}). Panel~C shows the signed coverage distortion \(\hat F_{S|g}(\hat\tau_g)-\hat F_{S|g}(\hat q_g)\) after enforcing a common expected set size (Corollary~\ref{cor:coverage_disparity}).}
    \label{fig:multi-primary}
    \vspace{-8pt}
\end{figure}
\looseness-1 Using the same post-hoc protocol as in Section~\ref{sec:biobias-main}, we treat the ten \texttt{MultiNLI}~\citep{williams2018broad} genres as groups. Figure~\ref{fig:multi-primary} shows the same qualitative transfer pattern (Theorems~\ref{thm:3} and \ref{thm:size_to_cov_disparity}) for both the simple score, \(
s(x,y) = 1-\hat p_y(x),
\) and the RAPS nonconformity score~\citep{angelopoulos2021uncertainty}. For the simple score at \(\alpha=0.1\), the pooled RMS genre-wise coverage distortion is \(0.0150\). The corresponding RMS set-size distortion after moving from \(q\) to \(q_g\) is \(0.0532\). In addition, the RMS coverage distortion under equalized expected set size is \(0.0209\). Thus, \texttt{MultiNLI} exhibits the signed genre-wise pattern and is consistent with the bidirectional trade-off described in Sections~\ref{sec3:UR} and ~\ref{sec4:trade-offs}. Per-genre summaries, additional robustness experiments, alternative-score results, and finite-calibration diagnostics are reported in Appendix~\ref{app:multi}.

\vspace{-4pt}
\subsection{\textbf{\texttt{FACET}} Experiments}
\vspace{-2pt}
\label{sec:facet-main}
We next show the same mechanisms on \texttt{FACET}~\citep{gustafson2023facet} using the RAPS score on the age group split (Younger, Middle, Older, Unknown) with a zero-shot \texttt{CLIP ViT-L/14} classifier \citep{radford2021learning}. 
\Cref{fig:facet-primary} shows the same transfer pattern on a more group imbalanced computer-vision dataset: under pooled calibration, the Younger group is over-covered while the others are under-covered. Switching from \(q\) to \(q_g\) removes the pooled-regime distortion but induces set-size distortion. Equalizing expected set size then reintroduces coverage distortion. Specifically, Panel A shows pooled-threshold coverage distortion, Panel B the set-size shift after equalized coverage, and Panel C the coverage shift after equalized expected size.  At \(\alpha=0.1\), the empirical pooled RMS coverage distortion is \(0.0083\). \begin{wrapfigure}{r}{0.45\textwidth}
    \centering
    \includegraphics[width=0.43\textwidth]{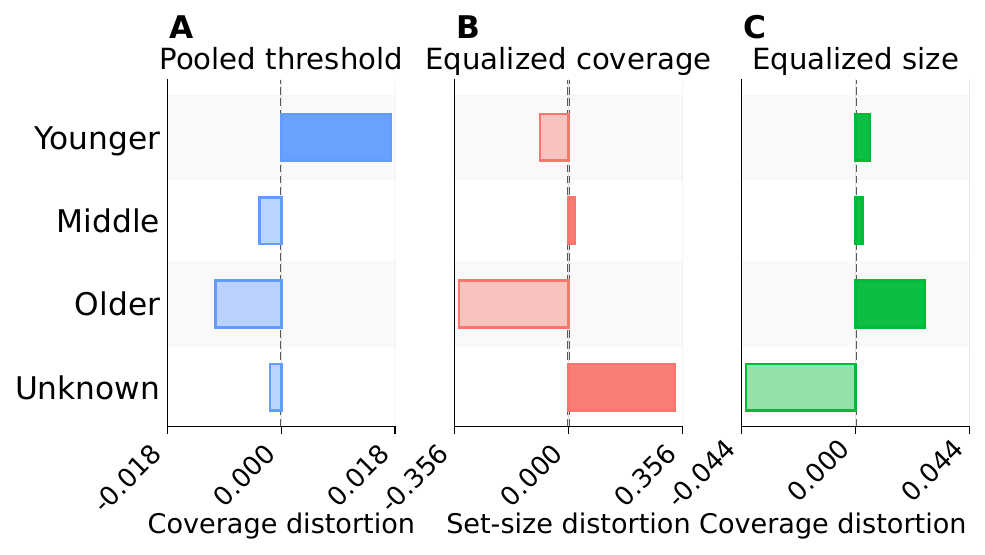}
    \caption{\texttt{FACET} at \(\alpha = 0.1\) with the RAPS score; Panel A illustrates Theorem~\ref{thm:conservation}, and Panels B--C illustrate Corollaries~\ref{cor:length_disparity}--\ref{cor:coverage_disparity}.}
    \label{fig:facet-primary}
    \vspace{-6pt}
\end{wrapfigure}The RMS set-size distortion after changing from \(q\) to \(q_g\) is \(0.1717\). The RMS coverage distortion under equalized expected set size is \(0.0199\). Thus, \texttt{FACET} supports the behavior in Theorem~\ref{thm:conservation} together with the two policy-conversion effects quantified in Corollaries~\ref{cor:length_disparity} and~\ref{cor:coverage_disparity}. Per-group summaries, additional robustness experiments, and calibration-resampling stability results are reported in Appendix~\ref{app:facet}.

Taken together, the experiments in this section examine three finite-sample consequences of Sections~\ref{sec3:UR} and~\ref{sec4:trade-offs}: the pooled-threshold effective lower-bound behavior, the RMS set-size distortion after switching from \(q\) to \(q_g\), and the RMS coverage distortion under equalized expected set size. We use the effective constants only as oracle/proxy diagnostic scales, not as finite-sample estimators for deployment. Appendix~\ref{app:plugin-diagnostics} describes the empirical plug-in diagnostics used to compute the heterogeneity scales and lower-bound proxies reported in our experiments. Additional detectability experiments on how large calibration splits should be for the structural floor to become empirically resolvable are deferred to Appendices~\ref{app:biobias-detectability}, \ref{app:multi-detectability}, and~\ref{app:facet-detectability}.
  \vspace{-6pt}
\section{Conclusion}
\label{conclusion}
  \vspace{-4pt}
We studied group-conditional conformal prediction through the population score distributions underlying split conformal calibration. Our main result is structural: when group-wise conformal quantiles differ, the two group-level objectives studied here, exact group-wise coverage and equalized expected set size, cannot in general be achieved simultaneously. Under a pooled threshold, this heterogeneity appears as a group-wise coverage disparity; enforcing exact group-wise coverage shifts the heterogeneity to expected set size disparity, while equalizing expected set size reintroduces coverage disparity. These results show that the two leading fairness notions in CP, Equalized Coverage and Equalized Set Size, exhibit a structural trade-off, thereby identifying the CP counterpart of classical impossibility results in algorithmic fairness. 

Our goal is not to solve full conditional coverage, nor to prescribe a normative fairness criterion. Rather, we identify a population-level constraint induced by heterogeneous group quantiles under conformal calibration. The quantitative bounds rely on local regularity assumptions, including continuity of score distributions and nondegenerate local sensitivity of the set size curves. In addition, the analysis is developed at the level of prespecified discrete groups and population score distributions, while finite-sample behavior is studied empirically. 
These limitations are deliberate: they separate the structural effect of heterogeneity from broader questions about adaptive subgroup validity, end-to-end training, or fully conditional guarantees.

In this sense, a calibration policy should be understood not as removing group heterogeneity, but as determining whether it manifests as cross-group disparity in coverage or set size, and how large the resulting policy-conversion distortions will be. A natural next step is to extend the analysis beyond prespecified discrete groups and population score distributions, and to develop finite-sample theory for the policy-conversion distortions. Another is to connect these structural trade-offs to adaptive subgroup validity and to learned calibration policies in practice.

\newpage
\bibliographystyle{plainnat}
\bibliography{bib}
\clearpage

\appendix
\section*{Appendix Contents}
\begin{itemize}
    \item \apptoclink{app:technical-discussion}{Appendix A: Technical Discussion}
    \item \apptoclink{app:proofs}{Appendix B: Proofs of Theoretical Results}
    \item \apptoclink{app:additional-experiments}{Appendix C: Additional Experimental Details}
    \item \apptoclink{app:biobias}{Appendix D: \texttt{Bias in Bios} Experiments}
    \item \apptoclink{app:multi}{Appendix E: \texttt{MultiNLI} Experiments}
    \item \apptoclink{app:facet}{Appendix F: \texttt{FACET} Experiments}
    \item \apptoclink{app:compute_resource}{Appendix G: Computational Resources}
\end{itemize}

\FloatBarrier
\section{Technical Discussion}
\label{app:technical-discussion}

\begin{table}[t]
    \centering
    \small
    \caption{Notation summary}
    \label{tab:notation}
    \begin{tabular}{p{0.18\linewidth} p{0.63\linewidth}}
    \toprule
    Symbol & Meaning\\
    \midrule
    \(\mathcal G=\{1,\dots,K\}\) & Prespecified discrete group set.\\
    
    \(G(X), \quad p_g\) & Group label of \(X\) and group proportion \(p_g=\mathbb P(G(X)=g)\).\\
    
    \(S\) & Nonconformity score.\\
    
    \(F_{S \mid g}(t)\) & Conditional score CDF for group \(g:\mathbb P (S \le t \mid G=g)\).\\

    \(q_g\) & Group-specific \(1-\alpha\) quantile of \(F_{S \mid g}\).\\

    \(F_{S}(t)\) & Mixture score CDF: \(\sum_{g\in\mathcal G} p_g F_{S\mid g}(t)\). \\

    \(q\) & Pooled \(1-\alpha\) quantile of \(F_{S}\).\\
    
    \(\varepsilon_g(q)\) & Group-wise miscoverage under pooled \(q\): \(F_{S\mid g}(q)-(1-\alpha)\).\\

    \(q_G, \quad \varepsilon_G(q)\) & Random-group versions of \(q_g\) and \(\varepsilon_g(q)\) when \(G\sim p\).\\

    \(\sigma_\Delta\) & Cross-group quantile heterogeneity: \(\mathrm{sd}(q_G)\).\\

    \(\ell_g(t)\) & Expected set size for group \(g\) at threshold \(t\).\\

    \(\lambda_g\) & Coverage-calibrated expected set size for group \(g\): \(\ell_g(q_g)\).\\

    \(\tau_g\) & Group-wise threshold satisfying \(\ell_g(\tau_g)=\lambda\) under an equalized size.\\

    \(m_g(q),\quad m_{\mathrm{eff}}(q)\) & Local score density stiffness and its effective aggregate version.\\

    \(v_g,\quad v_{\mathrm{eff}}(q)\) & Local responsiveness of \(\ell_g\) and its aggregate version.\\

    \(\kappa_g, \quad \kappa_{\mathrm{eff}}(\lambda)\) & Local coverage--size conversion factor and its aggregate version.\\

    \(\sigma_\lambda\) & Dispersion of coverage-calibrated set size: \(\sigma_\lambda=\mathrm{sd}(\lambda_G)\).\\
    \bottomrule
    \end{tabular}
\end{table}

\subsection{Two-group Specialization of the Conservation Law and Uncertainty Relations}\label{app:two-group-specialization}
In finite-sample split conformal prediction, the empirical quantile \(\hat q\) satisfies the identity \eqref{eq:conservation} approximately and converges to the population identity asymptotically. We first formalize a two-group version of the conservation law. Let \(K=2\), with groups \(g\in\{0,1\}\) and \(p:= \mathbb{P}(G=1)\in(0,1)\). Without loss of generality, we assume the two quantiles satisfy \(q_0 < q_1\). The distance between them is denoted by \(\Delta := q_1-q_0\). For notational convenience, in the two-group case we write \(F_0 := F_{S\mid 0}\) and \(F_1 := F_{S\mid 1}\). Whenever densities exist, we likewise write \(f_0 := f_{S\mid 0}\) and \(f_1 := f_{S\mid 1}\).
\begin{restatable}{lemma}{TwoGroupQuantileInterval}
\label{lem:two_group_quantile_interval}
Assume $q_0<q_1$, $p\in(0,1)$ and that \(F_0\) and \(F_1\) are continuous. Then the pooled quantile $q=F_{S}^{-1}(1-\alpha)$ satisfies $q\in[q_0,q_1]$.
\end{restatable}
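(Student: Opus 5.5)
The plan is to combine the mixture representation $F_S = (1-p)F_0 + pF_1$ with the defining properties of the quantiles $q_0$, $q_1$, $q$. Because $F_0$ and $F_1$ are continuous, the inverse at $1-\alpha$ is attained exactly: $F_0(q_0)=1-\alpha$ and $F_1(q_1)=1-\alpha$. Moreover, $F_g(t)<1-\alpha$ for every $t<q_g$, by the definition of $q_g$ as an infimum. The mixture $F_S$ is then also continuous, so $F_S(q)=1-\alpha$, and $F_S(t)<1-\alpha$ for $t<q$.

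For the upper bound $q\le q_1$, I would evaluate the mixture at $q_1$. Monotonicity of $F_0$ together with $q_0<q_1$ gives $F_0(q_1)\ge F_0(q_0)=1-\alpha$. Hence
\[
F_S(q_1)=(1-p)F_0(q_1)+pF_1(q_1)\ge 1-\alpha,
\]
and so $q_1$ belongs to the set $\{t:F_S(t)\ge 1-\alpha\}$ whose infimum is $q$. This yields $q\le q_1$.

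For the lower bound $q\ge q_0$, I would argue by contradiction. Suppose $q<q_0$. Since $q<q_0<q_1$, the infimum characterization gives both $F_0(q)<1-\alpha$ and $F_1(q)<1-\alpha$. Because $p\in(0,1)$, the convex combination then satisfies $F_S(q)<1-\alpha$. This contradicts $F_S(q)\ge 1-\alpha$, which holds by right-continuity of CDFs (or by continuity of $F_S$). Therefore $q\ge q_0$.

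The only delicate point is the strict inequality $F_g(t)<1-\alpha$ for $t<q_g$, and it follows directly from the infimum definition of $q_g$; no further density assumptions are needed. I expect no serious obstacle. Continuity is used only to guarantee that the quantile values are attained, and the proof could even avoid it by relying on right-continuity alone. Strictness of $p\in(0,1)$ is used only to ensure that both components carry positive weight; the strict inequality in the lower-bound step would survive even with $p\in[0,1]$, since both terms are already strictly below $1-\alpha$.
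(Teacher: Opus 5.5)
Your proposal is correct and follows the paper's proof: for the lower bound, a contradiction in which $q<q_0<q_1$ forces $F_0(q),F_1(q)<1-\alpha$ and hence $F_S(q)<1-\alpha$; for the upper bound, showing $q_1\in\{t: F_S(t)\ge 1-\alpha\}$. You also make explicit the monotonicity step $F_0(q_1)\ge F_0(q_0)\ge 1-\alpha$, which the paper uses without stating it.
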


\begin{assumption}[Local density lower bounds for two-group case]
\label{assumption_1}
For each group \(g\in\{0,1\}\), the conditional distribution of scores admits a density \(f_{S|g}\) on the interval \([q_0,q_1]\) and
\(
\operatorname*{ess\,inf}_{t\in[q_0,q_1]} f_0(t)\ge m_0>0,\text{ and }
\operatorname*{ess\,inf}_{t\in[q_0,q_1]} f_1(t)\ge m_1>0.
\)
\end{assumption}
We define the over- and under-coverage magnitudes
\(
\omega_o:=\varepsilon_0(q)=F_0(q)-(1-\alpha),
\
\omega_u:=-\varepsilon_1(q)=(1-\alpha)-F_1(q).
\)
By Theorem~\ref{thm:conservation}, the two-group conservation identity gives
\begin{equation}
\label{eq:uo_coupling}
(1-p)\,\omega_o = p\,\omega_u\qquad\Longleftrightarrow\qquad \omega_o=\frac{p}{1-p}\omega_u.
\end{equation}
Finally, we write
\begin{equation}
\rho:=\frac{p}{1-p},
\qquad
B_{01}:=
\frac{\Delta}{\frac{1}{m_1}+\rho\frac{1}{m_0}}.
\end{equation}
Next, we present the two-group uncertainty relation in the following theorem.

\begin{restatable}{theorem}{TwoGroupProduct}
\label{thm:two_group_product}
\text{(Two-group product-type uncertainty relation)}
Suppose Assumption~\ref{assumption_1} holds. Let \(q_0 < q_1\). Then 
\begin{align}
\label{eq:two_group_u_bound}
\omega_u &\ge B_{01},\\
\label{eq:two_group_o_bound}
\omega_o &\ge \rho B_{01},\\
\label{eq:two_group_product}
\omega_u\,\omega_o &\ge \rho B_{01}^{2}.
\end{align}

\end{restatable}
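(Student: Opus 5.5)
We combine the conservation identity \eqref{eq:uo_coupling} with two one-sided fundamental-theorem-of-calculus estimates over the interval $[q_0,q_1]$; the pooled quantile sits inside this interval by Lemma~\ref{lem:two_group_quantile_interval}.

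\textbf{Setup.} Assumption~\ref{assumption_1} gives densities on $[q_0,q_1]$, so $F_0$ and $F_1$ are continuous there. Hence Lemma~\ref{lem:two_group_quantile_interval} applies and yields $q\in[q_0,q_1]$. Continuity of each $F_g$ also gives $F_g(q_g)=1-\alpha$, and makes $F_S$ continuous at $q$, so the zero-sum form of Theorem~\ref{thm:conservation} holds. This gives the coupling $\omega_o=\rho\,\omega_u$ from \eqref{eq:uo_coupling}. One point needs care: Assumption~\ref{assumption_1} only posits a density on $[q_0,q_1]$. I would read it as saying that $F_g$ is absolutely continuous on that interval, so that \eqref{eq:FTC_segment} is available there.

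\textbf{Two FTC bounds.} Write $a:=q-q_0\ge 0$ and $b:=q_1-q\ge 0$, so that $a+b=\Delta$. For group $1$,
\[
\omega_u=F_1(q_1)-F_1(q)=\int_q^{q_1} f_1\ge m_1 b,
\]
so $b\le \omega_u/m_1$. For group $0$,
\[
\omega_o=F_0(q)-F_0(q_0)=\int_{q_0}^{q} f_0\ge m_0 a,
\]
so $a\le \omega_o/m_0=\rho\,\omega_u/m_0$. Adding the two inequalities gives
\[
\Delta=a+b\le \omega_u\Bigl(\tfrac{1}{m_1}+\tfrac{\rho}{m_0}\Bigr),
\]
which is exactly \eqref{eq:two_group_u_bound}, $\omega_u\ge B_{01}$.

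\textbf{Remaining bounds.} Multiplying by $\rho$ and using $\omega_o=\rho\,\omega_u$ gives \eqref{eq:two_group_o_bound}. Multiplying the two bounds, which is valid since both sides are nonnegative, gives \eqref{eq:two_group_product}.

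\textbf{Main obstacle.} The key step is recognizing that the two displacements $a$ and $b$ must be summed to recover $\Delta$, with the conservation law used to express $a$ in terms of $\omega_u$. Bounding each group separately only gives bounds in terms of the unknown position of $q$, not in terms of $\Delta$. The other delicate point is the continuity bookkeeping: we need $F_g(q_g)=1-\alpha$ and the zero-sum version of Theorem~\ref{thm:conservation}, and both rest on the continuity already assumed in Lemma~\ref{lem:two_group_quantile_interval}.
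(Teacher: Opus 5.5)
Your proof is correct and follows the paper's argument essentially step for step. You bound $q-q_0$ and $q_1-q$ by the fundamental theorem of calculus with the density floors $m_0,m_1$, use the coupling $(1-p)\omega_o=p\,\omega_u$ to express the first displacement in terms of $\omega_u$, sum to recover $\Delta$, and then get the $\omega_o$ and product bounds from the coupling and multiplication. Your extra bookkeeping makes explicit what the paper's proof uses implicitly: no atoms on $[q_0,q_1]$ giving $F_g(q_g)=1-\alpha$, continuity of $F_S$ at $q$, and $q\in[q_0,q_1]$ via Lemma~\ref{lem:two_group_quantile_interval}.
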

We note that the constant \(\Delta\) is an intrinsic heterogeneity disparity at level \(1-\alpha\). The ratio \(p/(1-p)\) quantifies how much the pooled threshold is pulled toward group \(1\). The terms \(m_0\) and \(m_1\) encode local score sensitivities, implying that smaller local densities amplify the threshold displacement required to compensate for a given level of group-wise miscoverage. We next extend this product-type statement to the multi-group setting.

\subsection{Generalization of a Product-type Conservation Law and Uncertainty Relation}
\label{app:multi-group-specialization}
The two-group statement above admits the following aggregate multi-group extension.

\begin{assumption}[Local density lower bounds for multi-group case]
\label{assumption:multigroup_heisenberg}
Let \(q_{\min}:=\min_{g\in\mathcal G} q_g\) and
\(q_{\max}:=\max_{g\in\mathcal G} q_g\), with \(q_{\min}<q_{\max}\),
and let \(q\in(q_{\min},q_{\max})\) be the threshold under consideration. 
For each group
\(g\in\mathcal G\), \(F_{S|g}\) is absolutely continuous on
\([q_{\min},q_{\max}]\), satisfies \(F_{S|g}(q_g)=1-\alpha\), and has
density \(f_{S|g}\) satisfying
\(
f_{S|g}(t)\ge m_g>0
\)
for almost every \(t\in[q_{\min},q_{\max}]\).
\end{assumption}

We define the over- and under-coverage magnitudes for \(K\) groups as
\begin{equation}
 \Omega_o(q):=\sum_{\varepsilon_g(q)>0}p_g\varepsilon_g(q),
\qquad
 \Omega_u(q):=\sum_{\varepsilon_g(q)<0}p_g[-\varepsilon_g(q)],
\end{equation}
and
\begin{equation}
w_g:=p_gm_g,\qquad
\bar q_m:=\frac{\sum_{g\in\mathcal G}w_gq_g}{\sum_{g\in\mathcal G}w_g},
\qquad
B_K:=\frac12\sum_{g\in\mathcal G}w_g|q_g-\bar q_m|.
\end{equation}

\begin{restatable}{theorem}{MultiGroupProduct}
\label{thm:multigroupproduct}
\text{(Generalization of a product-type uncertainty relation)}

Suppose Assumption~\ref{assumption:multigroup_heisenberg} holds. 
Then
\begin{enumerate}

\item
$
\max\{\Omega_o(q), \Omega_u(q)\}\ge B_K
\ \text{and} \  
\min\{\Omega_o(q),\Omega_u(q)\}
\ge \bigl(B_K-|\delta(q)|\bigr)_+, $
where \((x)_+=\max\{x,0\}\). 

\item
$
\Omega_o(q) \  \Omega_u(q)
\ge
B_K\bigl(B_K-|\delta(q)|\bigr)_+ \ \cdot
$

\item

Under exact conservation, \(\delta(q)=0\), we have
$\Omega_o(q)\ \Omega_u(q)\ge B_K^2. $

\end{enumerate}
\end{restatable}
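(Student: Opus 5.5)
Writing $\varepsilon_g:=\varepsilon_g(q)$, the plan is to express each $\varepsilon_g$ as a density integral between $q_g$ and $q$, bound it below linearly in $|q-q_g|$, and then turn the sum of these linear bounds into $B_K$ by a weighted-median argument.

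\textbf{Step 1 (linear bounds).} Under Assumption~\ref{assumption:multigroup_heisenberg}, $F_{S|g}(q_g)=1-\alpha$ and $F_{S|g}$ is absolutely continuous on $[q_{\min},q_{\max}]$, which contains both $q$ and $q_g$. Hence
\[
\varepsilon_g=\int_{q_g}^{q}f_{S|g}(t)\,dt .
\]
This gives $\varepsilon_g\ge m_g(q-q_g)>0$ when $q>q_g$, and $-\varepsilon_g\ge m_g(q_g-q)$ when $q<q_g$. In particular the sign of $\varepsilon_g$ is the sign of $q-q_g$. Summing with weights $p_g$ yields
\[
\Omega_o\ge \sum_{q_g<q}w_g(q-q_g)=:A_-(q),\qquad \Omega_u\ge \sum_{q_g>q}w_g(q_g-q)=:A_+(q).
\]

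\textbf{Step 2 (the key inequality).} Set $h(x):=\sum_g w_g|q_g-x|$, so that $A_-(q)+A_+(q)=h(q)$ and $A_-(q)-A_+(q)=W(q-\bar q_m)$ with $W=\sum_g w_g$. By definition $B_K=\tfrac12 h(\bar q_m)$.

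The claim $\max\{A_-,A_+\}\ge B_K$ reduces to showing $\max\{A_-,A_+\}\ge \tfrac12 h(\bar q_m)$ for every $q$. Since $\max\{A_-,A_+\}=\tfrac12\bigl(h(q)+W|q-\bar q_m|\bigr)$, it suffices to show
\[
h(\bar q_m)\le h(q)+W|q-\bar q_m| .
\]
This is immediate: $h$ is $W$-Lipschitz, because each term $w_g|q_g-x|$ is $w_g$-Lipschitz. This step is the one I regard as the crux, and it is now settled. It gives $\max\{\Omega_o,\Omega_u\}\ge B_K$, which is the first half of part 1.

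\textbf{Step 3 (min bound and parts 2--3).} By Theorem~\ref{thm:conservation}, $\Omega_o-\Omega_u=\sum_g p_g\varepsilon_g=\delta(q)$, so $|\Omega_o-\Omega_u|=|\delta(q)|$. Therefore
\[
\min\{\Omega_o,\Omega_u\}=\max\{\Omega_o,\Omega_u\}-|\delta(q)|\ge B_K-|\delta(q)|,
\]
and nonnegativity supplies the $(\cdot)_+$. This completes part 1.

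Part 2 follows by multiplying: the product of the max and the min is at least $B_K\bigl(B_K-|\delta(q)|\bigr)_+$, using $\max\ge B_K\ge0$.

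Part 3 is the special case $\delta(q)=0$ of part 2. This case applies, for instance, when $F_S$ is continuous at $q$, which holds here because every $F_{S|g}$ is continuous on the relevant range.

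The only care needed is in Step 1: the segments between $q$ and $q_g$ must lie inside $[q_{\min},q_{\max}]$ so that the density lower bound applies. This holds because $q\in(q_{\min},q_{\max})$.
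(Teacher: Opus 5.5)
Your proof is correct and follows the paper's route: density lower bounds give piecewise-linear minorants of $\Omega_o$ and $\Omega_u$, these are compared at the crossing point $\bar q_m$, conservation handles the minimum, and multiplication finishes. The only variation is that you certify $\max\{A_-,A_+\}\ge B_K$ via $\max\{A_-,A_+\}=\tfrac12\bigl(h(q)+W|q-\bar q_m|\bigr)$ and the $W$-Lipschitz property of $h$, whereas the paper uses that the minorant of $\Omega_o$ is non-decreasing and that of $\Omega_u$ non-increasing in $q$, so their maximum is smallest at $\bar q_m$.

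Your closing remark needs one correction. Continuity of $F_S$ at $q$ gives $\delta(q)=0$ only when $q$ is the pooled $(1-\alpha)$-quantile, not for an arbitrary threshold $q\in(q_{\min},q_{\max})$ as the assumption allows. Since part~3 takes $\delta(q)=0$ as a hypothesis, this does not affect the argument.
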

When \(q\) is the pooled population quantile and the mixture CDF \(F_S\) is continuous at \(q\), Theorem~\ref{thm:conservation} gives \(\delta(q)=0\), so the exact-conservation form in part 3 is the relevant pooled-calibration case.

Theorem~\ref{thm:multigroupproduct} refines Theorem~\ref{thm:conservation} from a signed additive conservation law to a magnitude lower bound. The product-type form echoes the fact that pooled calibration does not remove group heterogeneity but redistributes it into over- and under-coverage aggregates.

\subsection{Coverage Disparity and Quantile Variance}
In this section, we give an elaboration of Theorem~\ref{thm:heisenberg_L2} when \(\mathbb{E}[\varepsilon_G(q)]=0\). The resulting inequality exhibits a Cram\'er--Rao-type structure.
\begin{definition}
\label{def:avg_density}
For each group, define the segment average density
\begin{equation}
\label{eq:fbar}
\bar f_{S|g}(q):=
\begin{cases}
\dfrac{F_{S|g}(q)-F_{S|g}(q_g)}{q-q_g} = \dfrac{\varepsilon_g(q)}{q-q_g}, & q\ne q_g,\\
1, & q=q_g\ .
\end{cases}
\end{equation}
Assume \(\bar f_{S|g}(q)>0\) for all groups with \(q\ne q_g\); when \(q=q_g\), the corresponding term in \eqref{eq:apparatus_cost} is zero.
We define the apparatus cost term
\begin{equation}
\label{eq:apparatus_cost}
\mathrm{L}(q):=\sqrt{\mathbb{E}\Big[\Big(\frac{q-q_G}{\bar f_{S|G}(q)}\Big)^2\Big]}.
\end{equation}
\end{definition}

\begin{restatable}{theorem}{HeisenbergProduct}
\label{thm:product_heisenberg}
Suppose Assumption~\ref{assumption_2} holds. With Definition~\ref{def:avg_density},
\begin{equation}
\label{eq:product_heisenberg}
\sqrt{\mathbb{E}\big[\varepsilon_G(q)^2\big]}
\cdot
\mathrm{L}(q)
\ \ge\ 
\mathbb{E}\big[(q-q_G)^2\big]
\ \ge\ \sigma_{\Delta}^2.
\end{equation}
\end{restatable}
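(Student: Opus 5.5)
The first inequality is a Cauchy--Schwarz argument built on the segment-average density of Definition~\ref{def:avg_density}; the second is the elementary fact that the mean minimizes mean squared deviation.

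\textbf{Pointwise identity.} For every group $g$, using \eqref{eq:FTC_segment} together with $F_{S|g}(q_g)=1-\alpha$, I would write
\[
q-q_g=\frac{\varepsilon_g(q)}{\bar f_{S|g}(q)}\cdot\frac{q-q_g}{\bar f_{S|g}(q)}\cdot\frac{\bar f_{S|g}(q)^2}{\varepsilon_g(q)}\,,
\]
or more simply $(q-q_g)^2=\varepsilon_g(q)\cdot\frac{q-q_g}{\bar f_{S|g}(q)}$ whenever $q\neq q_g$, since $\varepsilon_g(q)=\bar f_{S|g}(q)(q-q_g)$. When $q=q_g$ both sides vanish, matching the convention that the corresponding term of $\mathrm L(q)$ is zero.

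\textbf{Checking the inputs.} Two facts about $\varepsilon_g(q)$ are needed.
\begin{itemize}
\item Under Assumption~\ref{assumption_2}, $F_{S|g}$ is continuous at $q_g$, so $F_{S|g}(q_g)=1-\alpha$ exactly. This is what makes $\varepsilon_g(q)=F_{S|g}(q)-F_{S|g}(q_g)$.
\item The density is positive a.e.\ on $J_g$, so $\bar f_{S|g}(q)\ge m_g(q)>0$ whenever $q\ne q_g$. The apparatus cost term is therefore well defined and finite.
\end{itemize}

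\textbf{First inequality.} Taking expectations over $G$ in the pointwise identity gives
\[
\mathbb E[(q-q_G)^2]=\mathbb E\Big[\varepsilon_G(q)\cdot\tfrac{q-q_G}{\bar f_{S|G}(q)}\Big].
\]
Cauchy--Schwarz then bounds the right-hand side by $\sqrt{\mathbb E[\varepsilon_G(q)^2]}\,\mathrm L(q)$. The product is nonnegative termwise, since $\varepsilon_g(q)$ and $q-q_g$ share a sign, but this fact is not even needed.

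\textbf{Second inequality.} This is the bias--variance decomposition
\[
\mathbb E[(q-q_G)^2]=\mathrm{Var}(q_G)+(q-\mathbb E[q_G])^2\ge\sigma_\Delta^2 .
\]

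\textbf{Main obstacle.} The only delicate point is the bookkeeping for groups with $q=q_g$, and confirming that $F_{S|g}(q_g)=1-\alpha$ holds given only continuity on $I_g$. Both are handled by the convention in Definition~\ref{def:avg_density} and by the continuity of $F_{S|g}$ on $I_g$, which contains a neighborhood of $q_g$.
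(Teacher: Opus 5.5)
Your proposal is correct and matches the paper's proof: the identity $(q-q_G)^2=\varepsilon_G(q)\cdot (q-q_G)/\bar f_{S|G}(q)$, then Cauchy--Schwarz, then the bias--variance decomposition $\mathbb{E}[(q-q_G)^2]=\mathrm{Var}(q_G)+(q-\mathbb{E}[q_G])^2$. Your checks that $F_{S|g}(q_g)=1-\alpha$ and $\bar f_{S|g}(q)\ge m_g(q)>0$ follow from Assumption~\ref{assumption_2} are valid and make explicit what the paper leaves implicit; the first three-factor display is superfluous and can be dropped.
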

The term \(\mathrm{L}(q)\) measures the inevitable cost of imposing a pooled \(q\). The term is large for groups whose quantile displacement is large relative to the induced coverage distortion. In particular, the term quantifies how much the device must move to compensate for heterogeneity. The inequality \eqref{eq:product_heisenberg} states that one cannot simultaneously make RMS miscoverage small and keep the apparatus cost small when \(\sigma_{\Delta}\) is non-negligible.

The Cauchy--Schwarz step in Theorem~\ref{thm:product_heisenberg} also has a H\"older version. Let
\(r,s\in[1,\infty]\) be conjugate exponents, \(1/r+1/s=1\). Under Definition~\ref{def:avg_density},
\begin{equation}
\|\varepsilon_G(q)\|_{L^r(p)}
\left\|
\frac{q-q_G}{\bar f_{S|G}(q)}
\right\|_{L^s(p)}
\ge
\mathbb E[(q-q_G)^2]
\ge
\sigma_\Delta^2 .
\end{equation}
Indeed, writing \(Q=q-q_G\), Definition~\ref{def:avg_density} gives
\(
|\varepsilon_G(q)|=\bar f_{S|G}(q)|Q|,
\)
and hence
\begin{equation}
\mathbb E[Q^2]
=
\mathbb E\left[
|\varepsilon_G(q)|
\left|
\frac{Q}{\bar f_{S|G}(q)}
\right|
\right].
\end{equation}
Applying H\"older's inequality gives the first inequality. The second follows from
\begin{equation}
\mathbb E[(q-q_G)^2]\ge \mathrm{Var}(q_G)=\sigma_\Delta^2.
\end{equation}
Taking \(r=s=2\) recovers Theorem~\ref{thm:product_heisenberg}.

\subsection{Connection to Classical Fairness Impossibility Results}
A closely related precursor appears in the algorithmic fairness literature. Specifically, \citet{chouldechova2017} shows that in binary settings, when two groups have different base rates, \(\pi_g = \mathbb{P}(Y=1\mid  G=g)\), predictive parity, i.e., equal \(\mathrm{PPV}_g = \mathbb{P}(Y=1 \mid \hat{Y} = 1, G=g)\) across groups, cannot generally hold simultaneously with equalized error profiles matching the false positive rates \(\mathrm{FPR}_g = \mathbb{P}(\hat{Y} = 1 \mid Y=0, G=g)\) and the true positive rates \(\mathrm{TPR}_g = \mathbb{P}(\hat{Y} = 1 \mid Y=1, G=g)\). The incompatibility follows from a Bayes coupling identity relating
predictive values, base rates, and the likelihood ratio
\(\mathrm{TPR}_g/\mathrm{FPR}_g\). Expressed in log-odds form, one may rewrite the incompatibility in the following multiplicative form
\begin{equation}
\exp\big(|\Delta\operatorname{logit}(\mathrm{PPV})|\big)
\exp\big(\big|\Delta\log(\mathrm{TPR}/\mathrm{FPR})\big|\big)
\ge
\exp\big(|\Delta\operatorname{logit}(\pi)|\big),
\end{equation}
where \(\operatorname{logit}(x)=\log(x/(1-x))\) and \(\Delta f := f_1 - f_2\).
The two exponential terms quantify deviations from predictive parity and from equalized error rate structure, while the right-hand side depends only on the base rate gap. Thus, in that setting, unequal base rates act as a structural lower bound that prevents fairness deviations from simultaneously vanishing.

\subsection{Practical Plug-in Diagnostics}
\label{app:plugin-diagnostics}
Our experiments use empirical plug-in diagnostics for the heterogeneity scales and effective lower-bound proxies. For these diagnostics, \(\hat q\) and \(\hat q_g\) are computed from the calibration split, whereas \(\hat p_g\), \(\hat F_{S\mid g}\), \(\hat\ell_g\), and the reported RMS distortions are computed on the corresponding test sample, namely the independent test sample in the synthetic simulations and the test split in the real-data experiments. The quantile
heterogeneity can be estimated by
\[
\hat\sigma_\Delta^2
=
\sum_g \hat p_g
\left(\hat q_g-\sum_h \hat p_h\hat q_h\right)^2 .
\]
For any threshold \(t\), let
\[
\hat F_{S\mid g}(t)=\frac{1}{n_g}\sum_{i:G_i=g}\mathbf 1\{S_i\le t\},
\qquad
\hat\ell_g(t)=\frac{1}{n_g}\sum_{i:G_i=g}|\hat C_t(X_i)|
\]
be the empirical group score CDF and empirical set-size curve on this test sample, where \(n_g\) is the number of test examples in group \(g\). Rather than estimating
essential infima, segment-average proxies along the same policy-conversion
paths are:
\[
\hat m_g^{\rm seg}
=
\frac{|\hat F_{S\mid g}(\hat q)-\hat F_{S\mid g}(\hat q_g)|}{|\hat q-\hat q_g|},
\qquad
\hat v_g^{\rm seg}
=
\frac{|\hat\ell_g(\hat q_g)-\hat\ell_g(\hat q)|}{|\hat q_g-\hat q|}.
\]
Next, define
\[
\hat\lambda_g:=\hat\ell_g(\hat q_g),
\qquad
\hat\lambda:=\sum_g\hat p_g\hat\lambda_g,
\qquad
\hat\sigma_\lambda^2
=
\sum_g\hat p_g(\hat\lambda_g-\hat\lambda)^2 .
\]
If \(\hat\tau_g\) is the empirical threshold used to attain the common size target
\(\hat\lambda\), then
\[
\hat\kappa_g^{\rm seg}
=
\frac{|\hat F_{S\mid g}(\hat\tau_g)-\hat F_{S\mid g}(\hat q_g)|}
{|\hat\lambda-\hat\lambda_g|}.
\]
When a denominator is zero, the corresponding segment proxy is set to zero.

Finally, for group-wise coefficients \(a_g\) and gaps \(d_g\), define
\[
\mathcal E(a,d)
=
\left(
\frac{\sum_g\hat p_g a_g^2 d_g^2}
{\sum_g\hat p_g d_g^2}
\right)^{1/2},
\]
which can be used to compute
\[
\widehat m_{\rm eff}^{\rm seg}
=
\mathcal E(\hat m^{\rm seg},\hat q-\hat q_g),
\quad
\widehat v_{\rm eff}^{\rm seg}
=
\mathcal E(\hat v^{\rm seg},\hat q-\hat q_g),
\quad
\widehat\kappa_{\rm eff}^{\rm seg}
=
\mathcal E(\hat\kappa^{\rm seg},\hat\lambda_g-\hat\lambda),
\]
yielding the diagnostic scales
\[
\widehat m_{\rm eff}^{\rm seg}\hat\sigma_\Delta,\qquad
\widehat v_{\rm eff}^{\rm seg}\hat\sigma_\Delta,\qquad
\widehat\kappa_{\rm eff}^{\rm seg}\hat\sigma_\lambda .
\]
These quantities are empirical diagnostic proxy scales for assessing whether group heterogeneity is large enough for coverage or
set-size distortions to be visible.
\FloatBarrier
\section{Proofs of Theoretical Results}
\label{app:proofs}
\subsection*{Proof of Theorem~\ref{thm:conservation}}

\conservation*
\begin{proof}
By definition of \(F_{S}\), \[
\sum_{g\in\mathcal G} p_g\,\varepsilon_g(q)
=\sum_{g\in\mathcal G} p_gF_{S|g}(q)-(1-\alpha)
=F_{S}(q)-(1-\alpha)
=\delta(q)\ge 0,
\]
which yields the generalized conservation law. If \(F_{S}\) is continuous at \(q\), then by definition of the quantile \(F_{S}(t)<1-\alpha\) for all \(t<q\); hence, we have \(F_{S}(q^-)\le 1-\alpha\). The continuity at \(q\) gives
\[
F_{S}(q)=F_{S}(q^-)\le 1-\alpha,
\] 
while definition of \(q\) gives \(F_{S}(q)\ge 1-\alpha\). Thus, we have \(F_{S}(q)=1-\alpha\), i.e., \(\delta(q)=0\).

\end{proof}

\subsection*{Proof of Theorem~\ref{thm:heisenberg_L2}}

\heisenberg*
\begin{proof}

If \(\mathbb{E}[(q-q_G)^2]=0\), then \(\mathrm{Var}(q_G)=0\), so the claim is trivial.
We therefore assume \(\mathbb{E}[(q-q_G)^2]>0\), so that \(m_{\mathrm{eff}}(q)\) is well defined.

Fix a group \(g\in\mathcal G\). Since \(q_g\) is the group-specific \(1-\alpha\) quantile and
\(F_{S|g}\) is continuous on the relevant segment, \(F_{S|g}(q_g)=1-\alpha\). Hence, by
Assumption~\ref{assumption_2},
\[
\varepsilon_g(q)
=
F_{S|g}(q)-F_{S|g}(q_g)
=
\int_{q_g}^{q} f_{S|g}(t)\,dt .
\]
By the definition of \(m_g(q)\),
\[
|\varepsilon_g(q)|
=
\left|\int_{q_g}^{q} f_{S|g}(t)\,dt\right|
\ge
m_g(q)|q-q_g|.
\]
Squaring and averaging over \(G\sim p\) gives
\[
\mathbb{E}[\varepsilon_G(q)^2]
\ge
\mathbb{E}\!\left[m_G(q)^2(q-q_G)^2\right].
\]
By the definition of \(m_{\mathrm{eff}}(q)\),
\[
\mathbb{E}\!\left[m_G(q)^2(q-q_G)^2\right]
=
m_{\mathrm{eff}}(q)^2 \mathbb{E}[(q-q_G)^2].
\]
Moreover,
\[
\mathbb{E}[(q-q_G)^2]
=
\mathrm{Var}(q_G)+(q-\mathbb{E}[q_G])^2
\ge
\mathrm{Var}(q_G).
\]
Finally, Assumption~\ref{assumption_2} implies that the mixture CDF \(F_S\) is continuous at \(q\), so Theorem~\ref{thm:conservation} gives
\[
\mathbb{E}[\varepsilon_G(q)] =
\sum_{g\in\mathcal G}p_g\varepsilon_g(q)=0.
\]
Therefore,
\[
\mathrm{Var}(\varepsilon_G(q)) =\mathbb{E}[\varepsilon_G(q)^2]
\ge
m_{\mathrm{eff}}(q)^2\mathrm{Var}(q_G).
\]

\end{proof}

\subsection*{Proof of \Cref{thm:3}}
\SizeDisparity*
\begin{proof}
    The group-wise thresholds $\{q_g \}_{g \in \mathcal{G}}$ achieve equalized coverage at level $1 - \alpha$ across groups. Now fix any $g \in \mathcal{H}_r \setminus \{r\}$. By definition of $\mathcal{H}_r$, we have $q_g \ge q_r$. Since $t \mapsto \ell_g(t)$ is non-decreasing, 
    \begin{equation*}
        \ell_{g}(q_g) \ge \ell_g(q_r).
    \end{equation*}
    Subtracting $\ell_r(q_r)$ from both sides and using Assumption \ref{insert_assump_1} gives
    \begin{equation*}
        \ell_{g}(q_g) - \ell_r(q_r) \ge \ell_g(q_r) - \ell_r(q_r) \ge c_g > 0.
    \end{equation*}
    This proves the first displayed claim.

    Taking the maximum over $g \in \mathcal{H}_r \setminus \{r\}$ yields
    \begin{equation*}
        \max_{g,g' \in \mathcal{G}} |\ell_g(q_g) - \ell_{g'}(q_{g'})| \ge \max_{g \in \mathcal{H}_r \setminus \{r\}} |\ell_g(q_g) - \ell_r(q_r)| \ge \max_{g \in \mathcal{H}_r \setminus \{r\}} c_g > 0.
    \end{equation*}
    Thus, equalized expected set size across all groups is impossible under the equalized group-wise coverage policy.

    For the mean square claim, using the previously established bound for each $g \in \mathcal{H}_r \setminus \{r\}$, 
    \begin{equation*}
        (\ell_g(q_g) - \ell_r(q_r))^2 \ge c_g^2.
    \end{equation*}
    Multiplying by $p_g$ and summing over $g \in \mathcal{H}_r \setminus \{r\}$ gives
    \begin{equation*}
        D_r^2 = \sum_{g \in \mathcal{H}_r \setminus \{r\}} p_g(\ell_g(q_g) - \ell_r(q_r))^2 \ge \sum_{g \in \mathcal{H}_r \setminus \{r\}} p_g c_g^2 > 0.
    \end{equation*}

    This proves the claim.
\end{proof}

\subsection*{Proof of Theorem~\ref{thm:size_to_cov_disparity}}
\CovDisparity*
\begin{proof}
Fix a group \(g\in\mathcal G\).
If \(\lambda_g < \lambda\), then, since \(\ell_g\) is monotone on the segment between \(q_g\) and
\(\tau_g\) and \(\ell_g(q_g)=\lambda_g\), the identity \(\ell_g(\tau_g)=\lambda\) implies
\[
\tau_g > q_g.
\]
By absolute continuity and Assumption \ref{assumption_6},
\[
\lambda-\lambda_g
=
\ell_g(\tau_g)-\ell_g(q_g)
=
\int_{q_g}^{\tau_g}\ell_g'(t)\,dt.
\]
Hence
\[
\lambda-\lambda_g
\le
\int_{q_g}^{\tau_g} |\ell_g'(t)|\,dt
\le
V_g(\tau_g-q_g),
\]
so that
\[
\tau_g-q_g \ge \frac{\lambda-\lambda_g}{V_g}.
\]
Again by Assumption \ref{assumption_6},
\[
F_{S|g}(\tau_g)-F_{S|g}(q_g)
=
\int_{q_g}^{\tau_g} f_{S|g}(t)\,dt
\ge
m_g(\tau_g-q_g)
\ge
\frac{m_g}{V_g}(\lambda-\lambda_g)
=
\kappa_g(\lambda-\lambda_g).
\]
Since \(F_{S|g}(q_g)=1-\alpha\), this proves
\[
F_{S|g}(\tau_g)-(1-\alpha)\ge \kappa_g(\lambda-\lambda_g).
\]

Now suppose \(\lambda_{g'} > \lambda\). Since
\(\ell_{g'}(q_{g'})=\lambda_{g'}\) and \(\ell_{g'}(\tau_{g'})=\lambda\), we have
\[
\ell_{g'}(q_{g'})>\ell_{g'}(\tau_{g'}).
\]
Because \(\ell_{g'}\) is non-decreasing on the segment between \(q_{g'}\) and \(\tau_{g'}\), this implies
\[
\tau_{g'}<q_{g'}.
\]
Indeed, if \(\tau_{g'}\ge q_{g'}\), monotonicity would imply
\[
\ell_{g'}(\tau_{g'})\ge \ell_{g'}(q_{g'}),
\]
contradicting \(\ell_{g'}(\tau_{g'})<\ell_{g'}(q_{g'})\).
Then
\[
\lambda_{g'}-\lambda
=
\ell_{g'}(q_{g'})-\ell_{g'}(\tau_{g'})
=
\int_{\tau_{g'}}^{q_{g'}}\ell_{g'}'(t)\,dt
\le
\int_{\tau_{g'}}^{q_{g'}} |\ell_{g'}'(t)|\,dt
\le
V_{g'}(q_{g'}-\tau_{g'}),
\]
hence
\[
q_{g'}-\tau_{g'} \ge \frac{\lambda_{g'}-\lambda}{V_{g'}}.
\]
Using Assumption \ref{assumption_6} again,
\[
F_{S|g'}(q_{g'})-F_{S|g'}(\tau_{g'})
=
\int_{\tau_{g'}}^{q_{g'}} f_{S|g'}(t)\,dt
\ge
m_{g'}(q_{g'}-\tau_{g'})
\ge
\frac{m_{g'}}{V_{g'}}(\lambda_{g'}-\lambda)
=
\kappa_{g'}(\lambda_{g'}-\lambda).
\]
Since \(F_{S|g'}(q_{g'})=1-\alpha\), we obtain
\[
(1-\alpha)-F_{S|g'}(\tau_{g'})\ge \kappa_{g'}(\lambda_{g'}-\lambda).
\]

Finally, if \(\lambda_g < \lambda < \lambda_{g'}\), then combining the two inequalities gives
\[
F_{S|g}(\tau_g)-F_{S|g'}(\tau_{g'})
=
\bigl(F_{S|g}(\tau_g)-(1-\alpha)\bigr)
+
\bigl((1-\alpha)-F_{S|g'}(\tau_{g'})\bigr)
\ge
\kappa_g(\lambda-\lambda_g)+\kappa_{g'}(\lambda_{g'}-\lambda).
\]
Because \(\lambda_g<\lambda<\lambda_{g'}\), both terms on the right-hand side are strictly positive, and hence the right-hand side is \(>0\). Therefore
\[
\max_{a,b\in\mathcal G}|F_{S|a}(\tau_a)-F_{S|b}(\tau_b)|
\ge
F_{S|g}(\tau_g)-F_{S|g'}(\tau_{g'})
>0,
\]
which proves that coverage cannot be equalized across groups under this equalized-size policy.
\end{proof}

\subsection*{Proof of Corollary~\ref{cor:length_disparity}}
\SizeDistortion*
\begin{proof}
Let \(g\) be fixed. By Assumption~\ref{assumption_5} and the fundamental theorem of calculus, we have
\[
\ell_g(q_g)-\ell_g(q)=\int_{q}^{q_g} \ell_g'(t)\,dt.
\]
Under Assumption~\ref{assumption_5}, \(|\ell_g'(t)|\ge v_g\) along the segment between \(q\) and \(q_g\), hence
\(|\ell_g(q_g)-\ell_g(q)|\ge v_g|q_g-q|\).
Squaring and averaging gives
\(
\mathbb{E}[(\ell_G(q_G)-\ell_G(q))^2]\ge \mathbb{E}[v_G^2 (q_G-q)^2].
\)
Taking square roots and using the definition of \(v_{\mathrm{eff}}(q)\) gives
\(
\sqrt{E[(\ell_G(q_G)-\ell_G(q))^2]}
\ge
v_{\mathrm{eff}}(q)\sqrt{E[(q_G-q)^2]}.
\)
Since \(E[(q_G-q)^2]\ge \mathrm{Var}(q_G)=\sigma_\Delta^2\), the desired bound follows.
\end{proof}
\subsection*{Proof of Corollary~\ref{cor:coverage_disparity}}
\CovDistortion*
\begin{proof}
We fix group \(g\). Since 
\(\ell_g(q_g)=\lambda_g\) by definition, we have 
\[
\lambda - \lambda_g = \ell_g(\tau_g) -\ell_g(q_g) = \int_{q_g}^{\tau_g}\ell_g'(t)dt\ .
\]
Therefore, \(|\lambda-\lambda_g| \le V_g|\tau_g-q_g|\), which implies
\[
|\tau_g-q_g| \ge \frac{|\lambda-\lambda_g|}{V_g}\ .
\]
Moreover,
\[
|F_{S|g}(\tau_g)-F_{S|g}(q_g)| = \Bigg|\int_{q_g}^{\tau_g} f_{S|g}(t)dt  \Bigg| \ge m_g |\tau_g-q_g| \ge m_g \frac{|\lambda-\lambda_g|}{V_g}.
\]
Squaring, averaging, taking square roots, and using the definition of
\(\kappa_{\mathrm{eff}}(\lambda)\) gives
\begin{equation}
\sqrt{E[(F_{S|G}(\tau_G)-F_{S|G}(q_G))^2]}
\ge
\kappa_{\mathrm{eff}}(\lambda)\sqrt{E[(\lambda-\lambda_G)^2]}.
\end{equation}
Since \(E[(\lambda-\lambda_G)^2]\ge \mathrm{Var}(\lambda_G)=\sigma_\lambda^2\), the desired bound follows.
\end{proof}
\subsection*{Proof of Lemma~\ref{lem:two_group_quantile_interval}}
\TwoGroupQuantileInterval*
\begin{proof}
Assume $q < q_0$ and prove by contradiction: Given $q < q_0$, we have $q < q_1$ since $q_0 < q_1$. According to the definition of $q_0$ and $q_1$, we must have $F_0(q) < 1 - \alpha$ and $F_1(q) < 1 - \alpha$, so
    \begin{equation*}
        F_{S}(q)=(1-p)F_0(q)+pF_1(q) < (1-p)(1 - \alpha) + p(1 - \alpha) = 1 - \alpha.
    \end{equation*}
    However, the above derivation of $F_{S}(q) < 1 - \alpha$ contradicts the definition of $q$, which requires $F_{S}(q) \ge 1 - \alpha$. Therefore, we must have $q \ge q_0$. Regarding $q_1$, we have
    \begin{equation*}
        F_{S}(q_1)=(1-p)F_0(q_1)+pF_1(q_1) \ge (1-p)(1 - \alpha) + p(1 - \alpha) = 1 - \alpha.
    \end{equation*}
    Thus, $q_1 \in \{t \in \mathbb{R}: F_{S}(t) \ge 1 - \alpha \}$. Because $q = \inf \{t \in \mathbb{R}: F_{S}(t) \ge 1 - \alpha \}$, we have $q \le q_1$. Together with $q \ge q_0$, we have $q \in [q_0, \, q_1]$.
\end{proof}
\subsection*{Proof of Theorem~\ref{thm:two_group_product}}
\TwoGroupProduct*
\begin{proof}
Let \(d_0:=q-q_0\ge 0\) and \(d_1:=q_1-q\ge 0\), so \(d_0+d_1=\Delta\).
By the fundamental theorem of calculus and Assumption~\ref{assumption_1},
\[
 \omega_o=F_0(q)-F_0(q_0)=\int_{q_0}^{q} f_0(t)\,dt \ge m_0 d_0,
\qquad
 \omega_u=F_1(q_1)-F_1(q)=\int_{q}^{q_1} f_1(t)\,dt \ge m_1 d_1.
\]
The relationship \eqref{eq:uo_coupling} gives \((1-p)m_0 d_0 \le (1-p)\omega_o = p \omega_u\), while \(\omega_u\ge m_1 d_1\).
A convenient way to combine the constraints is to express \(d_0\) in terms of \(\omega_u\):
\((1-p)\omega_o=p \omega_u\) and \(\omega_o\ge m_0 d_0\) imply \((1-p)m_0 d_0\le p \omega_u\), so
\(d_0\le \frac{p}{(1-p)m_0}\omega_u\).
Similarly, \(\omega_u\ge m_1 d_1\) implies \(d_1\le \frac{1}{m_1}\omega_u\).
Since \(\Delta=d_0+d_1\), we obtain
\[
\Delta \le \Big(\frac{1}{m_1}+\frac{p}{1-p}\frac{1}{m_0}\Big)\omega_u,
\]
which yields \eqref{eq:two_group_u_bound}.
Then \eqref{eq:two_group_o_bound} follows from \eqref{eq:uo_coupling}.
Multiplying \eqref{eq:two_group_u_bound} and \eqref{eq:two_group_o_bound} gives \eqref{eq:two_group_product}.
\end{proof}

\subsection*{Proof of Theorem~\ref{thm:multigroupproduct}}
\MultiGroupProduct*
\begin{proof}
For any group with \(q \ge q_g\), we have \(\varepsilon_g(q)\ge m_g(q-q_g)\). Similarly, for any group with \(q \le q_g\), we have \(\varepsilon_g(q) \le m_g (q-q_g)\). Therefore, we have
\begin{equation}
    \Omega_{o}(q) \ge \sum_g w_g(q-q_g)_+=:A_+(q) \qquad \Omega_{u}(q) \ge \sum_g w_g(q_g-q)_+=:A_-(q).
\end{equation}
At the crossing point \(\bar{q}_m\),
\begin{equation}
A_+(\bar{q}_m)=A_-(\bar{q}_m)=\frac{1}{2}\sum_gw_g|q_g-\bar{q}_m| = B_K.
\end{equation}
Since \(A_+(q)-A_-(q)=\sum_{g\in\mathcal{G}} w_g(q-q_g)\), the two functions cross at \(\bar{q}_m\). Moreover, \(A_+(q)\) is non-decreasing in \(q\) and \(A_-(q)\) is non-increasing in \(q\). Hence,  the minimum of \(\max\{A_+(q),A_-(q)\}\) is attained at the crossing point, which equals \(B_K\).
Thus, we obtain \(\max\{\Omega_{o}(q),\Omega_{u}(q)\}\ge B_K\). Furthermore, Theorem~\ref{thm:conservation} gives \(\Omega_{o}(q)-\Omega_{u}(q) = \delta(q)\). Since
\(\max\{\Omega_o(q),\Omega_u(q)\}-
\min\{\Omega_o(q),\Omega_u(q)\}=
|\delta(q)|,
\)
we also have
\(
\min\{\Omega_o(q),\Omega_u(q)\}
\ge
\bigl(B_K-|\delta(q)|\bigr)_+ .
\)
Therefore,
\begin{equation}
\Omega_o(q)\Omega_u(q)
\ge
B_K\bigl(B_K-|\delta(q)|\bigr)_+ .
\end{equation}
Under exact conservation, \(\delta(q)=0\), and hence
\begin{equation}
    \Omega_{o}(q) \ \Omega_{u}(q) \ge B_K^2.
\end{equation}
\end{proof}

\subsection*{Proof of Theorem~\ref{thm:product_heisenberg}}
\HeisenbergProduct*
\begin{proof}
Let \(Q:=q-q_G\).
For groups with \(q\ne q_g\), \eqref{eq:fbar} implies \(\varepsilon_G(q)=\bar f_{S|G}(q)\cdot Q\) almost surely.
Then
\(
\mathbb{E}[Q^2]=\mathbb{E}[\varepsilon_G(q)\cdot \frac{Q}{\bar f_{S|G}(q)}].
\)
Applying Cauchy--Schwarz yields
\(
\mathbb{E}[Q^2]^2
\le \mathbb{E}[\varepsilon_G(q)^2] \cdot \mathbb{E}[(\frac{Q}{\bar f_{S|G}(q)})^2].
\)
Taking square roots gives the first inequality in \eqref{eq:product_heisenberg}.
The second inequality is identical to the argument in Theorem~\ref{thm:heisenberg_L2}.
\end{proof}
\FloatBarrier
\section{Additional Synthetic Experimental Details}
\label{app:additional-experiments}

\subsection{Synthetic Simulations}
\label{app:synthetic}

In this experiment, we directly synthesize nonconformity scores, rather than computing them from data and a model. For each group, we estimate the split-conformal threshold by the empirical \(1-\alpha\) quantile of the calibration scores. The pooled threshold \(\hat q\) is obtained by concatenating all calibration scores across subgroups and applying the same quantile rule. Empirical coverage quantities are then evaluated on an independent test sample, so the observed distortion reflects finite-sample threshold estimation.

Figure~\ref{fig:multigroup-app} provides additional pooled-threshold validation.  Across the two-group sweep and the four multi-group families, the empirical RMS miscoverage increases with heterogeneity and remains above the oracle scale \(m_{\mathrm{eff}}(q)\sigma_\Delta\). Table~\ref{tab:summary} reports the empirical-oracle comparison at the largest heterogeneity value in each setting. Table~\ref{tab:setup_pooled} provides details about the distributions of the simulations.

\begin{figure}[t]
    \centering
    \includegraphics[width=\linewidth,trim={10 3 10 6},clip]{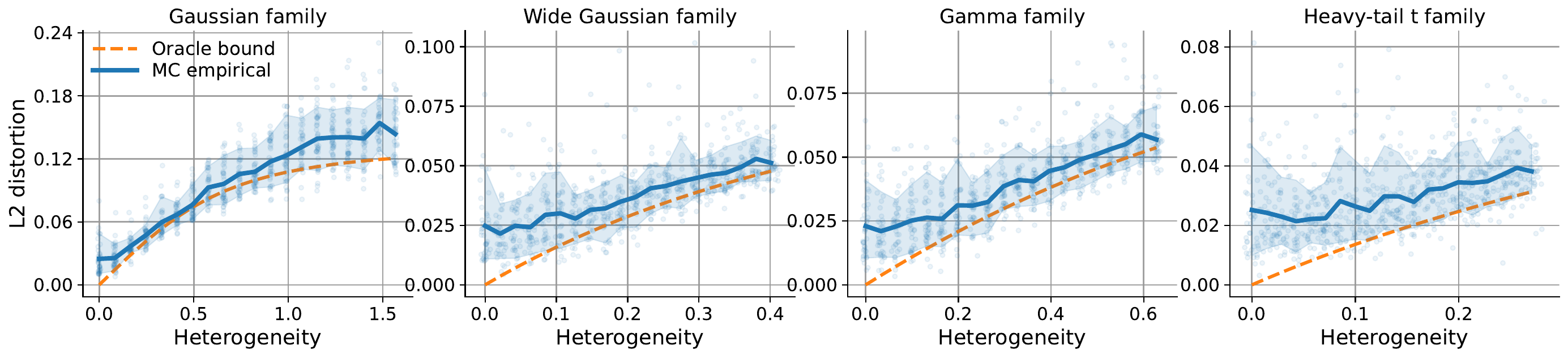}
    
    \caption{Four multi-group families. Across all four score families, the empirical RMS miscoverage increases with heterogeneity and remains above the lower-bound scale from Theorem~\ref{thm:heisenberg_L2}.}
    \label{fig:multigroup-app}
\end{figure}

For the set-size based experiments, we use the monotone linear proxy \(\ell_g(t)=a_g+b_gt\), with \(a_g=2+0.15(g-1)\) and \(b_g=0.8+0.1(g-1)\) after labeling the group index as \(g=1,\dots,K\). In the size-to-coverage experiment, the common target size is the weighted average of the coverage-calibrated group sizes, namely \(\hat\lambda=\sum_g p_g \ell_g(\hat q_g)\), and the equalized-size threshold \(\hat\tau_g\) solves \(\ell_g(\hat\tau_g)=\hat\lambda\).

\begin{table}
    \centering
    \small
    \caption{Summary of empirical-oracle comparison at the largest heterogeneity value in each setting (means over \(40\) seeds). Ratios are computed from unrounded means.}
    \label{tab:summary}
    \setlength{\tabcolsep}{4.2pt}
    \begin{tabular}{lccc}
    \toprule
    Settings & Empirical & Oracle & Ratio \\
    \midrule
    Two-group Gaussian & 0.097 & 0.085 & 1.141 \\
    Four-group Gaussian & 0.144 & 0.121 & 1.190 \\
    Four-group wide Gaussian & 0.051 & 0.048 & 1.063 \\
    Four-group Gamma & 0.057 & 0.054 & 1.056 \\
    Four-group heavy-tail $t$ & 0.038 & 0.031 & 1.226 \\
    Coverage$\to$size  & 2.860 & 2.127 & 1.345 \\
    Size$\to$coverage  & 0.352 & 0.347 & 1.014 \\
    \bottomrule
    \end{tabular}
\end{table}

\begin{table*}[t]
\centering
\small
\caption{Simulation setups for the pooled-threshold experiments in \Cref{fig:two-group-app} and \Cref{fig:multigroup-app}. All groups have equal weight \(p_g=1/K\), all experiments use \(\alpha=0.1\), and the empirical miscoverage is compared against the theoretical scale \(m_{\mathrm{eff}}(q)\sigma_\Delta\).}
\label{tab:setup_pooled}
\setlength{\tabcolsep}{4pt}
\begin{tabular}{@{}p{0.30\textwidth}p{0.40\textwidth}c@{}}
\hline
\textbf{Setting} & \textbf{Score family} & \textbf{Cal/Test} \\
\hline
Two-group Gaussian shift
& \(S \mid G=g \sim \mathcal{N}(\mu_g,1)\), with \((\mu_0,\mu_1)=(0,\delta)\) and \(\delta \in [0,2]\)
& 50/500 \\

Four-group Gaussian shift
& \(S \mid G=g \sim \mathcal{N}(\mu_g,1)\), with \(\mu_g=s(-1.5,-0.5,0.5,1.5)\) and \(s \in [0,1.4]\)
& 50/500 \\

Four-group wide Gaussian
& \(S \mid G=g \sim \mathcal{N}(0,\sigma_g^2)\), with \(\sigma_g=1+s(0,0.2,0.4,0.6)\) and \(s \in [0,1.4]\)
& 50/500 \\

Four-group Gamma
& \(S \mid G=g \sim \mathrm{Gamma}(4,\theta_g)\), with \(\theta_g=0.55+s(0,0.06,0.12,0.18)\) and \(s \in [0,1.4]\)
& 50/500 \\

Four-group heavy-tail \(t\)
& \(S \mid G=g \sim t_6(0,\sigma_g)\), with \(\sigma_g=0.9+s(0,0.12,0.24,0.36)\) and \(s \in [0,1.4]\)
& 50/500 \\
\hline
\end{tabular}
\end{table*}

\begin{table*}[t]
\centering
\small
\caption{Simulation setups for the group-adjusted trade-off experiments in Figure~\ref{fig:policy-conversion}. All groups have equal weight \(p_g=1/K\) and all experiments use \(\alpha=0.1\).}
\label{tab:setup_tradeoff}
\setlength{\tabcolsep}{4pt}
\begin{tabular}{@{}p{0.17\textwidth}p{0.30\textwidth}cp{0.24\textwidth}@{}}
\hline
\textbf{Setting} & \textbf{Score family} & \textbf{Cal/Test} & \textbf{Compared quantity} \\
\hline
Coverage vs. size
& Two-component asymmetric Gaussian mixtures with group-specific offsets, scales, and weights; \(s \in [0,1.8]\) scales group centers and increases quantile separation
& 25/100
& size change vs. \(v_{\mathrm{eff}}(q)\sigma_\Delta\) \\

Size vs. coverage
& Two-component mixtures of one Student-\(t\) component and one Gaussian component, with group-specific offsets, scales, and weights; \(s \in [0,2.0]\) scales group centers
& 25/100
& coverage change vs. \(\kappa_{\mathrm{eff}}(\lambda)\sigma_\lambda\) \\
\hline
\end{tabular}
\end{table*}

\subsection{Imbalance Bottleneck and Ratio Diagnostics}
\label{app:imbalance-diagnosis}

As an additional diagnostic beyond the balanced synthetic experiments, we perform pooled-threshold simulations in four imbalanced settings: Gaussian, wide Gaussian, gamma, and Student-\(t\) distributions. Each setting has four distributions of the same type but with imbalanced group masses \(p=(0.60,0.25,0.10,0.05)\). We use a total calibration budget of \(n_{\mathrm{cal}}=400\), \(4000\) test points per group, target level \(\alpha=0.1\), and \(40\) Monte Carlo seeds. Figure~\ref{fig:imbalance-diagnostics} shows that the RMS pooled distortion remains above the oracle scale of all four families, whereas the minority-group absolute gap grows more rapidly with heterogeneity. To verify that the bottleneck is indeed concentrated on the rarest group, Table~\ref{tab:minority-largest-by-sweep} reports how often the minority group is the largest-gap group for each heterogeneity level. The entries report counts out of 40 seeds for which the minority group attains the largest absolute gap. The frequency rises rapidly with heterogeneity and reaches \(40/40\) at the largest heterogeneity of \(2.0\). This pattern indicates a clear minority bottleneck under pooled calibration.

\begin{figure}[t]
    \centering
    \includegraphics[width=0.80\linewidth,trim={8 3 8 6},clip]{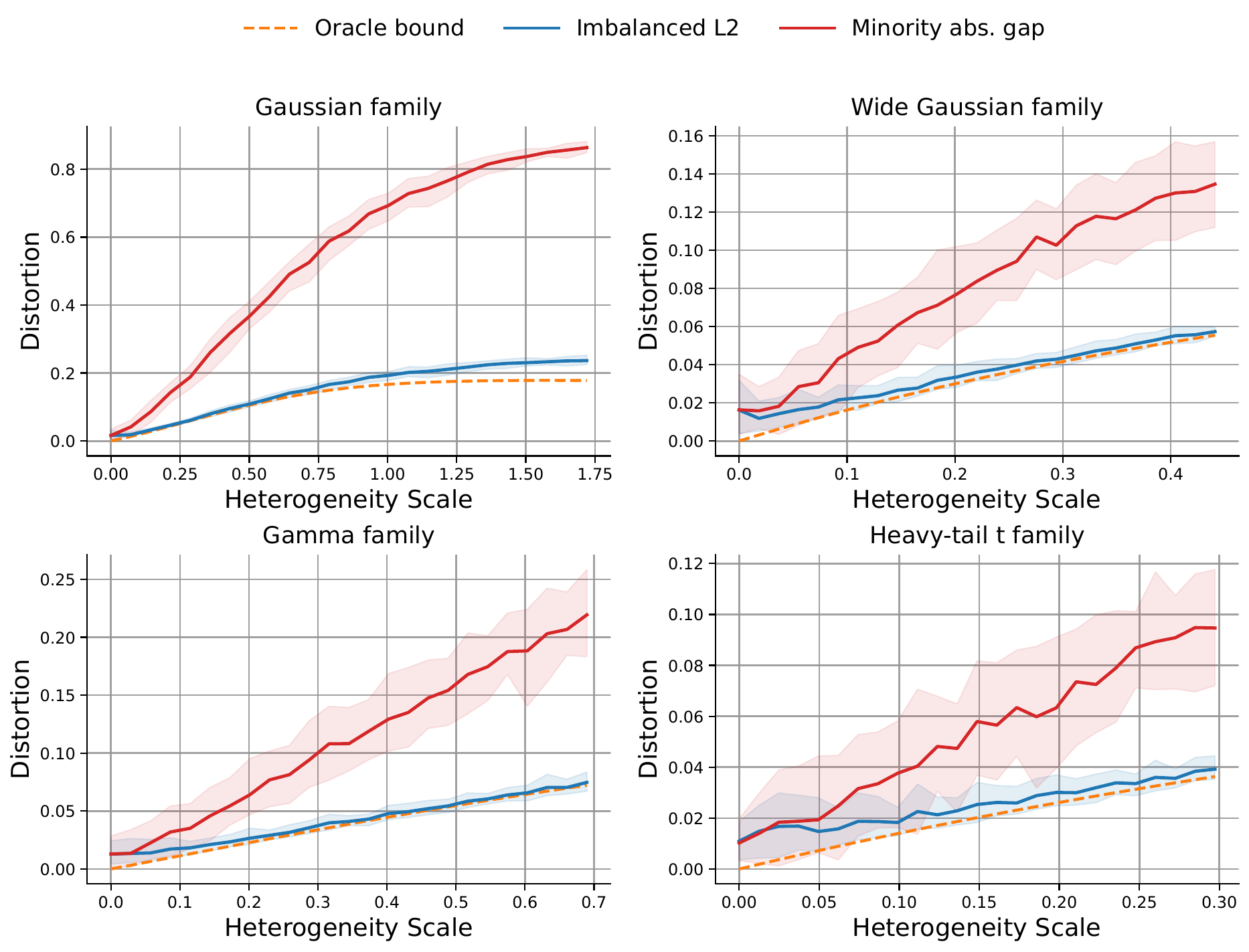}
    
    \caption{Imbalanced four-group pooled-threshold diagnostics. Across all four families, the weighted RMS distortion remains above the lower-bound scale from Theorem~\ref{thm:heisenberg_L2}. At the same time, the minority-group absolute gap grows more rapidly with heterogeneity, showing that pooled distortion can concentrate on rare groups.}
    \label{fig:imbalance-diagnostics}
\end{figure}

\begin{table}
    \centering
    \small
    \caption{Minority frequency across heterogeneity levels.}
    \label{tab:minority-largest-by-sweep}
    \begin{tabular}{lccccc}
    \toprule
    & \multicolumn{5}{c}{Heterogeneity level} \\
    \cmidrule(lr){2-6}
    Family & 0.0 & 0.5 & 1.0 & 1.5 & 2.0 \\
    \midrule
    Gaussian & 13/40 & 40/40 & 40/40 & 40/40 & 40/40 \\
    Wide Gaussian & 13/40 & 36/40 & 39/40 & 40/40 & 40/40 \\
    Gamma & 13/40 & 37/40 & 40/40 & 40/40 & 40/40 \\
    Heavy-tail $t$ & 9/40 & 30/40 & 39/40 & 37/40 & 40/40 \\
    \bottomrule
    \end{tabular}
\end{table}

Tables~\ref{tab:coverage-to-size-ratios} and~\ref{tab:size-to-coverage-ratios} record empirical and oracle ratios for the two policy-conversion diagnostics at selected heterogeneity levels. The coverage-to-size ratio remains above one throughout, with the finite-sample slack shrinking as \(n_{\mathrm{cal}}\) grows. The size-to-coverage ratio is tighter and concentrates near one much more quickly.

\begin{table}
    \centering
    \small
    \caption{Coverage-to-size empirical/oracle ratio at heterogeneity levels \(\sigma_\Delta\).}
    \label{tab:coverage-to-size-ratios}
    \begin{tabular*}{0.8\linewidth}{@{\extracolsep{\fill}}lccccc@{}}
    \toprule
    & \multicolumn{5}{c}{Selected heterogeneity level \(\sigma_\Delta\)} \\
    \cmidrule(lr){2-6}
    \(n_{\mathrm{cal}}\) & 0.378 & 0.855 & 1.250 & 1.652 & 2.192 \\
    \midrule
    12 & 1.837 & 1.417 & 1.327 & 1.334 & 1.392 \\
    25 & 1.443 & 1.174 & 1.243 & 1.244 & 1.296 \\
    50 & 1.284 & 1.174 & 1.178 & 1.232 & 1.287 \\
    100 & 1.226 & 1.087 & 1.172 & 1.227 & 1.305 \\
    200 & 1.086 & 1.085 & 1.161 & 1.210 & 1.298 \\
    400 & 1.034 & 1.068 & 1.139 & 1.218 & 1.293 \\
    \bottomrule
    \end{tabular*}
\end{table}
\begin{table}
    \centering
    \small
    \caption{Size-to-coverage empirical/oracle ratio at heterogeneity levels \(\sigma_\Delta\).}
    \label{tab:size-to-coverage-ratios}
    \begin{tabular*}{0.8\linewidth}{@{\extracolsep{\fill}}lccccc@{}}
    \toprule
    & \multicolumn{5}{c}{Selected heterogeneity level \(\sigma_\Delta\)} \\
    \cmidrule(lr){2-6}
    \(n_{\mathrm{cal}}\) & 0.437 & 0.741 & 1.201 & 1.862 & 2.369 \\
    \midrule
    12  & 1.313 & 1.181 & 1.028 & 1.087 & 1.038 \\
    25  & 1.095 & 1.025 & 1.009 & 0.990 & 1.012 \\
    50  & 0.981 & 0.994 & 0.997 & 1.001 & 0.994 \\
    100 & 1.002 & 0.997 & 0.987 & 1.003 & 1.002 \\
    200 & 1.019 & 1.009 & 1.003 & 0.999 & 1.004 \\
    400 & 1.017 & 1.002 & 0.996 & 1.004 & 1.004 \\
    \bottomrule
    \end{tabular*}
\end{table}

\section{\texttt{Bias in Bios} Experiments}
\label{app:biobias}
\subsection{Experimental Settings}
We perform conformal analysis on outputs from a \texttt{DistilBERT}\footnote{We use the Hugging Face checkpoint \texttt{distilbert-base-uncased}; model card: \url{https://huggingface.co/distilbert/distilbert-base-uncased}. The model card lists the license as Apache-2.0.} classifier trained on the ten most frequent professions from \texttt{Bias in Bios}\footnote{We use the Hugging Face dataset \texttt{LabHC/bias\_in\_bios}; dataset card: \url{https://huggingface.co/datasets/LabHC/bias_in_bios}. The dataset card lists the license as MIT. For the original dataset source, see ~\citep{dearteaga2019biasinbios}.}.
Training uses cleaned, preprocessed text with a maximum sequence length of 160, a learning rate of \(2\times 10^{-5}\), weight decay of \(0.01\), batch sizes of \(64/32\) for training/validation, warm-up ratio \(0.05\), and two training epochs. The validation accuracy and macro-F1 score are \(0.8886\) and \(0.8569\), respectively. The dataset is partitioned into training, validation, calibration, and test splits of sizes \(185758\), \(20640\), \(31764\), and \(79397\), with stable group balance across splits. Details are provided in Table~\ref{tab:split-summary}. Throughout, we use the simple score in the main presentation, while SAPS and RAPS are included to show that the same conclusions are not specific to a single score construction. Detailed per-group values for the simple score are shown in Table~\ref{tab:biobias-main}.
\begin{table}[htbp]
\centering
\small
\caption{Summary for \texttt{Bias in Bios} at $\alpha=0.10$ for the simple score.}
\label{tab:biobias-main}
\begin{tabular}{lcc}
\toprule
Quantity & Group (Male) & Group (Female) \\
\midrule
$p_g$ & 0.5275 & 0.4725 \\
$q_g$ & 0.6298 & 0.6531 \\
${\varepsilon}_g(q)$ & 0.0020 & -0.0009 \\
$\lambda_g=\ell_g(q_g)$ & 1.0140 & 1.0231 \\
$\ell_g(q_g)-\ell_g(q)$ & -0.0050 & 0.0053 \\
$\tau_g$ & 0.6410 & 0.6430 \\
$\hat{F}_{S\mid g}(\tau_g)-\hat{F}_{S\mid g}(q_g)$ & 0.0016 & -0.0018 \\
\bottomrule
\end{tabular}
\end{table}

\begin{table}[t]
\centering
\small
\caption{Split sizes and group composition for \texttt{Bias in Bios}.}
\label{tab:split-summary}
\begin{tabular*}{0.90\linewidth}{@{\extracolsep{\fill}}lccc@{}}
\toprule
Split & Total \(n\) & Group 0 (Male) & Group 1 (Female) \\
\midrule
Model train & 185,758 & 97,985 (52.75\%) & 87,773 (47.25\%) \\
Model validation   & 20,640  & 10,886 (52.74\%) & 9,754 (47.26\%) \\
Calibration & 31,764  & 16,755 (52.75\%) & 15,009 (47.25\%) \\
Test        & 79,397  & 41,881 (52.75\%) & 37,516 (47.25\%) \\
\bottomrule
\end{tabular*}
\end{table}

\subsection{Robustness across Target Coverage}
We vary the target miscoverage level over \(\alpha\in\{0.05, 0.07, 0.085, 0.10\}\) and keep the rest of the pipeline unchanged across three score families. The simple score gives the baseline pattern, while the same trade-off is preserved under SAPS and RAPS. Tables~\ref{tab:alpha-robustness}, \ref{tab:alpha-robustness-saps} and \ref{tab:alpha-robustness-raps} report the \(\alpha\)-robustness summaries for simple, SAPS and RAPS.

\begin{table}[t]
    \centering
    \small
    \caption{\(\alpha\)-robustness for the simple score.}
    \label{tab:alpha-robustness}
    \begin{tabular*}{0.80\linewidth}{@{\extracolsep{\fill}}ccccc@{}}
    \toprule
    \(\alpha\) & \(\sigma_\Delta\) & \shortstack[c]{RMS pooled\\coverage} & \shortstack[c]{RMS size deviation\\from \(q_g\)} & \shortstack[c]{RMS coverage\\after equalized size} \\
    \midrule
    0.050 & \(4.60 \times 10^{-5}\) & 0.0021 & 0.0002 & 0.0006 \\
    0.070 & 0.0019 & 0.0005 & 0.0017 & 0.0014 \\
    0.085 & 0.0054 & 0.0005 & 0.0035 & 0.0016 \\
    0.100 & 0.0116 & 0.0015 & 0.0051 & 0.0017 \\
    \bottomrule
    \end{tabular*}
\end{table}

\begin{table}[t]
    \centering
    \small
    \caption{\(\alpha\)-robustness for SAPS.}
    \label{tab:alpha-robustness-saps}
    \begin{tabular*}{0.80\linewidth}{@{\extracolsep{\fill}}ccccc@{}}
    \toprule
    \(\alpha\) & \(\sigma_\Delta\) & \shortstack[c]{RMS pooled\\coverage} & \shortstack[c]{RMS size deviation\\from \(q_g\)} & \shortstack[c]{RMS coverage\\after equalized size} \\
    \midrule
    0.050 & 0.0039 & 0.0011 & 0.0047 & 0.0005 \\
    0.070 & 0.0022 & 0.0009 & 0.0019 & 0.0008 \\
    0.085 & 0.0015 & 0.0007 & 0.0009 & 0.0013 \\
    0.100 & 0.0162 & 0.0011 & 0.0049 & 0.0018 \\
    \bottomrule
    \end{tabular*}
\end{table}

\begin{table}[t]
    \centering
    \small
    \caption{\(\alpha\)-robustness for RAPS.}
    \label{tab:alpha-robustness-raps}
    \begin{tabular*}{0.80\linewidth}{@{\extracolsep{\fill}}ccccc@{}}
    \toprule
    \(\alpha\) & \(\sigma_\Delta\) & \shortstack[c]{RMS pooled\\coverage} & \shortstack[c]{RMS size deviation\\from \(q_g\)} & \shortstack[c]{RMS coverage\\after equalized size} \\
    \midrule
    0.050 & 0.0055 & 0.0018 & 0.0121 & 0.0002 \\
    0.070 & 0.0005 & 0.0013 & 0.0008 & 0.0013 \\
    0.085 & 0.0007 & 0.0003 & 0.0006 & 0.0006 \\
    0.100 & 0.0276 & 0.0010 & 0.0083 & 0.0019 \\
    \bottomrule
    \end{tabular*}
\end{table}

\subsection{Controlled Group Temperature Sweep}
In this section, we construct a controlled heterogeneity view at the fixed target level \(\alpha=0.1\) by temperature scaling the class probability vector of group~1 through a specific temperature \(T\in\{1.00,1.10,1.25,1.50,1.75,2.00\}\) and renormalizing \(p^{1/T}\). Adjusting the temperature parameter amplifies the same mechanism shown at the baseline. Parallel to the previous section, we present the simple, SAPS, and RAPS scores under temperature sweep, showing a score-robust conversion pattern in Tables~\ref{tab:temperature-sweep}, \ref{tab:temperature-sweep-saps}, and \ref{tab:temperature-sweep-raps}.

\begin{table}[t]
    \centering
    \small
    \caption{\texttt{Bias in Bios} temperature sweep for the simple score at \(\alpha=0.10\).}
    \label{tab:temperature-sweep}
    \begin{tabular*}{0.80\linewidth}{@{\extracolsep{\fill}}ccccc@{}}
    \toprule
    \(T\) & \(\sigma_\Delta\) & \shortstack[c]{RMS pooled\\coverage} & \shortstack[c]{RMS size\\from \(q_g\)} & \shortstack[c]{RMS coverage\\after equalized size} \\
    \midrule
    1.00 & 0.0116 & 0.0015 & 0.0051 & 0.0017 \\
    1.10 & 0.0136 & 0.0018 & 0.0061 & 0.0015 \\
    1.25 & 0.0169 & 0.0027 & 0.0083 & 0.0015 \\
    1.50 & 0.0241 & 0.0044 & 0.0126 & 0.0015 \\
    1.75 & 0.0346 & 0.0067 & 0.0205 & 0.0022 \\
    2.00 & 0.0432 & 0.0088 & 0.0275 & 0.0028 \\
    \bottomrule
    \end{tabular*}
\end{table}

\begin{table}[t]
    \centering
    \small
    \caption{\texttt{Bias in Bios} temperature sweep for SAPS at \(\alpha=0.10\).}
    \label{tab:temperature-sweep-saps}
    \begin{tabular*}{0.80\linewidth}{@{\extracolsep{\fill}}ccccc@{}}
    \toprule
    \(T\) & \(\sigma_\Delta\) & \shortstack[c]{RMS pooled\\coverage} & \shortstack[c]{RMS size\\from \(q_g\)} & \shortstack[c]{RMS coverage\\after equalized size} \\
    \midrule
    1.00 & 0.0162 & 0.0011 & 0.0049 & 0.0018 \\
    1.10 & 0.0058 & 0.0009 & 0.0018 & 0.0020 \\
    1.25 & 0.0086 & 0.0015 & 0.0024 & 0.0020 \\
    1.50 & 0.0315 & 0.0028 & 0.0077 & 0.0067 \\
    1.75 & 0.0396 & 0.0056 & 0.0128 & 0.0063 \\
    2.00 & 0.0489 & 0.0107 & 0.0226 & 0.0107 \\
    \bottomrule
    \end{tabular*}
\end{table}

\begin{table}[t]
    \centering
    \small
    \caption{\texttt{Bias in Bios} temperature sweep for RAPS at \(\alpha=0.10\).}
    \label{tab:temperature-sweep-raps}
    \begin{tabular*}{0.80\linewidth}{@{\extracolsep{\fill}}ccccc@{}}
    \toprule
    \(T\) & \(\sigma_\Delta\) & \shortstack[c]{RMS pooled\\coverage} & \shortstack[c]{RMS size\\from \(q_g\)} & \shortstack[c]{RMS coverage\\after equalized size} \\
    \midrule
    1.00 & 0.0276 & 0.0010 & 0.0083 & 0.0019 \\
    1.10 & 0.0121 & 0.0004 & 0.0034 & 0.0020 \\
    1.25 & 0.0080 & 0.0013 & 0.0024 & 0.0020 \\
    1.50 & 0.0355 & 0.0033 & 0.0098 & 0.0022 \\
    1.75 & 0.0584 & 0.0045 & 0.0150 & 0.0086 \\
    2.00 & 0.0683 & 0.0093 & 0.0248 & 0.0076 \\
    \bottomrule
    \end{tabular*}
\end{table}

\subsection{Robustness to Alternative Scores}
The same conversion mechanism is also visible under the SAPS and RAPS scores. Table~\ref{tab:alt-scores} shows that all three scores exhibit the same pattern: pooled threshold coverage discrepancy, nonzero set size distortion after applying the group-wise thresholds \(q_g\), and nonzero coverage distortion after imposing an equalized set size.
\begin{table}[t]
\centering
\small
\caption{\texttt{Bias in Bios} comparison across three scores at \(\alpha=0.10\).}
\label{tab:alt-scores}
\begin{tabular*}{0.80\linewidth}{@{\extracolsep{\fill}}lccccc@{}}
\toprule
Score & \(q\) & \(\sigma_\Delta\) & \shortstack[c]{RMS pooled\\coverage} & \shortstack[c]{RMS size\\from \(q_g\)} & \shortstack[c]{RMS coverage\\after equalized size} \\
\midrule
Simple & 0.6423 & 0.0116 & 0.0015 & 0.0051 & 0.0017 \\
SAPS   & 0.5777 & 0.0162 & 0.0011 & 0.0049 & 0.0018 \\
RAPS   & 0.6297 & 0.0276 & 0.0010 & 0.0083 & 0.0019 \\
\bottomrule
\end{tabular*}
\end{table}

\begin{figure}
    \centering
    \includegraphics[width=1.0\linewidth,trim={8 4 8 7},clip]{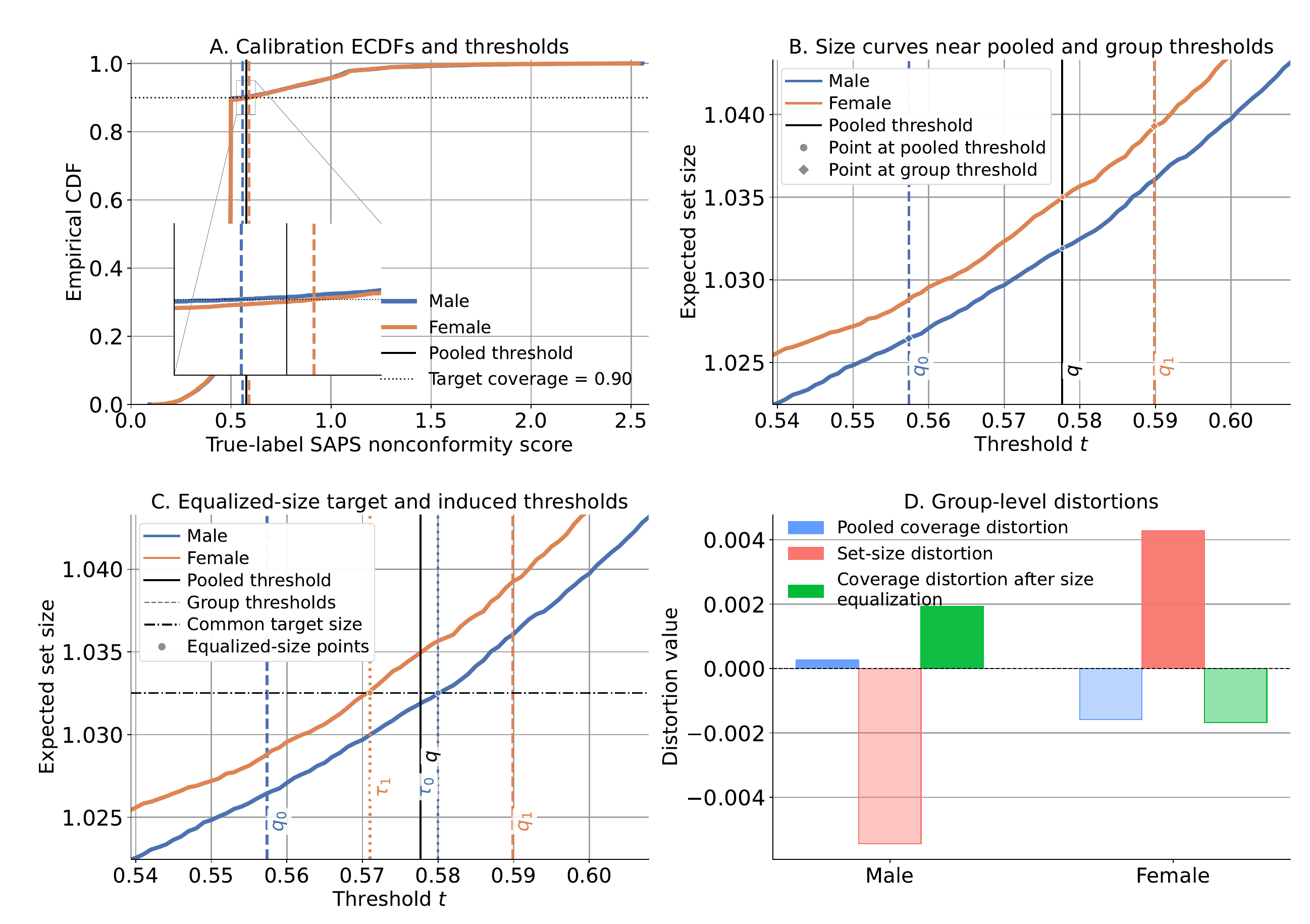}
    \caption{\texttt{Bias in Bios} mechanism view at \(\alpha = 0.10\) for SAPS score. Panel A illustrates the pooled-threshold mechanism in Theorem~\ref{thm:conservation}; Panels B--C illustrate Theorems~\ref{thm:3}--~\ref{thm:size_to_cov_disparity} and  Corollaries~\ref{cor:length_disparity}--\ref{cor:coverage_disparity}; Panel D summarizes the three distortions for male and female groups.}
    \label{fig:biobias-saps}
    \end{figure}
    
    \begin{figure}
    \centering
    \includegraphics[width=1.0\linewidth,trim={8 4 8 7},clip]{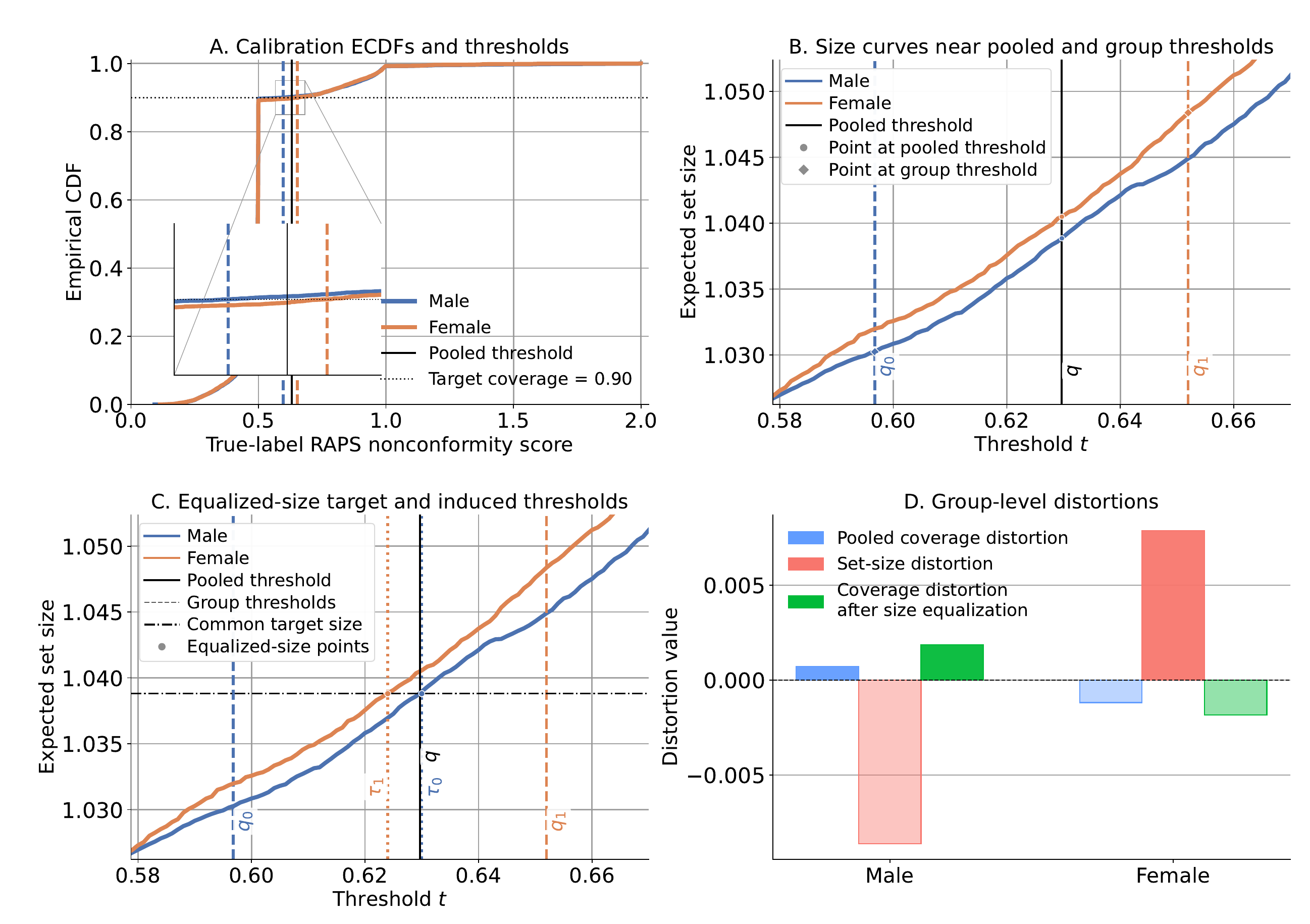}
    \caption{\texttt{Bias in Bios} mechanism view at \(\alpha = 0.10\) for RAPS score. Panel A illustrates the pooled-threshold mechanism in Theorem~\ref{thm:conservation}; Panels B--C illustrate Theorems~\ref{thm:3}--~\ref{thm:size_to_cov_disparity} and  Corollaries~\ref{cor:length_disparity}--\ref{cor:coverage_disparity}; Panel D summarizes the three distortions for male and female groups.}
    \label{fig:biobias-raps}
\end{figure}

\subsection{Finite Calibration of the Pooled-Threshold Floor}
\label{app:biobias-detectability}
This section complements the population uncertainty relations in Section~\ref{sec3:UR}, where the pooled-threshold floor obeys
\begin{equation}
\label{eq:app-pop-floor}
\|\varepsilon(q)\|_{L^2(p)} \ge m_{\mathrm{eff}}(q)\,\sigma_\Delta,
\end{equation}
where \(q\) is the pooled threshold, \(q_g\) are the group-specific \(1-\alpha\) quantiles, and \(\sigma_\Delta=\mathrm{sd}(q_G)\). The question here is finite-sample rather than population-level: how large must the calibration split be before the structural floor in \Cref{eq:app-pop-floor} becomes empirically resolvable under split conformal calibration? 

We fix the trained model outputs and vary only the calibration size. For each score choice (simple, SAPS, RAPS), we repeatedly form group-stratified subsamples of the original calibration split, recompute the pooled conformal threshold \(\widehat q_n\), and evaluate the resulting RMS group miscoverage on a fixed test set. Throughout, we set the target level \(\alpha=0.1\) and the number of groups \(K=2\), and we use \(300\) subsamples for each
\[
n\in\{200,500,1000,2000,4000,7000,15000,20000, 25000, 28000\}.
\]
Let \(q^{\star}\) denote the pooled threshold computed from the full test set, used as a population proxy. We then define
\[
\mathrm{floor}^{\star}
:= \Bigl(\sum_{g=1}^K p_g^{\mathrm{test}}\,\varepsilon_g(q^{\star})^2\Bigr)^{1/2},
\qquad
\widehat{\mathrm{floor}}(n)
:= \Bigl(\sum_{g=1}^K p_g^{\mathrm{test}}\,\varepsilon_g(\hat q_n)^2\Bigr)^{1/2},
\]
and summarize the signal-to-noise ratio by
\(
\mathrm{SNR}(n)
:= \frac{\mathrm{floor}^{\star}}{\operatorname{sd}(\widehat{\mathrm{floor}}(n))}.
\)
We interpret \(\mathrm{SNR}(n) \ge 1\) as the point where the structural floor is at least as large as the calibration-induced standard deviation.

For the empirical estimation, the stiffness factor \(m_{\mathrm{eff}}(q)\) is replaced by a local density proxy at \(q^{\star}\), estimated from the pooled test scores by a window
\[
\widehat {m_{\mathrm{eff}}}(q^{\star})
:= \frac{1}{2hn_{\mathrm{test}}}\sum_{i=1}^{n_{\mathrm{test}}}\mathbf 1\{|S_i-q^{\star}|\le h\},
\]
with a Silverman-type~\citep{silverman2018density} bandwidth
\(
h = 0.9\,\hat\sigma_{\mathrm{rob}}\,n_{\mathrm{test}}^{-1/5},
\
\hat\sigma_{\mathrm{rob}}=\min\!\bigl(\hat\sigma,\,\mathrm{IQR}/1.34\bigr),
\)
where \(n_{\mathrm{test}}\) is the test set size and \(\hat \sigma\) is the standard deviation of test scores. Then, the estimated stiffness factor \(\widehat {m_{\mathrm{eff}}}(q^{\star})\) serves as an empirical detectability proxy for the local CDF slope entering \Cref{eq:app-pop-floor}, not as an exact plug-in estimate of the group-specific quantity in Definition~\ref{def:meff}.

To motivate the detectability scale, we use a DKW-style empirical-CDF heuristic~\citep{dvoretzky1956asymptotic}, treating the
calibration scores within each group as IID. Under an infinitely exchangeable model, this IID step can also be read conditionally on the de Finetti directing measure~\citep{barber2024finetti}. This is stronger than the exchangeability assumption
needed for CP and is used only for the sample-size diagnostic below.
If \(n\) denotes the total calibration size and the groups are roughly balanced, each group has
about \(n/K\) calibration scores. A union bound over the \(K\) group-wise empirical CDFs gives a uniform fluctuation scale of order
\begin{equation}
\sqrt{\frac{\log(K/\xi)}{n/K}}=
\sqrt{\frac{K\log(K/\xi)}{n}}.
\end{equation}
After quantile inversion, the corresponding threshold-noise scale is of order
\(
\sqrt{\frac{K\log(K/\xi)}{n\,m_{\mathrm{eff}}(q)^2}} .
\)
Requiring this noise scale to be no larger than the intrinsic quantile separation scale \(\sigma_\Delta\) motivates
\begin{equation}
\label{eq:app-scaling}
n \gtrsim \frac{K\log(K/\xi)}{m_{\mathrm{eff}}(q)^2\sigma_\Delta^2},
\end{equation}

where \(K\) is the number of groups and \(\xi\) is a fixed confidence parameter. We choose \(K=2\) and \(\xi=0.05\) in the plots across the score choices.

Panels~A--B of Figure~\ref{fig:app-detectability-grid} show a clear signal-resolution transition. The mean empirical floor decreases monotonically with \(n\), while the detectability ratio rises steadily and crosses the \(\mathrm{SNR}(n)=1\) threshold first for the simple score, and on the \(1.5\times10^4\) scale for both SAPS and RAPS. The order matches \Cref{eq:app-scaling}. In particular, the score with the steepest local CDF near \(q\) requires the fewest calibration points to resolve the floor.

Panels~C--D of Figure~\ref{fig:app-detectability-grid} show that small calibration sets do not merely add variance. They also inflate  both the observed lower bound and the empirical heterogeneity estimate \(\widehat\sigma_\Delta\), so limited calibration size may exaggerate the lower bound.

Finally, let
\[
C_{\mathrm{det}}
:= \frac{n_{\mathrm{detect}}\,m_{\mathrm{eff}}(q)^2\sigma_\Delta^2}{{K\log(K/\xi)}},
\ \xi=0.05.
\]
Here, \(m_{\mathrm{eff}}(q)\) is the local density proxy at the pooled test-proxy threshold, \(\sigma_\Delta\) is the test-proxy heterogeneity of the group quantiles, \(\mathrm{floor}^{\star}\) is the true-floor proxy, and \(n_{\mathrm{detect}}\) is the smallest calibration size with \(\mathrm{SNR}(n)\ge 1\). Table~\ref{tab:app-detectability-summary} shows that \(C_{\mathrm{det}}\) 
is stable across all three scores up to a common multiplicative scale, which serves as a diagnostic of detectability.

In summary, Section~\ref{sec3:UR} identifies a population lower bound driven by intrinsic heterogeneity \(\sigma_\Delta\) and local stiffness \(m_{\mathrm{eff}}(q)\). The present experiments show that observing the lower bound in finite-sample split conformal calibration is itself a sample-size-related phenomenon. Empirically, the relevant resolution scale is well summarized by \(m_{\mathrm{eff}}(q)^2\sigma_\Delta^2\). That is, once \(n\) is large enough relative to the inverse of the resolution scale, the lower bound becomes detectable and the empirical curves stabilize near the population proxy.

\begin{table}[t]
    \centering
    \small
    \caption{Detectability summary at $\alpha=0.1$.}
    \label{tab:app-detectability-summary}
    \begin{tabular*}{0.80\linewidth}{@{\extracolsep{\fill}}lcccccc@{}}
    \toprule
    Score & $m_{\mathrm{eff}}(q)$ & $\sigma_\Delta$ & $\mathrm{floor}^{\star}$ & full-calibration floor & $n_{\mathrm{detect}}$ & $C_{\mathrm{det}}$ \\
    \midrule
    RAPS   & 0.0564 & 0.0139 & 0.0010 & 0.0010 & 15000 & 0.0013 \\
    SAPS   & 0.0672 & 0.0094 & 0.0009 & 0.0011 & 15000 & 0.0008 \\
    Simple & 0.1581 & 0.0094 & 0.0015 & 0.0016 & 7000  & 0.0021 \\
    \bottomrule
    \end{tabular*}
\end{table}

\begin{figure}[t]
    \centering
    \includegraphics[width=1.0\linewidth,trim={4 3 8 6},clip]{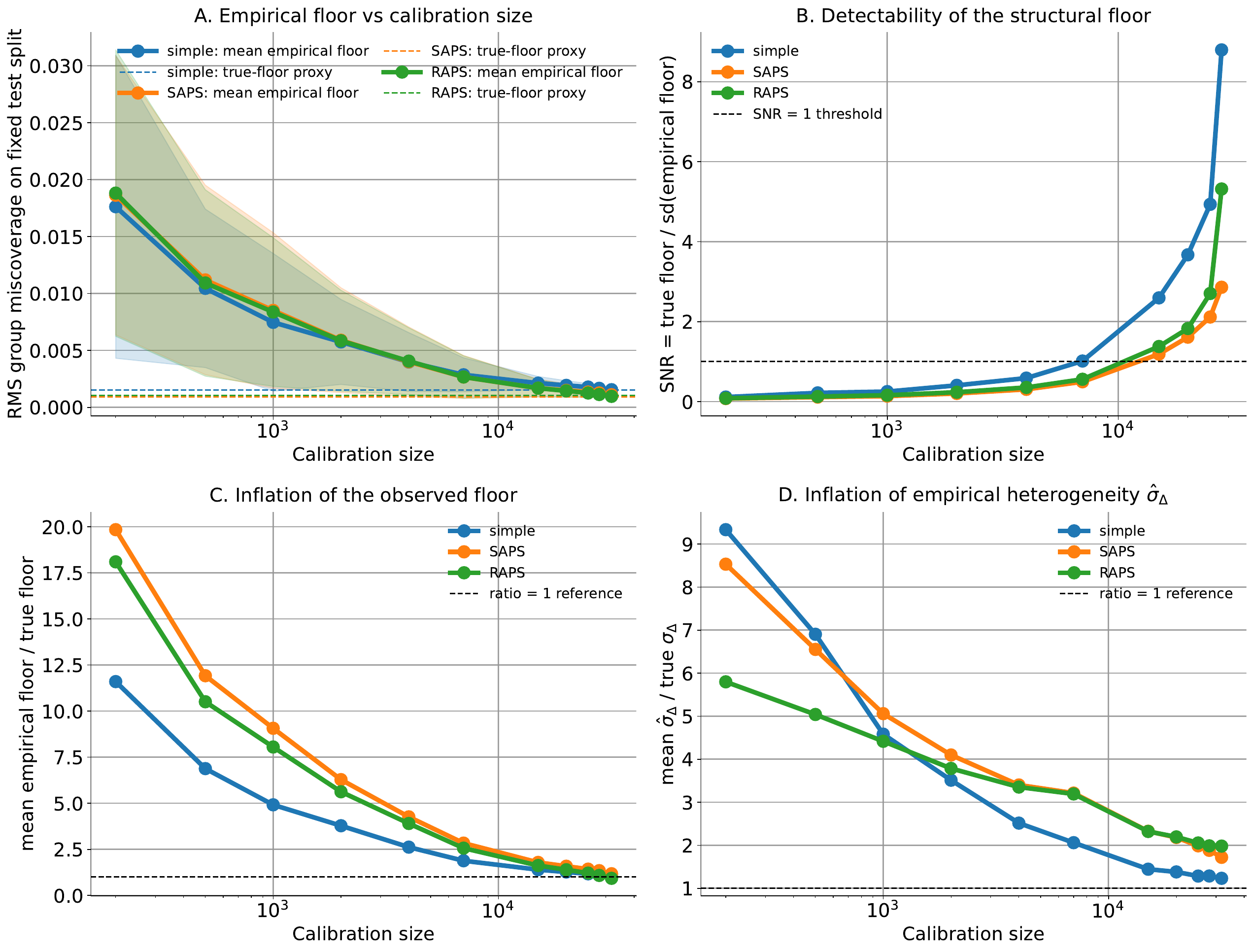}
    \caption{Finite-calibration detectability diagnostics of the pooled-threshold floor from Theorem~\ref{thm:heisenberg_L2} on \texttt{Bias in Bios}. 
    Panels A--B show how the empirical floor and its detectability improve with the calibration size. The dashed line at \(\mathrm{SNR}(n)=1\) marks the empirical detectability benchmark. Panels C--D show that small calibration splits can also inflate the observed floor and estimated heterogeneity.
    }
    \label{fig:app-detectability-grid}
\end{figure}
\FloatBarrier
\section{\texttt{MultiNLI} Experiments}
\label{app:multi}

\subsection{Experiment Settings}
All post-hoc quantities are defined exactly as in Section~\ref{sec:biobias-main} except for the dataset-specific details. We use the Hugging Face \texttt{MultiNLI} corpus\footnote{Hugging Face dataset: \texttt{nyu-mll/multi\_nli}; \url{https://huggingface.co/datasets/nyu-mll/multi_nli}. The dataset card states that most of the corpus is released under the OANC license, while the FICTION section includes Creative Commons Attribution 3.0, Creative Commons Share-Alike 3.0, and U.S. public-domain materials.}, treat the ten genres as groups, and train a \texttt{DistilBERT} classifier for three-class NLI. The run uses a maximum sequence length of \(256\), a learning rate of \(2 \times 10^{-5}\), a weight decay of \(0.01\), batch sizes of \(64/32\), a warm-up ratio of \(0.05\), and two epochs. The validation accuracy and macro-F1 are \(0.8104\) and \(0.8100\). The detailed split sizes and the summary of the key quantities are shown in Tables~\ref{tab:multi-splits} and \ref{tab:multi-primary-mechanism}. In addition, Table~\ref{tab:multi-groups} reports the full per-genre quantities at level \(\alpha=0.1\). The alternative score results for SAPS and RAPS are deferred to Section~\ref{app:multi-alt-scores}. 

\begin{table}[t]
    \centering
    \small
    \caption{Split sizes for \texttt{MultiNLI}.}
    \label{tab:multi-splits}
    \begin{tabular*}{0.37\linewidth}{@{\extracolsep{\fill}}lc}
\toprule
Split & $n$ \\
\midrule
Model train & 353,431 \\
Model validation & 39,271 \\
Calibration & 9,823 \\
Test & 9,824 \\
\bottomrule
\end{tabular*}
\end{table}

\begin{table}
    \centering
    \small
    \caption{Summary for \texttt{MultiNLI} at $\alpha=0.10$ with simple score.}
    \label{tab:multi-primary-mechanism}
    \begin{tabular*}{0.80\linewidth}{@{\extracolsep{\fill}}cccccc@{}}
\toprule
$\sigma_\Delta$ & \shortstack{RMS pooled\\coverage} & $\lambda$ & \shortstack{RMS size\\distortion} & $\sigma_\lambda$ & \shortstack{RMS coverage\\distortion} \\
\midrule
0.0354 & 0.0150 & 1.2717 & 0.0532 & 0.0779 & 0.0209 \\
\bottomrule
\end{tabular*}
\end{table}

\begin{table}
    \centering
    \small
    \setlength{\tabcolsep}{3.5pt}
    \caption{Per-genre summary for \texttt{MultiNLI} at \(\alpha=0.1\) with simple score.}
    \label{tab:multi-groups}
    \resizebox{\linewidth}{!}{
    \begin{tabular*}{0.90\linewidth}{@{\extracolsep{\fill}}lccccccc@{}}
\toprule
Genre & $p_g$ & $q_g$ & $\varepsilon_g(q)$ & $\lambda_g$ & $\lambda_g-\ell_g(q)$ & $\tau_g$ & $\hat F_{S\mid g}(\tau_g)-\hat F_{S\mid g}(q_g)$ \\
\midrule
Government & 0.0989 & 0.7215 & 0.0198 & 1.1420 & -0.0905 & 0.8200 & 0.0463 \\
Letters & 0.1006 & 0.7561 & 0.0119 & 1.1741 & -0.0304 & 0.8360 & 0.0192 \\
Travel & 0.1007 & 0.7747 & 0.0090 & 1.2285 & -0.0303 & 0.8030 & 0.0081 \\
Oup & 0.0998 & 0.7802 & 0.0082 & 1.2439 & -0.0173 & 0.8050 & 0.0051 \\
Verbatim & 0.0990 & 0.8151 & 0.0044 & 1.3464 & 0.0380 & 0.7730 & -0.0195 \\
Fiction & 0.1004 & 0.8371 & 0.0037 & 1.3398 & 0.0659 & 0.7910 & -0.0132 \\
Facetoface & 0.1006 & 0.8223 & 0.0008 & 1.3168 & 0.0506 & 0.7960 & -0.0101 \\
Telephone & 0.1001 & 0.7951 & -0.0150 & 1.3001 & 0.0020 & 0.7810 & -0.0061 \\
Nineeleven & 0.1005 & 0.7883 & -0.0155 & 1.2249 & -0.0091 & 0.8220 & 0.0132 \\
Slate & 0.0996 & 0.8410 & -0.0329 & 1.4008 & 0.0982 & 0.7750 & -0.0307 \\
\bottomrule
\end{tabular*}
    }
\end{table}

\subsection{Robustness across Target Coverage}
Figure~\ref{fig:multi-alpha} shows that across \(\alpha\in\{0.05,0.07,0.085,0.10\}\), the empirical pooled-threshold distortion stays above the estimated lower-bound scale, while the induced size and coverage distortions remain nonzero throughout.
\begin{figure}
    \centering
    \includegraphics[width=\linewidth,trim={6 3 8 6},clip]{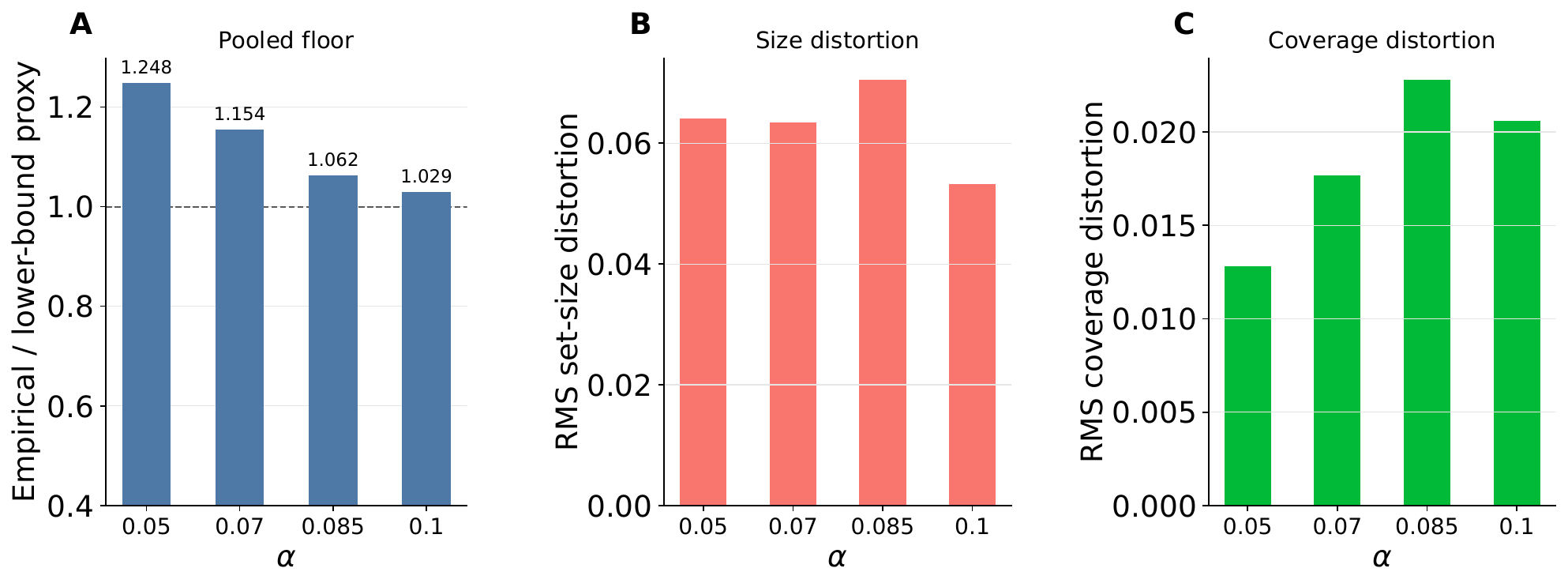}

    \caption{\texttt{MultiNLI} robustness across \(\alpha\) for the simple score. Panel A displays Theorem~\ref{thm:heisenberg_L2}: the empirical pooled-threshold distortion stays near or above the lower-bound scale across the tested \(\alpha\)-grid. Panels B--C illustrate Corollaries~\ref{cor:length_disparity}--\ref{cor:coverage_disparity}: the induced set-size and coverage distortions remain nonzero throughout.}
    \label{fig:multi-alpha}
\end{figure}

\subsection{Controlled Genre Temperature Sweep}
At \(\alpha=0.1\), we perturb only the \texttt{facetoface} genre via temperature scaling. Figure~\ref{fig:multi-temperature} shows that across the sweep, the empirical pooled-threshold distortion remains above the estimated lower bound, and the induced size and coverage distortions stay nonzero.
\begin{figure}
    \centering
    \includegraphics[width=\linewidth,trim={6 3 8 6},clip]{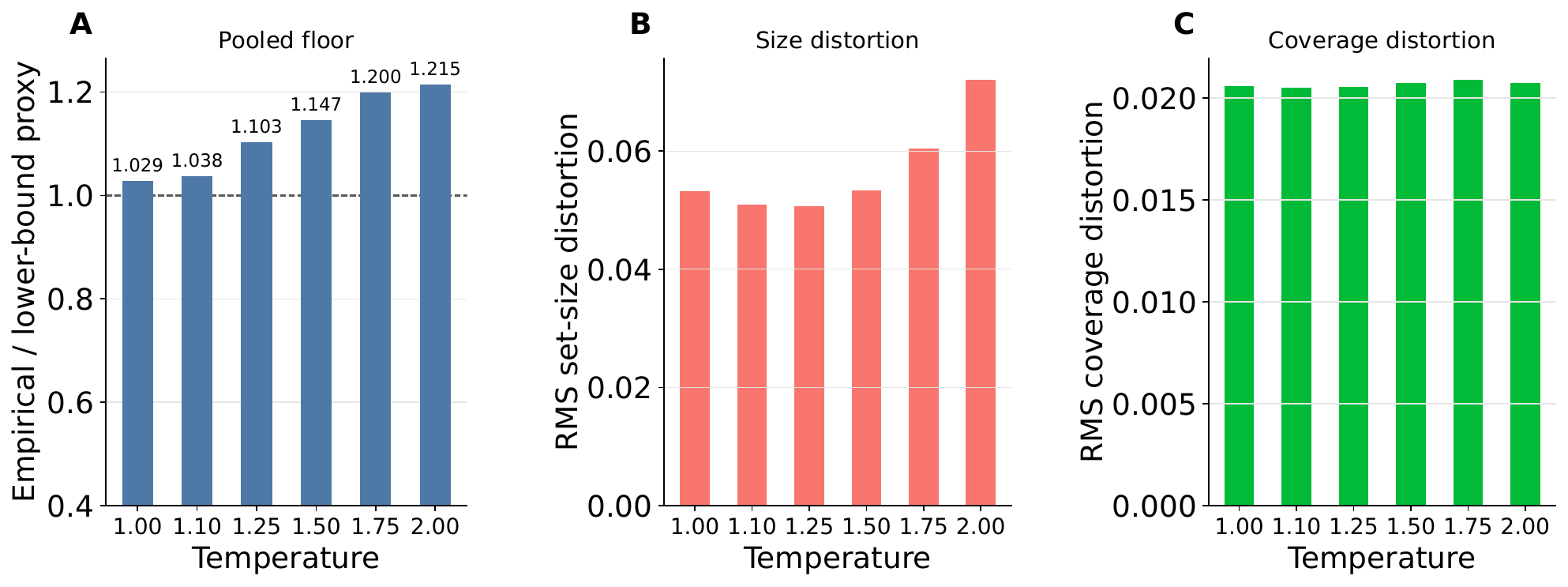}
    \caption{Controlled \texttt{MultiNLI} temperature sweep at $\alpha=0.10$ for the simple score, perturbing the \textit{facetoface} genre only. Panel~A corresponds to Theorem~\ref{thm:heisenberg_L2}. Panels B--C show Corollaries~\ref{cor:length_disparity}--\ref{cor:coverage_disparity}. The same trade-off mechanism remains visible across the temperature sweep.
    }
    \label{fig:multi-temperature}
\end{figure}

\subsection{Alternative Scores}
\label{app:multi-alt-scores}
The same mechanism persists under SAPS and RAPS with the model, groups, and split protocol fixed. SAPS uses the default temperature \(T=1.0\) and \(\lambda_{\mathrm{SAPS}}=0.3\); RAPS uses the default temperature \(T=0.6\), \(k_{\mathrm{reg}}=1\), and \(\lambda_{\mathrm{RAPS}}=0.02\). Both scores utilize the randomization level \(u=0.5\). Figure~\ref{fig:multi-alt-primary} demonstrates that at \(\alpha=0.1\), the SAPS score shows a pooled floor, a nonzero set size distortion after switching from pooled \(q\) to group-wise \(q_g\), and a nonzero coverage distortion after imposing the equalized expected set size. Robustness checks are provided in Figures~\ref{fig:multi-alt-alpha} and \ref{fig:multi-alt-temperature}. We do not view these results as requiring pointwise lower-bound dominance at every \(\alpha\) or temperature value. For SAPS and RAPS, Panel~A is best interpreted as a finite-sample diagnostic based on an estimated lower-bound proxy, while the main evidence remains visible.

\begin{figure}
    \centering
    \includegraphics[width=0.5\linewidth,trim={4 3 8 6},clip]{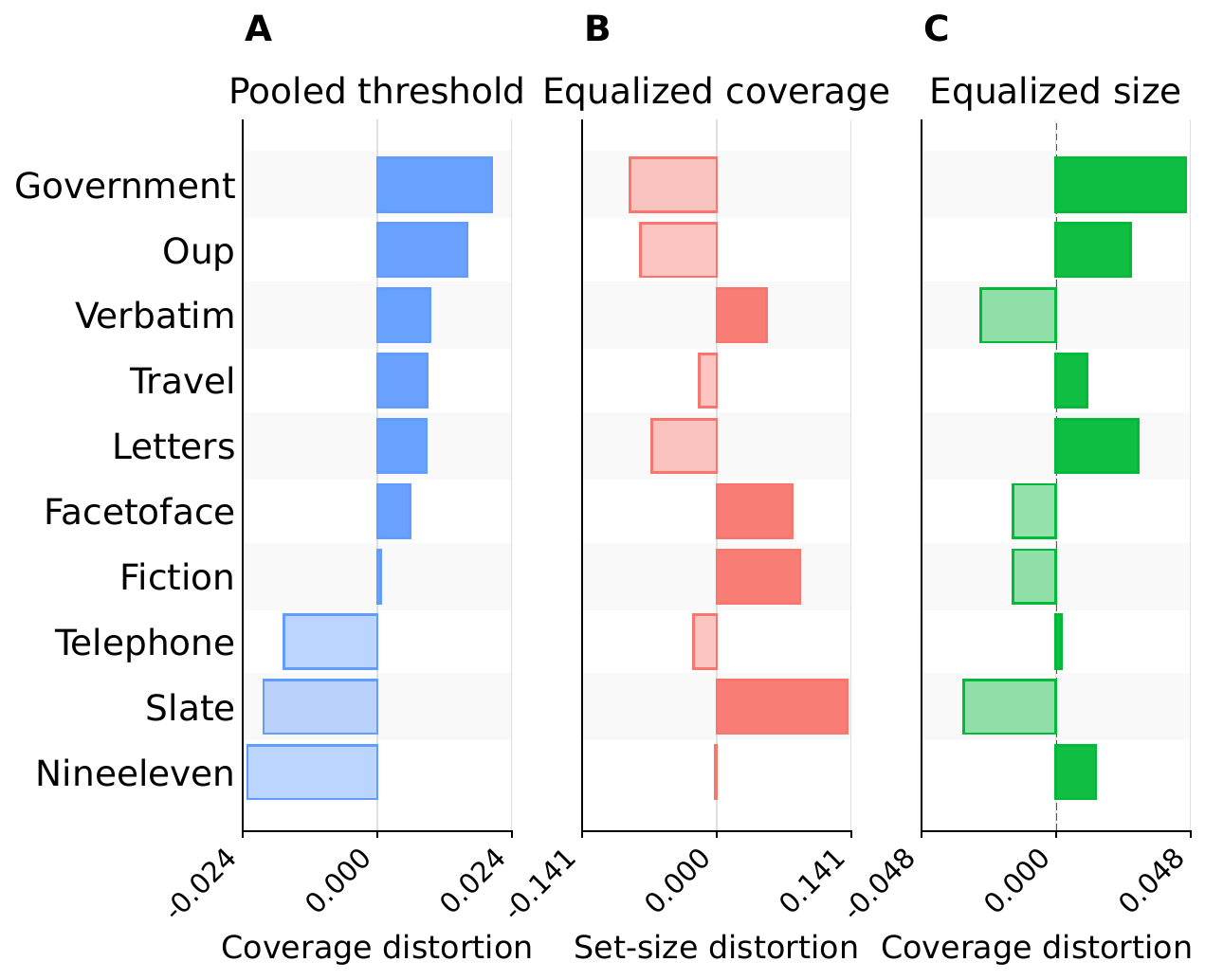}
    \caption{\texttt{MultiNLI} at \(\alpha = 0.10\) using SAPS score. For this score, Panel A shows pooled quantile consequence of Theorem~\ref{thm:conservation}; Panels B and C illustrate the set size distortion in Corollary~\ref{cor:length_disparity}, and the coverage distortion in Corollary~\ref{cor:coverage_disparity}, respectively.}
    \label{fig:multi-alt-primary}
\end{figure}

\begin{figure}
    \centering
    \includegraphics[width=\linewidth,trim={6 3 8 6},clip]{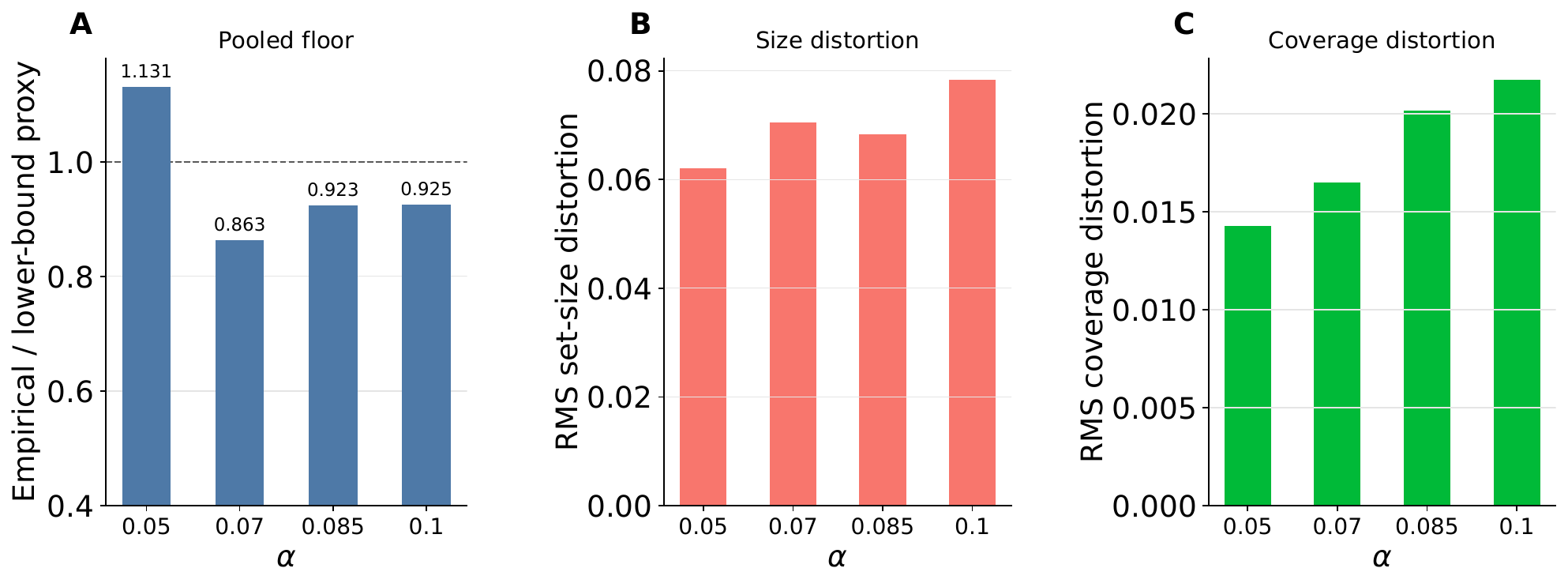}\\[-0.4em]
    \includegraphics[width=\linewidth,trim={6 3 8 6},clip]{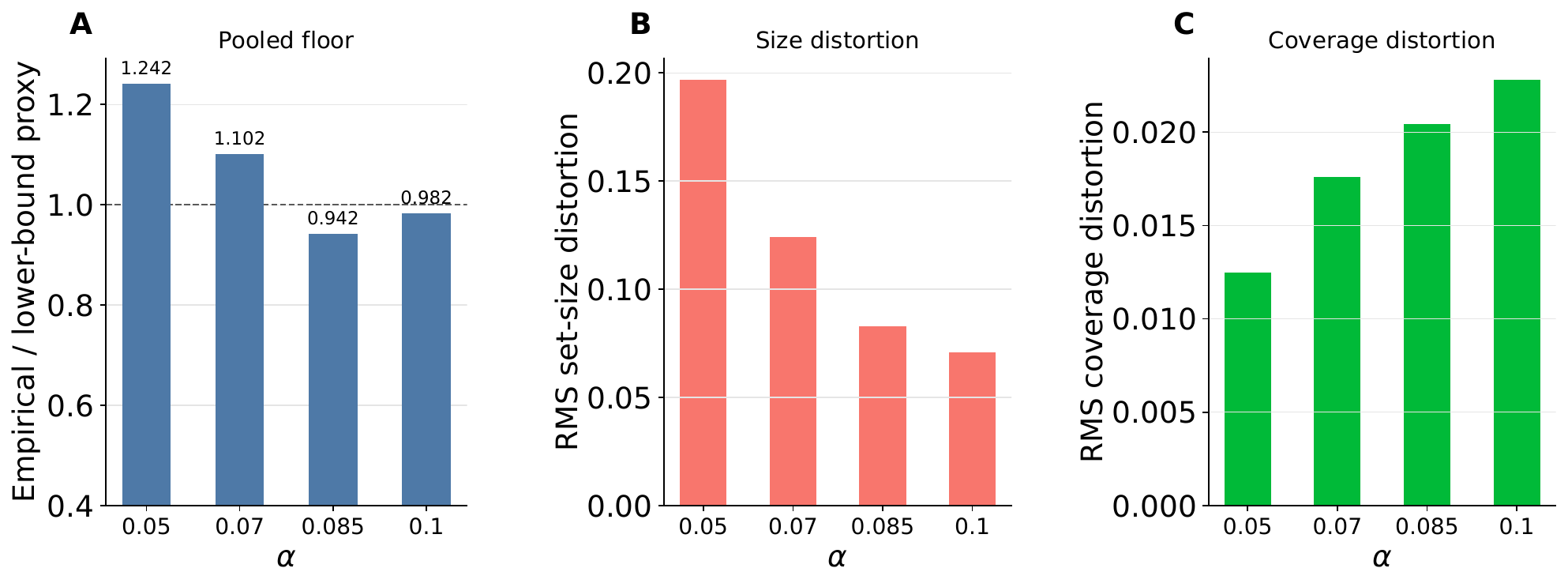}
    \caption{\texttt{MultiNLI} robustness across \(\alpha\) for SAPS (top) and RAPS (bottom) scores. In each row, Panel A is best read as a finite-sample diagnostic for Theorem~\ref{thm:heisenberg_L2} based on an estimated lower-bound proxy, rather than a pointwise lower-bound verification. Panels B--C show Corollaries~\ref{cor:length_disparity}--\ref{cor:coverage_disparity}. The induced set-size and coverage distortions remain visible across the tested \(\alpha\)-grid.}
    \label{fig:multi-alt-alpha}
\end{figure}

\begin{figure}
    \centering
    \includegraphics[width=\linewidth,trim={6 3 8 6},clip]{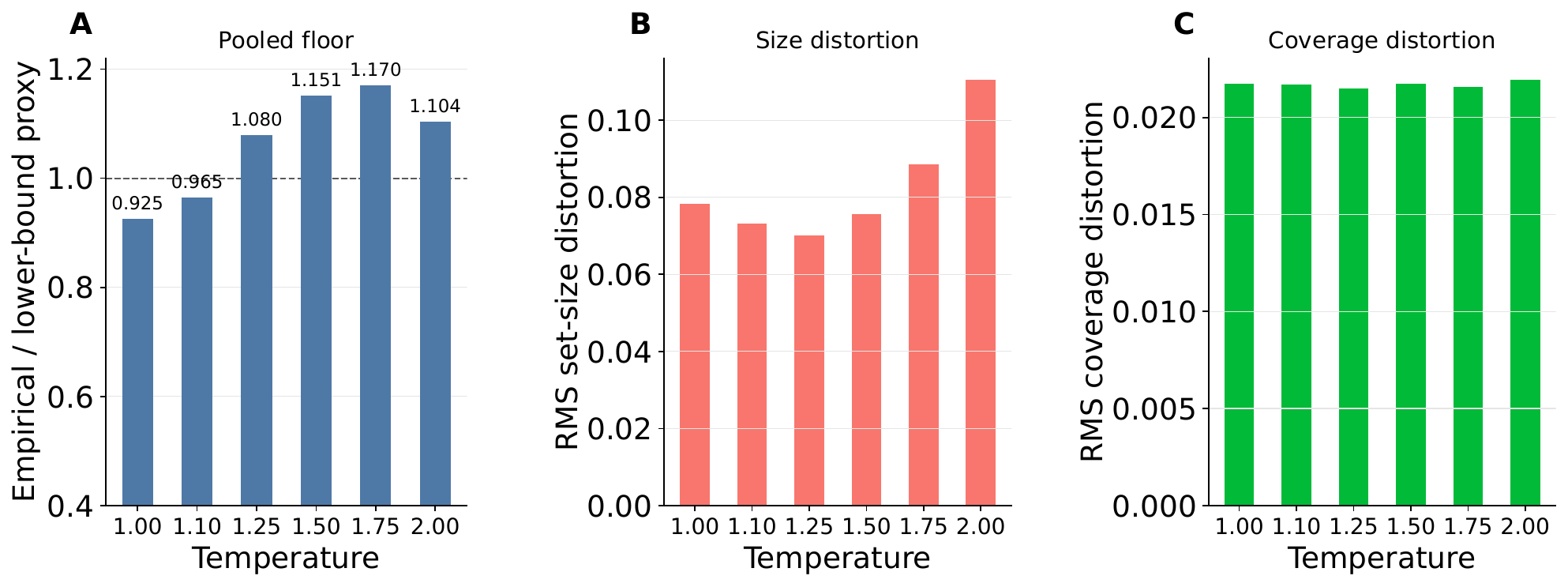}\\[-0.4em]
    \includegraphics[width=\linewidth,trim={6 3 8 6},clip]{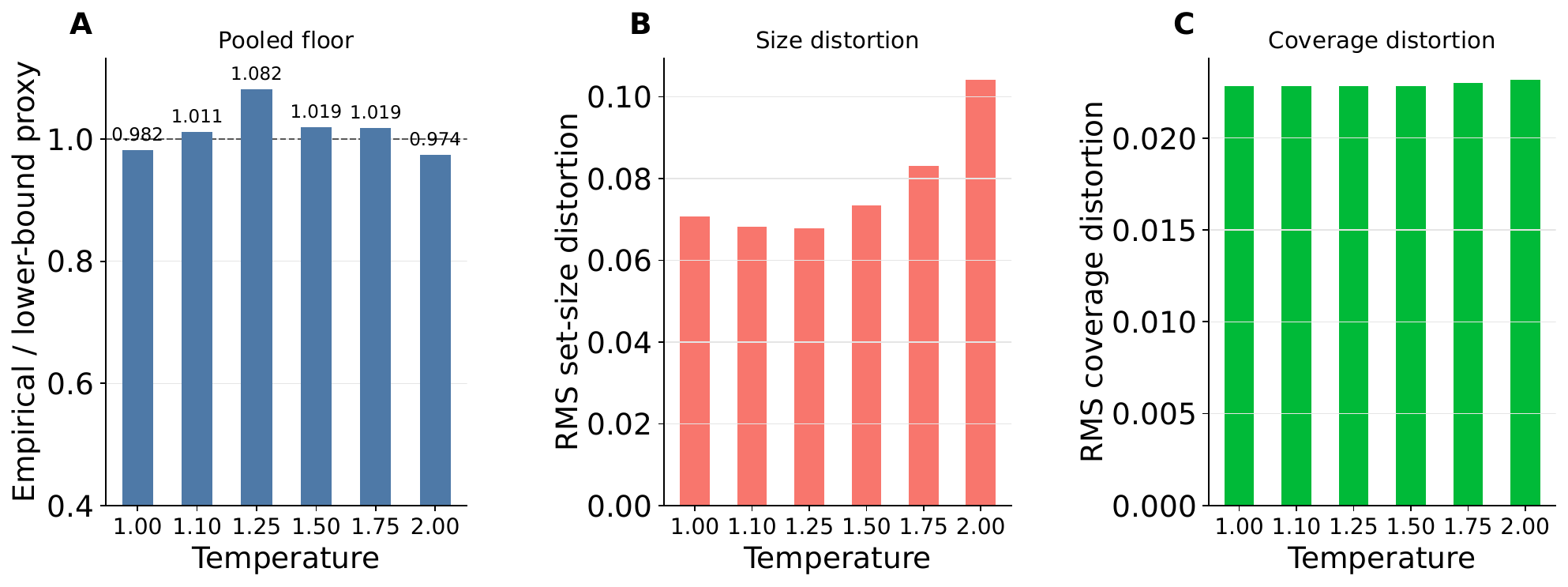}
    \caption{Controlled \texttt{MultiNLI} temperature sweep at $\alpha=0.10$ for SAPS (top) and RAPS (bottom) scores, perturbing the \textit{facetoface} genre only. Panel~A is best read as a finite-sample diagnostic for Theorem~\ref{thm:heisenberg_L2}. Panels B--C illustrate Corollaries~\ref{cor:length_disparity}--\ref{cor:coverage_disparity}. The same trade-off mechanism remains visible across the temperature sweep.}
    \label{fig:multi-alt-temperature}
\end{figure}

\subsection{Detectability under Finite Calibration}
\label{app:multi-detectability}
This subsection follows Appendix~\ref{app:biobias-detectability} with the same finite calibration protocol. Because \texttt{MultiNLI} has fewer genre-stratified calibration examples than Bias in Bios, we use the grid
\[
n\in\{50,100,150,200,250,300,350,400,450,500\}.
\] We fix the trained \texttt{MultiNLI} outputs and test split, vary only the calibration size through genre-stratified subsampling, recompute the pooled threshold on each subsample, and evaluate the resulting RMS genre-wise miscoverage on the fixed test set. Figure~\ref{fig:multi-detectability} shows that the floor becomes visible around \(n=100\), and is clearly resolved for all three scores thereafter. Hence, on the \texttt{MultiNLI} dataset, the structural signal is empirically visible once the calibration split is moderately large.

\begin{figure}[t]
    \centering
    \begin{subfigure}[t]{0.49\linewidth}
        \centering
        \includegraphics[width=\linewidth,trim={8 3 8 6},clip]{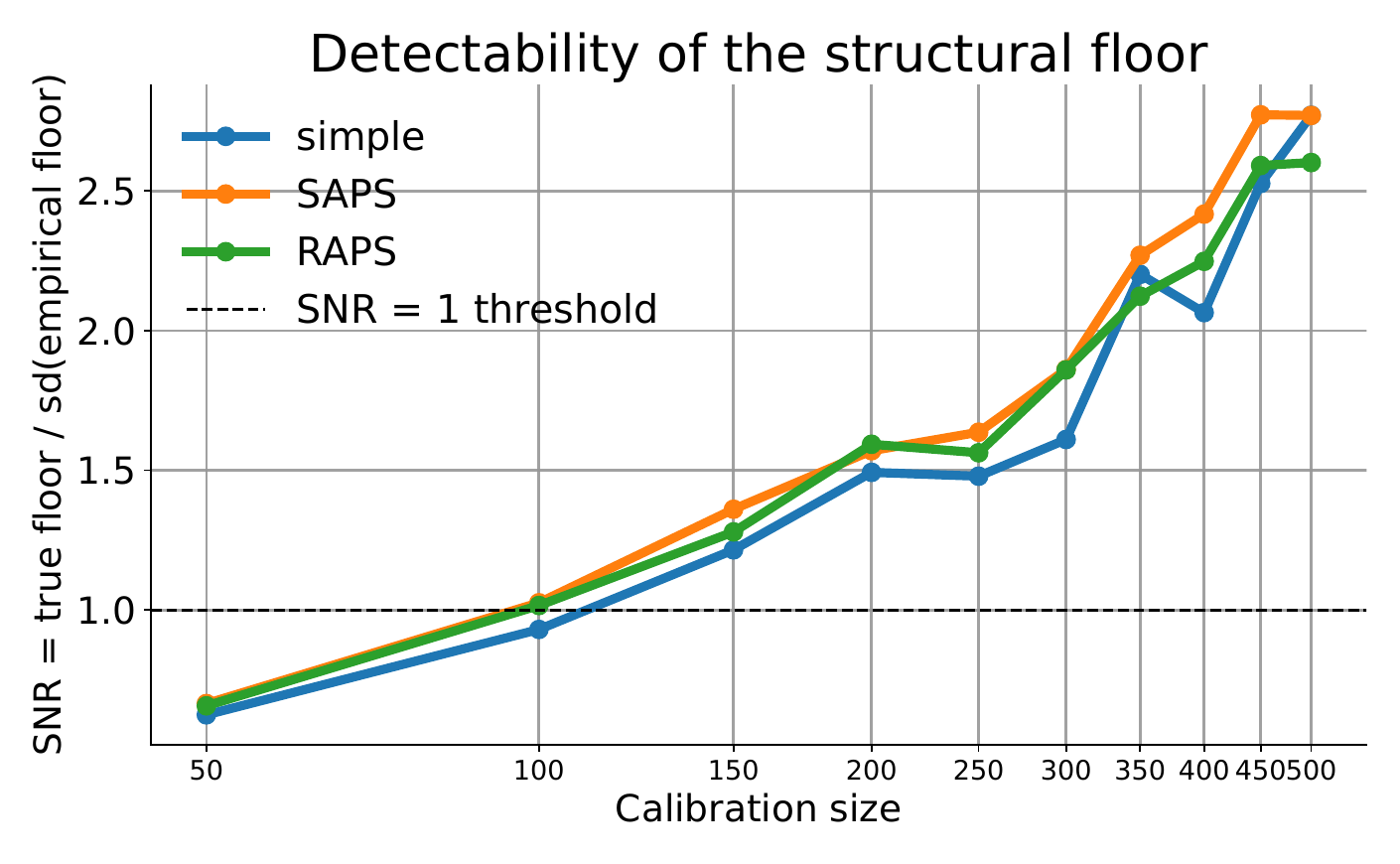}
        \caption{}
        \label{fig:multi-detectability}
    \end{subfigure}
    \hfill
    \begin{subfigure}[t]{0.49\linewidth}
        \centering
        \includegraphics[width=\linewidth,trim={8 3 8 6},clip]{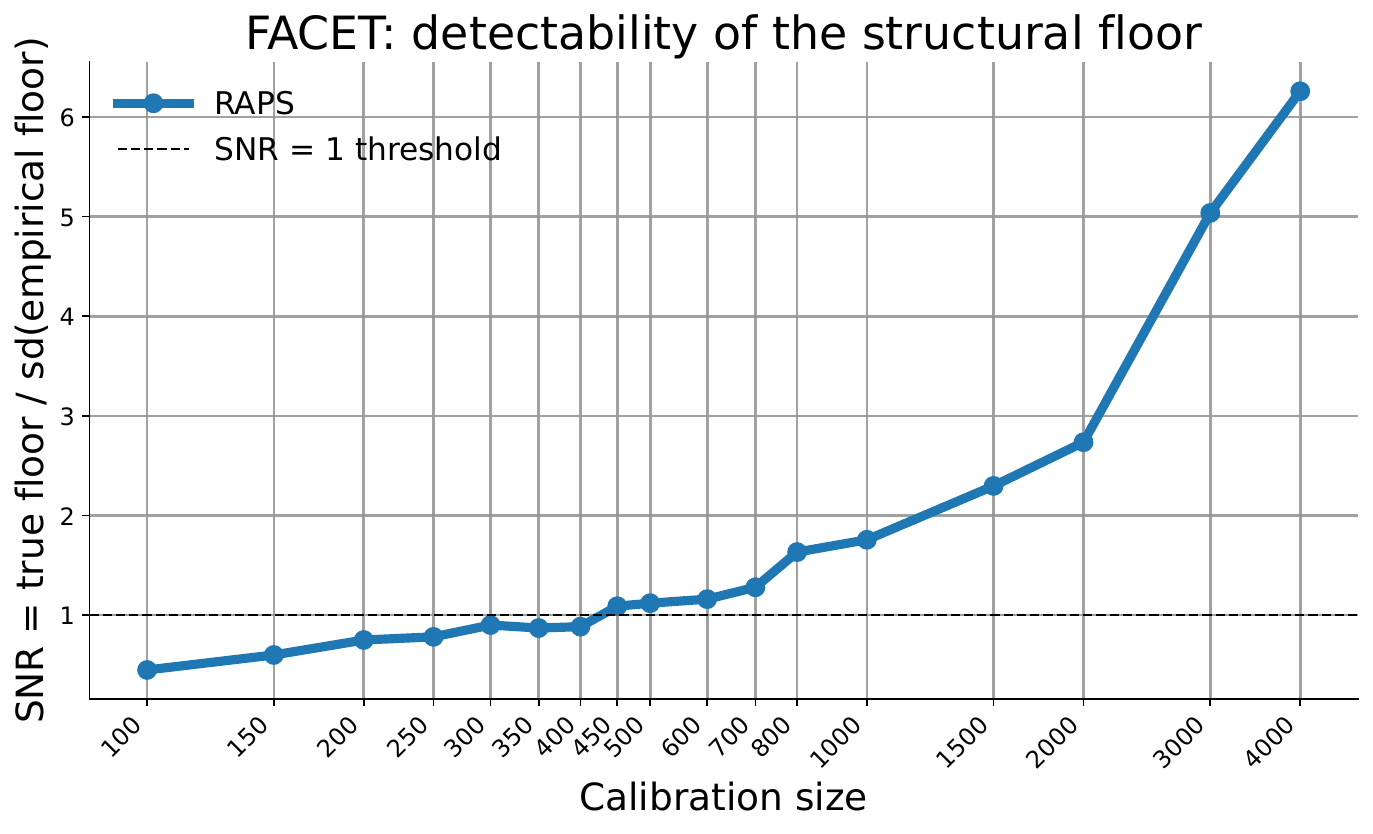}
        \caption{}
        \label{fig:facet-detectability}
    \end{subfigure}
    \caption{
    Finite-calibration detectability of the pooled-threshold floor from Theorem~\ref{thm:heisenberg_L2}. Left: \texttt{MultiNLI}, for simple, SAPS, and RAPS scores. Right: \texttt{FACET}, for RAPS score. The dashed line at \(\mathrm{SNR}(n)=1\) marks the empirical detectability benchmark, so crossing it indicates that the structural floor is becoming distinguishable from finite-sample fluctuation.
    }
\end{figure}
\FloatBarrier
\section{\texttt{FACET} Experiments}
\label{app:facet}

\subsection{Experiment Settings and Summary}
For \texttt{FACET},\footnote{We use the \texttt{FACET} benchmark dataset released by Meta AI; official dataset page: \url{https://ai.meta.com/datasets/facet/}. The \texttt{FACET} data card in the ICCV 2023 supplemental material states that the license is ``Custom license, see dataset download agreement,'' and that \texttt{FACET} is intended for evaluation only. For a description of the benchmark, see ~\citep{gustafson2023facet}.} we use \texttt{CLIP ViT-L/14}.\footnote{\texttt{CLIP ViT-L/14} from OpenAI repository: \url{https://github.com/openai/CLIP}. The repository LICENSE file states that the code is released under the MIT License.} All post-hoc quantities are defined exactly as in Sections~\ref{sec:biobias-main} and ~\ref{sec:multinli-main}. Tables~\ref{tab:facet-primary-summary}, \ref{tab:facet-primary-groups} and~\ref{tab:facet-run-config} list the key quantities, the full per-group quantities, and the experiment configuration, respectively.
\begin{table}[t]
\centering
\small
\caption{\texttt{FACET} RAPS summary at \(\alpha=0.10\).}
\label{tab:facet-primary-summary}
\begin{tabular*}{0.30\linewidth}{@{\extracolsep{\fill}}lc@{}}
\toprule
Metric & Value \\
\midrule
\(q\) & 1.0194 \\
\(\sigma_\Delta\) & 0.0040 \\
Pooled RMS & 0.0083 \\
Lower bound & 0.0080 \\
Ratio & 1.034 \\
\(\lambda\) & 2.1735 \\
RMS set size & 0.1717 \\
\(\sigma_\lambda\) & 0.2817 \\
RMS coverage & 0.0199 \\
\bottomrule
\end{tabular*}
\end{table}

\begin{table}[t]
\centering
\small
\caption{Per-group quantities for \texttt{FACET} at \(\alpha=0.1\).}
\label{tab:facet-primary-groups}
\begin{tabular*}{0.9\linewidth}{@{\extracolsep{\fill}}lccccccc@{}}
    \toprule
    Age group & \(p_g\) & \(q_g\) & \(\varepsilon_g(q)\) & \(\lambda_g\) & \(\lambda_g-\ell_g(q)\) & \(\tau_g\) & \(\hat F_{S\mid g}(\tau_g)-\hat F_{S\mid g}(q_g)\) \\
    \midrule
    Younger & 0.1795 & 1.0177 & 0.0176 & 1.8757 & -0.0892 & 1.0200 & 0.0054 \\
    Middle  & 0.5718 & 1.0196 & -0.0035 & 2.0921 & 0.0182 & 1.0200 & 0.0025 \\
    Older   & 0.0461 & 1.0111 & -0.0105 & 1.9895 & -0.3421 & 1.0171 & 0.0263 \\
    Unknown & 0.2026 & 1.0276 & -0.0018 & 2.7090 & 0.3329 & 1.0126 & -0.0419 \\
    \bottomrule
\end{tabular*}
\end{table}

\begin{table}[t]
\centering
\small
\caption{Configuration for \texttt{FACET} with RAPS.}
\label{tab:facet-run-config}
\begin{tabular*}{0.80\linewidth}{@{\extracolsep{\fill}}p{0.33\linewidth}p{0.39\linewidth}@{}}
\toprule
Setting & Value \\
\midrule
Dataset & \texttt{FACET} \\
Groups &  Younger, Middle, Older, Unknown \\
Labels & 20 occupations \\
Vision-language model & \texttt{CLIP ViT-L/14} \\
Image preprocessing & bbox crop with 0.08 expansion \\
Validation samples  & 4,122 \\
Calibration samples & 11,776\\
Test samples & 4,122 \\
Primary target \(\alpha\) & 0.10 \\
RAPS temperature & 0.60 \\
RAPS \(\lambda\) & 0.02 \\
RAPS \(k_{\mathrm{reg}}\) & 1 \\
Temperature-perturbed group & Younger \\
Temperature values & 1.0, 1.1, 1.25, 1.5, 1.75, 2.0 \\
Test accuracy / macro-F1 & 0.6994 / 0.7369 \\
\bottomrule
\end{tabular*}
\end{table}

\subsection{Robustness across Target Coverage and Temperature Perturbation}
\label{app:facet-robustness}
\looseness-1 Figure~\ref{fig:facet-alpha-robustness} shows that for various target levels \(\alpha\in\{0.05, 0.07, 0.085, 0.10\}\), the empirical pooled-threshold distortion stays above the empirical lower bound. The induced set size distortion remains nonzero throughout and the equalized expected set size policy continues to produce a nonzero RMS coverage distortion.

\begin{figure}
    \centering
    \includegraphics[width=\linewidth,trim={6 3 8 6},clip]{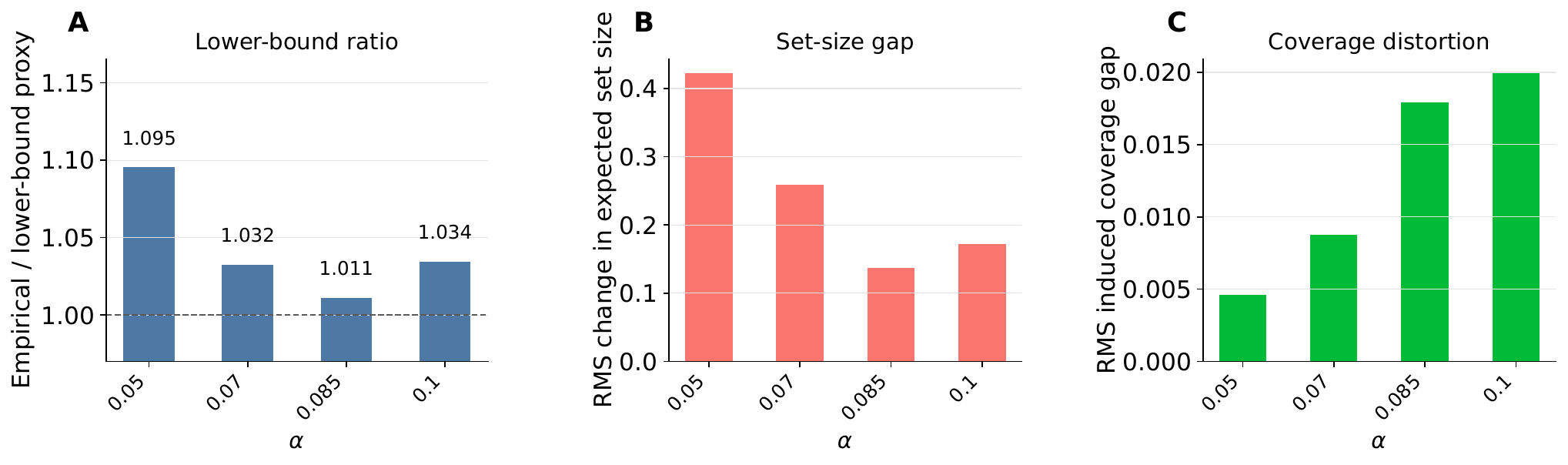}
    \caption{
    \texttt{FACET} robustness across \(\alpha\) for the RAPS score. Panel A illustrates Theorem~\ref{thm:heisenberg_L2}: the empirical pooled-threshold distortion stays near or above the lower-bound scale across the tested \(\alpha\)-grid. Panels B--C show Corollaries~\ref{cor:length_disparity}--\ref{cor:coverage_disparity}: the induced set-size and coverage distortions remain nonzero throughout.
    }
    \label{fig:facet-alpha-robustness}
\end{figure}

Next, we perturb only the Younger group through temperature scaling while keeping the rest of the groups fixed. Figure~\ref{fig:facet-temperature-robustness} shows that across perturbations, the pooled RMS gap remains at or above the lower bound proxy. The equalized expected set size policy continues to introduce a visible coverage distortion across the sweep. 

\begin{figure}[!htbp]
    \centering
    \includegraphics[width=\linewidth,trim={6 3 8 6},clip]{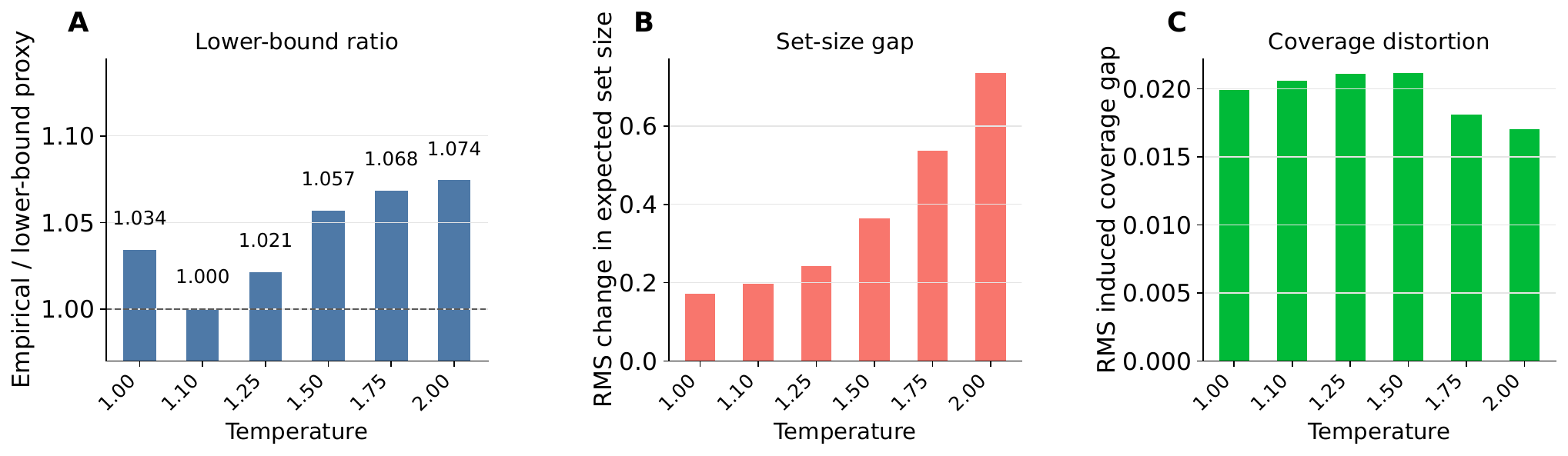}
    \caption{
    Controlled \texttt{FACET} temperature sweep at \(\alpha=0.10\) for the RAPS score, perturbing the Younger group only. Panel A evaluates the lower-bound behavior in Theorem~\ref{thm:heisenberg_L2}. Panels B--C show Corollaries~\ref{cor:length_disparity}--\ref{cor:coverage_disparity}. The same trade-off mechanism remains visible across the temperature sweep.
    }
    \label{fig:facet-temperature-robustness}
\end{figure}

\subsection{Detectability under Finite Calibration}
\label{app:facet-detectability}
Following the detectability analysis in Appendices~\ref{app:biobias-detectability} and \ref{app:multi-detectability}, we fix the model outputs and test split, vary only the calibration sample size through age-stratified subsampling, recompute the pooled threshold on each subsample, and evaluate the resulting RMS group-wise coverage on the fixed test split. For \texttt{FACET}, we use the grid 
\[n\in\{100,150,200,250,300,350,400,450,500,600,700,800,1000,1500,2000,3000,4000\}.
\]
Figure~\ref{fig:facet-detectability} illustrates the signal-to-noise ratio \(\mathrm{SNR}(n) = \mathrm{floor}^{\star}/\mathrm{sd}(\widehat{\mathrm{floor}}_n)\). On \texttt{FACET}, the ratio crosses \(1\) at about \(n=450\) and then increases. Although \texttt{FACET} exhibits substantial group-size imbalance, the same detectability pattern is still observed. In particular, group imbalance affects finite-sample variability but does not eliminate the signal itself, which highlights that the trade-off mechanism is intrinsic. 

\subsection{Stability under Calibration Resampling}
\label{app:facet-resampling}

We verify that the trade-off mechanism observed on the \texttt{FACET} dataset is not an artifact of a single calibration split. We keep the model outputs and the test set fixed, and repeat age-stratified calibration subsampling at \(\alpha=0.1\). Concretely, we perform \(20\) repetitions using \(70\%\) of the full calibration size. For each repetition, we calculate the key quantities as in Section~\ref{sec:facet-main}.

Table~\ref{tab:facet-primary-resampling-stability} demonstrates that the resampling means remain close to the values with full calibration size. The pooled RMS coverage gap stays near \(0.008\), the induced RMS set-size gap remains around \(0.18\), and the equalized set size policy continues to produce a nonzero RMS coverage gap of about \(0.019\). Therefore, the empirical trade-off pattern in Section~\ref{sec:facet-main} is stable under repeated calibration perturbations.

\begin{table}[!htbp]
\centering
\small
\caption{\texttt{FACET} stability under calibration subsampling at \(\alpha=0.10\).}
\label{tab:facet-primary-resampling-stability}
\begin{tabular*}{0.80\linewidth}{@{\extracolsep{\fill}}lcc@{}}
\toprule
Metric & Original & Resampling mean \(\pm\) sd \\
\midrule
Pooled RMS coverage gap & 0.0083 & 0.0085 \(\pm\) 0.0005 \\
RMS set-size gap after \(q \to q_g\) & 0.1717 & 0.1828 \(\pm\) 0.0311 \\
RMS coverage gap after equalized size & 0.0199 & 0.0186 \(\pm\) 0.0044 \\
\bottomrule
\end{tabular*}
\end{table}

\section{Computational Resources}
\label{app:compute_resource}

All experiments were conducted on a Windows 11 computer with an Intel Core i9-12900H CPU, an NVIDIA GeForce RTX 3080 Ti Laptop GPU (16 GB VRAM), and 32 GB RAM. Each reported experiment was completed within one hour.

\end{document}